\documentclass[twoside,11pt]{article}
\usepackage{jmlr2e}

\usepackage{booktabs}
\usepackage{url}
\usepackage{amsmath,amsfonts}
\usepackage{nicefrac}
\usepackage{microtype}
\usepackage{xcolor}
\usepackage{setspace}
\usepackage{mathrsfs}
\usepackage{graphicx}
\usepackage{multirow}
\usepackage{tikz}
\usetikzlibrary{positioning,fit,backgrounds,calc,arrows.meta,shapes.geometric}
\usepackage{times}
\usepackage{pgfplots}
\pgfplotsset{compat=1.17}
\usepgfplotslibrary{groupplots}
\usepackage{paralist}
\usepackage{enumitem}
\usepackage[linesnumbered,ruled,vlined,algo2e]{algorithm2e}
\usepackage{placeins}
\usepackage{algpseudocode}

\definecolor{cCycle}{HTML}{1F77B4}   
\definecolor{cStar} {HTML}{2CA02C}   
\definecolor{cPath} {HTML}{E0A82E}   
\definecolor{cKfour}{HTML}{D1622B}   
\definecolor{cMiss} {HTML}{555555}   
\definecolor{cFrame}{HTML}{BFBFBF}   

\newtheorem{assumption}[theorem]{Assumption}

\SetKwInput{KwInput}{Input}
\SetKwInput{KwOutput}{Output}

\ShortHeadings{Spectral Embeddings Leak Graph Topology}{Nguyen-Cong, Hy, and Dinh}

\firstpageno{1}

\begin{document}

\title{Spectral Embeddings Leak Graph Topology: Theory, Benchmark, and Adaptive Reconstruction}

\author{\name Thinh Nguyen-Cong \email nguyencongt@vcu.edu \\
       \addr Department of Computer Science\\
       Virginia Commonwealth University\\
       Richmond, VA, USA
       \AND
       \name Truong-Son Hy\thanks{Correspondence to thy@uab.edu and tndinh@vcu.edu} \email thy@uab.edu \\
       \addr Department of Computer Science\\
       The University of Alabama at Birmingham\\
       Birmingham, AL, USA
       \AND
       \name Thang N. Dinh\footnotemark[1] \email tndinh@vcu.edu \\
       \addr Department of Computer Science\\
       Virginia Commonwealth University\\
       Richmond, VA, USA}

\editor{TBD}

\maketitle

\begin{abstract}
Graph Neural Networks (GNNs) excel on relational data, but standard benchmarks unrealistically assume the graph is centrally available. In practice, settings such as Federated Graph Learning, distributed systems, and privacy-sensitive applications involve graph data that are localized, fragmented, noisy, and privacy-leaking. We present a unified framework for this setting. We introduce \textbf{LoGraB} (\underline{Lo}cal \underline{Gra}ph \underline{B}enchmark), which decomposes standard datasets into fragmented benchmarks using three strategies and four controls: neighborhood radius $d$, spectral quality $k$, noise level $\sigma$, and coverage ratio $p$. LoGraB supports graph reconstruction, localized node classification, and inter-fragment link prediction, with \emph{Island Cohesion}. We propose \textbf{AFR} (Adaptive Fidelity-driven Reconstruction), a method for noisy spectral fragments. AFR scores patch quality via a fidelity measure combining a gap-to-truncation stability ratio and structural entropy, then assembles fragments using RANSAC-Procrustes alignment, adaptive stitching, and Bundle Adjustment. Rather than forcing a single global graph, AFR recovers large faithful islands. We prove heat-kernel edge recovery under a separation condition, Davis--Kahan perturbation stability, and bounded alignment error. We establish a \emph{Spectral Leakage Proposition}: under a spectral-gap assumption, polynomial-time Bayesian recovery is feasible once enough eigenvectors are shared, complementing AFR's deterministic guarantees. Experiments on nine benchmarks show that LoGraB reveals model strengths and weaknesses under fragmentation, AFR achieves the best F1 on 7/9 datasets, and under per-embedding $(\epsilon,\delta)$-Gaussian differential privacy, AFR retains 75\% of its undefended F1 at $\epsilon=2$. Our anonymous code is available at \url{https://anonymous.4open.science/r/JMLR_submission}.
\end{abstract}

\begin{keywords}
  Spectral Leakage Proposition, Privacy Leakage, Federated Graph Learning, Graph Reconstruction Attack, Spectral Embeddings, Graph Neural Networks, Graph Benchmark, Topology Inference, Differential Privacy
\end{keywords}

\section{Introduction}
\label{sec:introduction}

During the past decade, Graph Neural Networks (GNNs) have become the dominant paradigm for learning relational data \citep{ZHOU202057}, demonstrating remarkable success in benchmarks such as Cora \citep{jsg4-wp31-24}, CiteSeer \citep{Giles1998CiteSeer}, PubMed \citep{Sen2008PubMed}, and the Open Graph Benchmark (OGB) suite \citep{Hu2020OGB}. However, these benchmarks operate under an idealized assumption: algorithms have complete centralized access to the entire graph. This experimental setting clashes with the reality of modern applications. In federated graph learning, data is fragmented across countless clients, each holding only local subgraphs. Privacy regulations require hiding or obscuring structural information. Similarly, to defend against leakage attacks, privacy-preserving systems often provide only partial spectral summaries, local embeddings, or truncated eigenvectors, instead of the raw adjacency data that GNNs were designed to consume.

This gap between idealized benchmarks and real-world constraints creates two intertwined problems. First, we lack a rigorous understanding of the extent to which locality, fragmentation, and noise impact the predictive power of GNN architectures. Without a controlled testbed, researchers rely on ad-hoc data splits and incomparable protocols, stifling progress in robust graph learning. Second, in Federated Graph Learning (FGL), the exchange of structural representations creates an important channel for privacy leakage. Intermediate signals, gradients, embeddings, spectral summaries can inadvertently leak sensitive information. In FGL, the primary asset to protect is not just the node features but the local graph topology, which can encode sensitive relationships. Recent comprehensive evaluations have documented this threat through various reconstruction and inference attacks \citep{Chen2024}, with emerging work showing leakage of sensitive node properties \citep{Liu20251857} and local label distributions from shared representations.

Spectral embeddings, a common currency for exchanging structural information, constitute an important passive leakage channel. Unlike active attacks that risk detection, a passive attack requires only observing a single, honestly shared artifact. The vulnerability lies not in the federated protocol, but in the intrinsic properties of the representation itself. Prior patch-based reconstruction approaches typically assume high-quality or uniformly-reliable patches, so they do not exploit the substantial quality heterogeneity that arises under non-uniform truncation and noise. This motivates a reconstruction algorithm that adaptively prioritizes the most reliable patches and requires more evidence before committing to alignments of lower-fidelity pairs.

This paper addresses both problems with a unified framework. We make the following contributions, organized into three complementary thrusts:

\begin{enumerate}[nosep]
\item \emph{Problem formalization and benchmark (Sections~\ref{sec:problem}--\ref{sec:lograb}).}
\begin{itemize}[nosep]
\item We formalize the \emph{fragmented-graph learning} setting, defining a threat model for passive spectral leakage, and establishing the notation for localized spectral views (\S\ref{sec:problem}).
\item We introduce \textbf{LoGraB} (Local Graph Benchmark), a benchmark that systematically explores the joint space of fragmentation, spectral truncation, and noise in graph learning. LoGraB provides three decomposition strategies, four controllable parameters, three benchmark tasks, and new metrics including Island Cohesion (\S\ref{sec:lograb}).
\item We establish an open-source evaluation protocol with dedicated baselines, ensuring fair and reproducible comparisons across 7~GNN architectures (with 4~positional-encoding variants) and 18~fragmentation scenarios on 9~diverse datasets spanning citation networks, social graphs, biological structures, molecular contacts, and graph-level vision benchmarks.
\end{itemize}

\item \emph{The AFR method (Section~\ref{sec:afr}).}
\begin{itemize}[nosep]
\item We introduce a novel \emph{fidelity score} that quantifies the reliability of leaked graph patches by combining a gap-to-truncation stability ratio with structural entropy, providing a quantitative foundation for robust decision-making.
\item We propose \textbf{Adaptive Fidelity-driven Reconstruction} (AFR), an adaptive multi-stage algorithm that uses fidelity scores to guide a robust assembly process. Key components include outlier-resilient RANSAC-Procrustes alignment, adaptive stitching criteria, Bundle Adjustment refinement, and inter-island link inference via cross-voting.
\end{itemize}

\item \emph{Theoretical analysis and reproducibility (Sections~\ref{sec:problem}--\ref{sec:afr} and Appendices).}
\begin{itemize}[nosep]
\item We establish an \emph{information-theoretic feasibility result} (Proposition~\ref{thm:spectral_leakage}, the Spectral Leakage Proposition): under a spectral-gap assumption, a polynomial-time Bayesian recovery procedure exists once $k \ge k^*(\varepsilon)$. We emphasize this result is distinct from AFR: it guarantees that \emph{some} polynomial-time procedure recovers the graph, whereas AFR is a separate deterministic construction. We derive proxy-threshold formulas for common eigenvalue decay profiles and qualitatively characterize the defense gap introduced by differential privacy noise (\S\ref{sec:problem:leakage}).
\item We provide formal local guarantees for AFR's components, all of which stand on their own and do not rely on the Spectral Leakage Proposition: a heat kernel separation gap lemma justifying local reconstruction, spectral perturbation bounds, exact edge recovery under a separation condition, patch alignment error bounds, error accumulation analysis motivating Bundle Adjustment, RANSAC convergence, stitch soundness, and a cohesion--recall relationship, all with complete proofs in Appendix~\ref{appendix:proofs}.
\end{itemize}
\end{enumerate}

By modeling real-world constraints, from federated ownership and privacy filters to spectral leakage, our framework gives the community essential tools for two critical lines of research: building robust, privacy-aware graph learning algorithms, and establishing controlled environments to design and test defenses against reconstruction attacks.

The paper follows a ``threat $\to$ measurement $\to$ exploitation $\to$ evidence'' arc. Section~\ref{sec:related} reviews related work. Section~\ref{sec:problem} establishes the spectral leakage problem: we formalize the fragmented-graph setting, define the passive threat model, and state the Spectral Leakage Proposition, an information-theoretic feasibility result showing that polynomial-time graph recovery is possible whenever sufficiently many eigenvectors are shared. Section~\ref{sec:lograb} introduces LoGraB, the benchmark that operationalizes this threat into a controlled, reproducible testbed with three tasks and tailored metrics. Section~\ref{sec:afr} then presents AFR, a constructive algorithm that demonstrates reconstruction is practically achievable even in noisy, fragmented data where idealized methods collapse; AFR's heat-kernel edge-recovery bound (Theorem~\ref{thm:edge_recovery}) is rigorously proved and provides a \emph{stricter local} guarantee than the Bayesian existence result of Section~\ref{sec:problem:leakage}, though by a different algorithmic route. We present LoGraB before AFR because the benchmark defines the evaluation framework against which AFR (and any future method) is measured. Section~\ref{sec:experiments} reports representative evaluations across 9~datasets drawn from the 486-scenario benchmark grid, and Section~\ref{sec:conclusion} concludes.

\section{Related Work}
\label{sec:related}

Our work sits at the intersection of three research threads: graph benchmarks and their limitations (\S\ref{sec:related:centralized}--\ref{sec:related:federated}), privacy threats and defenses in federated graph learning (\S\ref{sec:related:privacy}, \S\ref{sec:related:defenses}), and the algorithmic foundations that enable robust graph reconstruction (\S\ref{sec:related:reconstruction}--\ref{sec:related:patch}). We review each in turn, highlighting the gaps that motivate LoGraB and AFR.

\subsection{Centralized Graph-Learning Benchmarks}
\label{sec:related:centralized}

Early empirical studies of GNNs relied on small citation graphs such as Cora, CiteSeer, and PubMed \citep{Sen2008PubMed}. The Open Graph Benchmark (OGB) suite became the de-facto standard for measuring scalability by providing large-scale, task-diverse datasets \citep{Hu2020OGB}. Specialized benchmarks have addressed long-range dependencies \citep{Dwivedi2022LRGB} and general GNN evaluation \citep{Dwivedi2023}. However, all of these benchmarks share a critical blind spot: they provide every algorithm with a complete centralized adjacency matrix. Consequently, they fail to test the robustness to fragmented or privacy-filtered views that are increasingly common in practice.

\subsection{Federated and Distributed Graph Benchmarks}
\label{sec:related:federated}

Frameworks like FedGraphNN \citep{He2021FedGraphNN} and OpenFGL \citep{Li2025OpenFGL} tackle data silos by partitioning graphs among clients. However, these frameworks operate under a critical idealization: each client has perfect knowledge of its local subgraph. They do not model the effects of controlled noise or severe information loss when only spectral summaries are available. LoGraB studies a more information-constrained setting: beyond partitioning the graph, it introduces spectral truncation and noise, forcing models to operate on localized spectral views rather than perfect local subgraphs.

\subsection{Privacy Threats in Federated Graph Learning}
\label{sec:related:privacy}

The premise that sharing intermediate representations in Federated Learning is safe has been systematically challenged. Deep Leakage from Gradients (DLG) demonstrated that raw training data can be recovered from shared gradients \citep{Zhu2019DeepLeakage, Geiping2020}. Within graph learning, link inference attacks \citep{He2021stealing, Wu2022Linkteller, Meng2023}, graph reconstruction attacks \citep{Zhang2021GraphMI}, and membership inference have all been demonstrated \citep{Chen2024}. The severity of these threats has led to dedicated defenses \citep{Zhong2025GuardFGL}. Comprehensive surveys have formalized the taxonomy of privacy attacks on GNNs \citep{Zhang2024Survey}. AFR targets a fundamentally different leakage channel from gradient-based attacks: not dynamic gradients, but static spectral embeddings of the graph structure. This passive eavesdropper threat model represents a more fundamental vulnerability, as the leakage is an intrinsic property of the honestly shared data artifact itself.

\subsection{Graph Reconstruction from Local Embeddings}
\label{sec:related:reconstruction}

The fundamental feasibility of inverting spectral representations is supported by theoretical results demonstrating that, under certain identifiability conditions, graphs are reconstructible from their eigenspaces \citep{JMLR:v18:16-146}. Classical patch-based approaches \citep{Cucuringu2012,Jeub2022local2global} assume patches of uniform quality and do not actively model per-patch reliability; in the presence of heterogeneous noise and truncation levels, a single low-quality patch can disproportionately degrade the assembled result. AFR is designed to address this by quantifying per-patch fidelity and adapting the stitching criterion to the quality of each pair, enabling more robust reconstruction when some patches are substantially noisier than others. Conceptually, the problem can be framed as inverting a structural autoencoder: while a Variational Graph Auto-Encoder \citep{Kipf2016VGAE} learns a mapping from a graph to an embedding (and uses its own decoder to reconstruct), our challenge is to recover the graph from a collection of \emph{corrupted localized spectral embeddings} that were not produced by a learned encoder and that have no matching learned decoder.

\subsection{Robust Geometric Alignment}
\label{sec:related:geometric}

AFR's resilience is inspired by techniques from computer vision. Standard Orthogonal Procrustes analysis \citep{Schoenemann1966} is notoriously sensitive to outliers. We replace it with RANSAC-Procrustes \citep{Fischler1981RANSAC}, a paradigm designed to find reliable fits in the presence of faulty data. To correct accumulated alignment errors, AFR employs Bundle Adjustment \citep{Triggs2000Bundle}, a cornerstone of structure-from-motion.

\subsection{Patch-Based Alignment and Synchronization}
\label{sec:related:patch}

Divide-and-conquer algorithms, such as eigenvector synchronization \citep{Cucuringu2012} and Local2Global \citep{Jeub2022local2global} have shown that aligning local embeddings is a powerful strategy for learning global representations. However, these were designed as algorithmic solutions, not as experimental testbeds. They answer ``How can we reconstruct the graph?'' while leaving the more fundamental question ``How well do models perform when the graph cannot be perfectly reconstructed?'' unanswered. LoGraB fills this gap.

\subsection{Defenses Against Information Leakage in FGL}
\label{sec:related:defenses}

Significant privacy risks have spurred the development of defense mechanisms. A prominent line involves applying Differential Privacy (DP), including noise addition to shared parameters \citep{Qiu2022DP} or the perturbation of the aggregation function within GNNs \citep{Sajadmanesh2023}. Information-theoretic approaches use the information bottleneck to learn minimal, privacy-preserving subgraphs \citep{Zhang2023}. Hybrid methods combine techniques like Local Differential Privacy with secure hardware \citep{Han2024}. These defensive strategies underscore the urgent need for realistic threat models such as AFR to properly evaluate their effectiveness.

\subsection{The Gap This Paper Fills}

In summary, existing benchmarks operate in one of two idealizations: a perfect global graph (OGB) or perfect local subgraphs (FedGraphNN). Existing attack models for spectral reconstruction typically assume uniform patch quality, which makes their results fragile when reliability varies across patches. Our work addresses both gaps. LoGraB degrades the data in a controlled way, not merely partitioning it, providing controlled noise, truncation, and coverage loss. AFR is a reconstruction algorithm that explicitly models patch-level fidelity heterogeneity, using fidelity-aware assembly to succeed where uniform-quality methods struggle. Together, they provide a research framework for studying robust, privacy-aware graph learning.

\section{The Spectral Leakage Problem}
\label{sec:problem}

This section establishes the theoretical foundations of our work. We formalize the fragmented-graph setting (\S\ref{sec:problem:setting}), define the passive spectral leakage threat model (\S\ref{sec:problem:threat}), introduce the notation (\S\ref{sec:problem:notation}), and prove that spectral embeddings are fundamentally leaky (\S\ref{sec:problem:leakage}), establishing the theoretical necessity of both the LoGraB benchmark and the AFR attack algorithm that follow.

\subsection{Fragmented Graphs and Spectral Views}
\label{sec:problem:setting}

Let $G=(V,E)$ be an undirected graph with $|V|=n$ nodes. In the fragmented-graph setting, no single agent observes $G$ in its entirety. Instead, each participant has a local patch $P_v \subseteq V$ centered on some node $v$, consisting of all nodes within a bounded neighborhood:
\[
P_v = \{u \in V \mid \mathrm{dist}_G(u,v) \le d\},
\]
where $d$ is the neighborhood radius. The collection of patches $\mathcal{P} = \{P_1, \ldots, P_m\}$ forms an overlapping cover of (a subset of) $V$.

Each patch is observed not as a raw subgraph, but through a spectral lens. For patch $P_v$, we compute the normalized Laplacian $\mathcal{L}_v$ and retain only the bottom $k$ eigenvectors, forming a truncated spectral embedding $\mathcal{P}_v = [u_1, \ldots, u_k] \in \mathbb{R}^{|P_v| \times k}$. This embedding may be further corrupted by additive Gaussian noise: $\mathcal{P}_v \leftarrow \mathcal{P}_v + E$, where $E_{ij} \sim \mathcal{N}(0, \sigma^2)$.

The following definition formalizes this local spectral embedding and the fundamental sign/rotation ambiguity that any reconstruction algorithm must resolve.

\begin{definition}[Local spectral embedding]
\label{def:local_spectral_embedding}
Let $G=(V,E)$ be a finite, connected, undirected, simple graph with $n=|V|$ vertices. Fix an integer radius $d\ge 1$ and the number of retained eigenvectors $k\ge 1$. For each vertex $v\in V$, the $d$-hop neighborhood is $N_d(v)=\{u\in V : \mathrm{dist}_G(u,v)\le d\}$. Let $G_v^{(d)}=(N_d(v), E_v)$ be the induced subgraph with $q_v=|N_d(v)|$ nodes. The eigendecomposition of its Laplacian $\mathcal{L}_v$ yields eigenvalues $0\le \lambda_1(\mathcal{L}_v)\le \cdots \le \lambda_{q_v}(\mathcal{L}_v)$ with orthonormal eigenvectors $u_1(v),\ldots,u_{q_v}(v)$. The $k$-dimensional local spectral observation for vertex $v$ is the tuple
\[
\bigl(\mathcal{P}_v,\, \lambda_{k+1}^{(v)}\bigr), \qquad \mathcal{P}_v = [u_1(v) \mid \cdots \mid u_k(v)] \in \mathbb{R}^{q_v \times k},
\]
where $\mathcal{P}_v$ is observed only up to an arbitrary orthogonal transformation $\mathcal{Q}\in O(k)$ (capturing the sign/permutation ambiguity of eigenvectors, and the general rotation ambiguity for repeated eigenvalues), and $\lambda_{k+1}^{(v)}$ is the first truncated eigenvalue (needed to compute the spectral gap $\delta_v = \lambda_{k+1}^{(v)} - \lambda_k^{(v)}$). Including $\lambda_{k+1}^{(v)}$ reflects realistic deployments where truncated eigensystems are shared together with the truncation cutoff; it does not reveal the eigenvectors beyond the first $k$. The sign/rotation ambiguity must be resolved during alignment.
\end{definition}

\subsection{Threat Model: Passive Spectral Leakage}
\label{sec:problem:threat}

We consider a passive eavesdropper who observes spectral embeddings that are honestly shared as part of a federated graph learning protocol. Unlike active attacks (e.g. gradient manipulation), this adversary does not interfere with the protocol; the leakage is an intrinsic property of the shared data artifact itself.

The vulnerability of spectral embeddings manifests under two progressively challenging scenarios. In the \emph{global leakage} scenario, an adversary intercepts a single, complete spectral embedding $\mathcal{P}\in\mathbb{R}^{n\times k}$ of the entire client graph. This corresponds to settings where a client shares a global structural summary with a central server for tasks like model aggregation. Even in this seemingly abstract case, the adversary faces a highly ill-posed inverse problem due to the factorial-scale search space created by eigenvector permutation and sign ambiguity. In the local leakage scenario, the primary focus of this paper and the setting addressed by AFR, the adversary observes a collection of fragmented local embeddings $\{\mathcal{P}_v\}_{v\in V_{\mathrm{obs}}}$, each computed from a small $d$-hop neighborhood. The adversary must not only invert each local embedding but also solve a complex geometric alignment puzzle to stitch these overlapping pieces into a coherent global structure. Analyzing both scenarios provides a holistic view of the risk: we demonstrate that the global case (Section~\ref{sec:problem:leakage}) establishes fundamental information-theoretic limits on when reconstruction is possible, while the local case (Section~\ref{sec:afr}) shows these limits can be approached algorithmically.

This threat model is fundamentally more difficult than idealized reconstruction because: (i) the spectral embeddings are truncated to $k \ll n$ eigenvectors, discarding high-frequency structural information; (ii) additive noise corrupts every entry; and (iii) each patch's eigenvectors are defined only up to an arbitrary orthogonal transformation, creating a sign/rotation ambiguity that must be resolved during alignment.

A key concept for quantifying when reconstruction is information-theoretically feasible is the following notion of recoverability.

\begin{definition}[$k$-Recoverability]
\label{def:k_recoverability}
A family of graphs $\mathcal{G}$ is said to be $k$-recoverable with relative error $\varepsilon$ if, for every graph $G\in\mathcal{G}$ with adjacency matrix $\mathcal{A}$, knowledge of the first (smallest-eigenvalue) $k$ eigenvectors of the Laplacian $\mathcal{L}(\mathcal{A})$ (observed up to an arbitrary orthogonal transformation $\mathcal{Q}\in O(k)$, consistent with Definition~\ref{def:local_spectral_embedding}; this reduces to sign flips and column permutations when all $k$ retained eigenvalues are simple) suffices to construct an estimated adjacency matrix $\widehat{\mathcal{A}}$ (itself a valid adjacency matrix) such that
\[
\frac{\|\widehat{\mathcal{A}} - \mathcal{A}\|_F}{\|\mathcal{A}\|_F} \le \varepsilon.
\]
\end{definition}

This definition formalizes ``successful recovery'' via the relative Frobenius-norm error, accounting for the inherent non-uniqueness of eigenvectors and insisting that $\widehat{\mathcal{A}}$ be a valid adjacency matrix. As we show in Section~\ref{sec:problem:leakage}, $k$-recoverability is not merely a theoretical abstraction: LoGraB's parameter $k$ directly controls the number of retained eigenvectors, so varying $k$ in the benchmark directly varies the information-theoretic recoverability of the underlying graph.

\subsection{Notation and Key Quantities}
\label{sec:problem:notation}

For a local patch $P_v$ with $n_v = |P_v|$ nodes, let $\mathcal{L}_v$ be its normalized Laplacian with eigenpairs $\{(\lambda_r, u_r)\}_{r \ge 1}$ ordered nondecreasingly. We define the following quantities that recur throughout the paper:

\begin{itemize}[nosep]
\item \textit{Truncated heat kernel:}
\[
H_v^{(k)}(t) = \sum_{r=1}^k e^{-t\lambda_r} u_r u_r^\top, \quad
R_v^{(k)}(t) = H_v(t) - H_v^{(k)}(t) = \sum_{r>k} e^{-t\lambda_r} u_r u_r^\top.
\]
\item \textit{Spectral gap:} $\delta_v = \lambda_{k+1} - \lambda_k$, measuring the separation between retained and discarded eigenvalues.
\item \textit{Truncation proxy:} $\eta_v = \|R_v^{(k)}(t)\|_2 = e^{-t\lambda_{k+1}}$, bounding the entry-wise error from spectral truncation.
\item \textit{Gap-to-truncation ratio:} $\rho_v = \delta_v / (\delta_v + \eta_v) \in [0,1]$, which is 1 when the spectral gap dominates the truncation residual and 0 in the reverse case. This is a heuristic stability measure motivated by (but not derived from) the Davis--Kahan perturbation bound (Theorem~\ref{thm:spectral_perturbation}), where $\delta_v$ appears in the denominator. Previous drafts referred to this quantity as ``spectral SNR''; we have renamed it because it is not a standard signal-to-noise ratio in the signal-processing sense.
\item \textit{Composite fidelity score:} $s_v = \alpha \cdot \rho_v + (1-\alpha) \cdot \mathcal{E}_v$, where $\mathcal{E}_v$ is the normalized structural entropy (defined formally in Section~\ref{sec:afr:method}).
\end{itemize}

\paragraph{Symbol glossary.}
To disambiguate the several uses of Greek letters in this paper, we collect the most important in Table~\ref{tab:glossary}.

\begin{table}[!htbp]
\centering
\small
\caption{Symbol glossary. The fidelity-weighting $\alpha$ (Eq.\ for $s_v$) is distinct from the eigenvalue-decay exponent $\alpha$ appearing only in Proposition~\ref{thm:spectral_leakage}; context disambiguates.}
\label{tab:glossary}
\begin{tabular}{cp{11cm}}
\toprule
Symbol & Meaning \\
\midrule
$\alpha$ & Fidelity-score mixing weight (default 0.7); also eigenvalue-decay exponent in Corollary~\ref{cor:threshold}. \\
$\beta_n$ & Recovery failure probability in Proposition~\ref{thm:spectral_leakage}. \\
$\beta$ & RANSAC failure rate in Lemma~\ref{lem:ransac}. \\
$\gamma$ & Adaptive-threshold scaling parameter in AFR. \\
$\gamma_t$ & Heat-kernel entry-wise separation gap (Theorem~\ref{thm:edge_recovery}). \\
$\delta_v$ & Patch-level spectral gap $\lambda_{k+1}(\mathcal{L}_v) - \lambda_k(\mathcal{L}_v)$. \\
$\delta$ & Global spectral-gap lower bound (Assumption~\ref{assumption:spectral_leakage}). \\
$\delta$ (DP) & Failure probability in $(\varepsilon, \delta)$-differential privacy. \\
$\eta_v$ & Truncation residual $e^{-t\lambda_{k+1}}$. \\
$\kappa$ & Cross-voting sigmoid steepness. \\
$\rho_v$ & Gap-to-truncation ratio $\delta_v / (\delta_v + \eta_v)$. \\
$\rho_B$ & Boundary-edge ratio in Proposition~\ref{prop:cohesion} (Cohesion--Recall). \\
$\sigma$ & Gaussian-noise standard deviation applied to embeddings. \\
\bottomrule
\end{tabular}
\end{table}
\subsection{Why Spectral Embeddings Leak: Fundamental Results}
\label{sec:problem:leakage}

We now establish the theoretical foundation for spectral embedding leakage. These results demonstrate that spectral embeddings are simultaneously stable (retaining structural information under perturbation) and informative (enabling polynomial-time graph recovery when $k$ is sufficiently large). Together, they justify why spectral leakage is a meaningful threat and why LoGraB's parameterization is well-founded.

The following theorem establishes that local spectral embeddings are robust to perturbations, particularly when the spectral gap is large.

\begin{theorem}[Spectral perturbation bound]
\label{thm:spectral_perturbation}
Let $P_v$ be a graph patch and $\mathcal{L}_v$ its normalized Laplacian with eigenpairs $\{(\lambda_r, u_r)\}_{r\ge 1}$. Let $\mathcal{P}_v=[u_1,\ldots,u_k]\in\mathbb{R}^{|P_v|\times k}$ be the matrix of the first $k$ eigenvectors. Suppose an adversary observes a perturbed Laplacian $\widetilde{\mathcal{L}}_v = \mathcal{L}_v + E$, where $\|E\|_2 \le \varepsilon$. Let $\widetilde{\mathcal{P}}_v$ be the corresponding perturbed eigenspace. If the spectral gap $\delta_v = \lambda_{k+1}(\mathcal{L}_v) - \lambda_k(\mathcal{L}_v) > 0$, then there exists an orthogonal matrix $\mathcal{Q}\in O(k)$ such that:
\[
\|\widetilde{\mathcal{P}}_v - \mathcal{P}_v \mathcal{Q}\|_F \le \frac{\sqrt{2k}\,\varepsilon}{\delta_v}.
\]
\end{theorem}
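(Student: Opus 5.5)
The plan is a reduction to a $\sin\Theta$ bound followed by Davis--Kahan. Write $\mathcal{P}_v$ and $\widetilde{\mathcal{P}}_v$ for the orthonormal bases of the bottom-$k$ eigenspaces of $\mathcal{L}_v$ and $\widetilde{\mathcal{L}}_v$. Let $\Theta$ be the diagonal matrix of principal angles between the two subspaces. I will prove two facts and combine them:
\begin{enumerate}
\item[(i)] a Procrustes step, giving $\min_{\mathcal{Q}\in O(k)}\|\widetilde{\mathcal{P}}_v-\mathcal{P}_v\mathcal{Q}\|_F\le\sqrt{2}\,\|\sin\Theta\|_F$;
\item[(ii)] a Davis--Kahan step, giving $\|\sin\Theta\|_F\le \sqrt{k}\,\varepsilon/\delta_v$.
\end{enumerate}

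For (i), take the SVD $\mathcal{P}_v^\top\widetilde{\mathcal{P}}_v=W_1\cos\Theta\,W_2^\top$ and set $\mathcal{Q}=W_1W_2^\top$. Expanding the square gives
\[
\|\widetilde{\mathcal{P}}_v-\mathcal{P}_v\mathcal{Q}\|_F^2=2k-2\operatorname{tr}\cos\Theta=2\sum_i(1-\cos\theta_i)\le 2\sum_i\sin^2\theta_i ,
\]
where the last inequality uses $1-\cos\theta\le\sin^2\theta$ for $\theta\in[0,\pi/2]$. This step is routine.

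For (ii), let $\mathcal{P}_{v,\perp}$ span the complementary eigenspace of $\mathcal{L}_v$, with eigenvalue matrix $\Lambda_\perp$ whose entries are all at least $\lambda_{k+1}$. Let $\widetilde\Lambda$ be the retained perturbed eigenvalue matrix. From $\widetilde{\mathcal{L}}_v\widetilde{\mathcal{P}}_v=\widetilde{\mathcal{P}}_v\widetilde\Lambda$ and $\mathcal{L}_v\mathcal{P}_{v,\perp}=\mathcal{P}_{v,\perp}\Lambda_\perp$, multiplying on the left by $\mathcal{P}_{v,\perp}^\top$ gives the Sylvester equation
\[
\Lambda_\perp X - X\widetilde\Lambda = -\,\mathcal{P}_{v,\perp}^\top E\,\widetilde{\mathcal{P}}_v,\qquad X=\mathcal{P}_{v,\perp}^\top\widetilde{\mathcal{P}}_v,\quad \|X\|_F=\|\sin\Theta\|_F .
\]
Both coefficient matrices are diagonal. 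Hence, if every entry of $\widetilde\Lambda$ is at most $\lambda_k$, each entry of $X$ is divided by a number of size at least $\delta_v$. Then
\[
\|X\|_F\le \|\mathcal{P}_{v,\perp}^\top E\widetilde{\mathcal{P}}_v\|_F/\delta_v\le \sqrt{k}\,\|E\|_2/\delta_v ,
\]
using that the middle matrix has rank at most $k$. Combining with (i) gives exactly $\sqrt{2k}\,\varepsilon/\delta_v$.

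I will also dispose of the trivial regime $\varepsilon\ge\delta_v$ separately. There the claim follows from $\|\sin\Theta\|_F^2\le k$ and (i), since then $\sqrt{2k}\le\sqrt{2k}\,\varepsilon/\delta_v$.

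The main obstacle is the eigenvalue-separation hypothesis inside (ii). Weyl's inequality only yields $\widetilde\lambda_k\le\lambda_k+\varepsilon$, so the Sylvester argument by itself gives the gap $\delta_v-\varepsilon$, not $\delta_v$. The constant-free Davis--Kahan theorem requires the separation to be measured between the retained spectrum of one operator and the discarded spectrum of the other. Swapping the roles of $\mathcal{L}_v$ and $\widetilde{\mathcal{L}}_v$ or averaging the two versions does not remove this issue.

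I would first try to read $\delta_v$ in this Davis--Kahan sense, i.e.\ require $\widetilde\lambda_k\le\lambda_k$ or the symmetric condition, and state the theorem under that hypothesis. Failing that, the unconditional route is the Yu--Wang--Samworth variant, which needs only the unperturbed gap $\delta_v$. It gives $\|\sin\Theta\|_F\le 2\sqrt{k}\,\varepsilon/\delta_v$, and with (i) this yields $2\sqrt{2k}\,\varepsilon/\delta_v$, which is off by a factor of $2$. So the stated constant $\sqrt{2k}$ is guaranteed in general only together with the extra separation condition. I would flag this explicitly rather than claim the sharper constant unconditionally.
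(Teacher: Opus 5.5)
Your plan follows the paper's own argument. Its proof applies Davis--Kahan to the residual $E\mathcal{P}_v$, bounds $\|E\mathcal{P}_v\|_F\le\sqrt{k}\,\varepsilon$, divides by $\delta_v$, and then multiplies by $\sqrt{2}$ using the principal-angle inequality that you prove in step (i) and the paper only cites. The obstacle you isolate in step (ii) is exactly where the paper's proof is unsound. The residual $\widetilde{\mathcal{L}}_v\mathcal{P}_v-\mathcal{P}_v\Lambda_k=E\mathcal{P}_v$ controls $\sin\Theta$ only through the separation between $\lambda_1,\dots,\lambda_k$ and the discarded eigenvalues of $\widetilde{\mathcal{L}}_v$. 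The legitimate denominator is therefore $\widetilde{\lambda}_{k+1}-\lambda_k\ge\delta_v-\varepsilon$, and the paper silently replaces it by $\delta_v$.

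No argument can close this gap, because the statement is false as written. Take the one-edge patch: $\mathcal{L}_v=\bigl(\begin{smallmatrix}1&-1\\-1&1\end{smallmatrix}\bigr)$ has eigenpairs $(0,u_1)$ and $(2,u_2)$ with $u_{1,2}=(1,\pm1)^\top/\sqrt{2}$, so $k=1$ gives $\delta_v=2$. Let $E=\varepsilon\bigl(\begin{smallmatrix}0&1\\1&0\end{smallmatrix}\bigr)=\varepsilon(u_1u_1^\top-u_2u_2^\top)$ with $1<\varepsilon<2$. Then $\widetilde{\mathcal{L}}_v=\varepsilon u_1u_1^\top+(2-\varepsilon)u_2u_2^\top$, so $\widetilde{\mathcal{P}}_v=\pm u_2$ and $\min_{\mathcal{Q}=\pm1}\|\widetilde{\mathcal{P}}_v-\mathcal{P}_v\mathcal{Q}\|=\sqrt{2}$. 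The claimed bound is only $\varepsilon/\sqrt{2}<\sqrt{2}$.

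The failure is not just an eigenvalue-crossing artifact. Take $E=\varepsilon\bigl(\cos\phi\,(u_1u_1^\top-u_2u_2^\top)+\sin\phi\,(u_1u_2^\top+u_2u_1^\top)\bigr)$ with $\cos\phi=2\varepsilon/\delta_v$. This rotates the bottom eigenvector by an angle $\theta$ with $\sin2\theta=2\varepsilon/\delta_v$. Then $2\sin(\theta/2)>\sqrt{2}\,\varepsilon/\delta_v$ for $\varepsilon/\delta_v\in(0.496,\tfrac12)$, where Weyl rules out any crossing. So you are right not to claim $\sqrt{2k}$ unconditionally.

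Two sharpenings of your closing paragraph:
\begin{itemize}
\item You do not need Yu--Wang--Samworth for the constant $2\sqrt{2k}$. Your own Sylvester bound with gap $\delta_v-\varepsilon\ge\delta_v/2$ covers $\varepsilon\le\delta_v/2$, and the trivial bound $\sqrt{2k}$ covers $\varepsilon>\delta_v/2$.
\item Step (i) is lossy, since exactly $1-\cos\theta=\sin^2\theta/(1+\cos\theta)$. Combine this with the operator-norm form of your Sylvester bound, $\sin\theta_{\max}\le\varepsilon/(\delta_v-\varepsilon)$, which is valid because the spectra of $\Lambda_\perp$ and $\widetilde\Lambda$ are separated by an interval. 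This recovers the stated $\sqrt{2k}\,\varepsilon/\delta_v$ with the unperturbed gap and no extra hypothesis whenever $\varepsilon\le\delta_v/4$.
\end{itemize}
The stated bound therefore holds for $\varepsilon\le\delta_v/4$ and for $\varepsilon\ge\delta_v$, but fails for some intermediate $\varepsilon$. A correct version must restrict $\varepsilon$, use the mixed gap, or accept a worse constant.
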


This bound, proved in Appendix~\ref{proof:spectral}, shows that spectral embeddings are stable when $\delta_v$ is large relative to $\varepsilon$. The $1/\delta_v$ dependence explains why patches with large spectral gaps yield more reliable reconstructions, a relationship that AFR's fidelity score explicitly exploits.

\paragraph{The spectral embedding leakage proposition.}
The perturbation bound above establishes stability; the following result \emph{proposes} that this stability enables an attacker to recover the graph. We state it as a \emph{feasibility proposition} under specified assumptions; the supporting argument in Appendix~\ref{proof:spectral_leakage} is a mean-field variational Bayesian heuristic rather than a fully rigorous concentration-and-convergence analysis, and we highlight the remaining technical gaps explicitly. We emphasize up front that the algorithm underlying this proposition (variational Bayesian inference with a Sinkhorn-based likelihood) is \emph{not} the algorithm realized by AFR in Section~\ref{sec:afr}; the two constructions are complementary, share only an activation threshold, and use different recovery procedures (see Remark~\ref{rem:prop_vs_afr}).

\begin{assumption}
\label{assumption:spectral_leakage}
We consider an undirected graph $G=(V,E)$ with $n=|V|$ vertices, represented by its adjacency matrix $\mathcal{A}$:
\begin{enumerate}[label=(\roman*)]
\item The corresponding Laplacian matrix $\mathcal{L}\in\mathbb{R}^{n\times n}$ is symmetric with eigendecomposition $\mathcal{L}=\mathcal{V}\Lambda\mathcal{V}^\top$, where $\mathcal{V}=[v_1,\ldots,v_n]$ contains orthonormal eigenvectors and $\Lambda=\mathrm{diag}(\lambda_1,\ldots,\lambda_n)$.
\item There exists a positive constant $\delta>0$ such that the spectral gap satisfies $\delta_k := \lambda_{k+1}(\mathcal{L})-\lambda_k(\mathcal{L})\ge \delta$.
\item $G$ is sparse with maximum degree bounded polynomially: $d_{\max}=O(\mathrm{poly}(n))$.
\end{enumerate}
\end{assumption}

\begin{proposition}[Spectral Leakage Proposition, feasibility form]
\label{thm:spectral_leakage}
Suppose Assumption~\ref{assumption:spectral_leakage} holds for graph $G=(V,E)$ with Laplacian $\mathcal{L}$. For any target accuracy $\varepsilon \in (0,1)$, there is a threshold $k^*(\varepsilon)$ (upper-bounded in Corollary~\ref{cor:threshold}) and a polynomial-time variational Bayesian inference procedure with Sinkhorn-based likelihood, running in time $O(\mathrm{poly}(n,k))$, for which mean-field heuristics predict an estimate $\widehat{\mathcal{A}}$ satisfying
\[
\frac{\|\widehat{\mathcal{A}} - \mathcal{A}_0\|_F}{\|\mathcal{A}_0\|_F} \le \varepsilon
\]
with probability at least $1-\beta_n$, where $\beta_n \to 0$ as $n \to \infty$ (here $\beta_n$ denotes the recovery failure probability; it is distinct from the spectral-gap parameter $\delta$ of Assumption~\ref{assumption:spectral_leakage} and from the patch-level gap $\delta_v$). We provide a heuristic supporting argument (Appendix~\ref{proof:spectral_leakage}); a fully rigorous proof requires additional technical results (a quantitative posterior-concentration rate, mean-field VI consistency for the discrete graph prior, and a global convergence analysis of the stochastic ELBO optimization) that we identify as open problems. The proposition is intended as an \emph{information-theoretic feasibility statement}: it strongly suggests that spectral embeddings with $k \ge k^*$ eigenvectors are ``leaky'' and that polynomial-time $\varepsilon$-recovery is possible. Theorem~\ref{thm:edge_recovery} (proved rigorously in Appendix~\ref{proof:edge}) provides a stricter and fully proved \emph{local} recovery guarantee under a separation condition, which is what our empirical algorithm AFR exploits.
\end{proposition}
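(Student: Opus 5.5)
The plan is to build the recovery procedure in three stages and control each stage's error separately. The stages are: (a) resolve the $O(k)$ ambiguity of the observed eigenvectors; (b) form a posterior over adjacency matrices consistent with the aligned spectral data; (c) round a mean-field approximation of that posterior to a valid adjacency matrix $\widehat{\mathcal{A}}$.

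\textbf{Stage (a): the likelihood.} Model the observation as $\widetilde{\mathcal{V}}_k = \mathcal{V}_k(\mathcal{A})\mathcal{Q} + $ noise. Marginalize over $\mathcal{Q}$ by an entropically regularized optimal transport (Sinkhorn) relaxation. Restricted to signed permutations, this is a doubly stochastic matching between observed and model columns, computable in $O(k^2\log(1/\epsilon))$ iterations. Theorem~\ref{thm:spectral_perturbation} makes this likelihood informative. If a candidate $\mathcal{A}$ has Laplacian within $\varepsilon'$ of the truth in operator norm, its leading $k$-eigenspace lies within $\sqrt{2k}\,\varepsilon'/\delta$ of the observed one. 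Here the global gap lower bound $\delta$ of Assumption~\ref{assumption:spectral_leakage}(ii) replaces $\delta_v$. Conversely, a candidate whose eigenspace is far from the observed one is penalized exponentially. So the likelihood separates candidates by the Frobenius distance of their projectors $\mathcal{V}_k\mathcal{V}_k^\top$, and this distance is $O(\sqrt{k}\,\|\mathcal{L}-\mathcal{L}'\|_2/\delta)$.

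\textbf{Stage (b): identifiability and concentration.} The next step is to show that projector closeness forces adjacency closeness once $k\ge k^*(\varepsilon)$. Write $\mathcal{L} = \mathcal{V}_k\Lambda_k\mathcal{V}_k^\top + \mathcal{R}_k$ with $\|\mathcal{R}_k\|_F^2=\sum_{r>k}\lambda_r^2$. The adjacency is determined by $\mathcal{L}$ and the degrees, and by sparsity (Assumption~\ref{assumption:spectral_leakage}(iii)) $\|\mathcal{A}\|_F^2 = 2|E|$. Any two graphs sharing the same top-$k$ data can then differ by at most roughly $\|\mathcal{R}_k\|_F$ relative to $\|\mathcal{A}\|_F$. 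I therefore define $k^*(\varepsilon)$ as the smallest $k$ for which the spectral tail satisfies $\sum_{r>k}\lambda_r^2 \le c\,\varepsilon^2\|\mathcal{A}_0\|_F^2$. This is the quantity Corollary~\ref{cor:threshold} bounds for specific decay profiles.

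With this in hand, a sparse Erd\H{o}s--R\'enyi-type prior plus the Stage (a) likelihood gives a posterior that concentrates on the Frobenius ball of radius $\varepsilon\|\mathcal{A}_0\|_F$. The argument is a standard Ghosal--van der Vaart style bound. The prior mass near $\mathcal{A}_0$ is at least $e^{-O(|E|\log n)}$, while the likelihood ratio outside the ball decays like $e^{-\Omega(\varepsilon^2 n\delta^2/k)}$. Comparing these yields the failure probability $\beta_n\to 0$.

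\textbf{Stage (c): computation.} Replace the posterior by a product of Bernoulli factors and optimize the ELBO by coordinate or stochastic updates. Each update requires a Sinkhorn pass and a rank-$k$ eigen-update, so it costs $\mathrm{poly}(n,k)$. After a polynomial number of sweeps, threshold the marginals at $1/2$ to obtain a valid $\widehat{\mathcal{A}}$.

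\textbf{Main obstacle.} The hardest part is Stage (c): proving that the mean-field optimum actually lies in the concentration ball and is reached in polynomial time. The ELBO is nonconvex and the prior is discrete. My first attempt would be a spectral initialization: threshold the truncated heat kernel, in the spirit of Theorem~\ref{thm:edge_recovery}, to start inside the basin, and then argue local strong concavity of the ELBO around the truth using the gap $\delta$. If that fails, I would fall back on stating this step under the mean-field heuristic, which is exactly what the proposition's qualified wording allows.
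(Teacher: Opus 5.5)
Your three stages match the three components of the paper's heuristic argument in Appendix~\ref{proof:spectral_leakage}: a Sinkhorn-type likelihood, posterior concentration, and a mean-field ELBO with $\mathrm{poly}(n,k)$ cost per step and rounding at $1/2$. Your fallback for Stage (c) is exactly where the paper also stops. The genuine gap is Stage (b). You present it as a routine Ghosal--van der Vaart step, but the paper explicitly leaves posterior concentration open, and your argument does not close it.

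Theorem~\ref{thm:spectral_perturbation} gives only the forward direction (Laplacian-close $\Rightarrow$ eigenspace-close), i.e.\ candidates near $\mathcal{A}_0$ are not penalized. Concentration needs the converse: adjacency-far candidates must have far eigenspaces. The gap $\delta$ alone does not supply this. For $k=1$ and $\mathcal{L}=D-\mathcal{A}$, every connected candidate has bottom eigenvector $\mathbf{1}/\sqrt{n}$; for the normalized Laplacian it is $D^{1/2}\mathbf{1}$ up to scale, which reveals only the degree sequence. So the likelihood is flat across huge families of graphs, even though (ii) holds whenever $\lambda_2\ge\delta$.

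Your spectral-tail device is meant to provide the converse, but it fails under the paper's convention. Definition~\ref{def:k_recoverability} and Assumption~\ref{assumption:spectral_leakage}(ii) fix the retained eigenvectors as the $k$ \emph{smallest}, with $\lambda_{k+1}-\lambda_k\ge\delta>0$, so $\mathcal{R}_k$ carries the \emph{largest} eigenvalues.
\begin{itemize}
\item The $k$ smallest squared eigenvalues average at most the overall average, so $\sum_{r>k}\lambda_r^2\ge(1-k/n)\|\mathcal{L}\|_F^2$.
\item For $\mathcal{L}=D-\mathcal{A}$ one has $\|\mathcal{L}\|_F^2=\sum_i d_i^2+\|\mathcal{A}\|_F^2\ge 2\|\mathcal{A}\|_F^2$.
\item Together these force $k^*\ge(1-c\varepsilon^2/2)\,n$ under your definition, i.e.\ essentially the whole spectrum.
\end{itemize}
Your tail is small only under a decaying-eigenvalue convention, which is incompatible with a positive gap $\lambda_{k+1}-\lambda_k$. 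So this $k^*$ is not what Corollary~\ref{cor:threshold} bounds. The step has two further problems. It uses $\Lambda_k$, which the observation does not contain: the observation is eigenvectors up to $\mathcal{Q}\in O(k)$, and the paper's likelihood depends only on $\mathrm{Eig}_k(\mathcal{L}(\mathcal{A}))$. It also bounds $\mathcal{L}-\mathcal{L}'=\mathcal{R}_k-\mathcal{R}_k'$ by one tail, when the candidate's tail appears as well.

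Even granting your exponents, the comparison does not give $\beta_n\to 0$. Bounding the posterior mass outside the ball by the supremal likelihood ratio there, divided by the prior mass near $\mathcal{A}_0$, requires $\varepsilon^2 n\delta^2/k\gg|E|\log n$. But $\delta\le\lambda_n\le 2d_{\max}$, and $|E|\ge n-1$ for a connected graph. So for bounded-degree graphs the left side is $O(n)$ while the right side is $\Omega(n\log n)$, and the bound diverges. Raising the temperature of the Sinkhorn likelihood would fix the bookkeeping only once a genuine separation bound (the missing converse) is available.

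After the fallback you already anticipate, your proposal has the same status as the paper's argument: a heuristic in which posterior concentration, mean-field consistency, and iteration complexity are all open. Stage (b) should be flagged as unproved rather than cited as a standard bound.
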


\begin{remark}[Proposition~\ref{thm:spectral_leakage} vs.\ AFR]
\label{rem:prop_vs_afr}
Proposition~\ref{thm:spectral_leakage} is a \emph{feasibility} result about a \emph{Bayesian} recovery procedure. AFR (Section~\ref{sec:afr}) is a \emph{constructive, deterministic} algorithm using heat-kernel thresholding, RANSAC-Procrustes, and Bundle Adjustment; it does \emph{not} run the variational procedure of the proposition. The two results share an activation threshold (spectral gap $\ge \delta$, enough eigenvectors) but use entirely different recovery procedures, and AFR's local guarantee (Theorem~\ref{thm:edge_recovery}) is strictly tighter when its separation condition holds.
\end{remark}

The supporting argument is sketched here; the full heuristic argument, together with an enumeration of the technical gaps, appears in Appendix~\ref{proof:spectral_leakage}. The informal intuition is that as $k$ increases, the posterior distribution $p(\mathcal{A}|\mathcal{P})$ concentrates more sharply around the true adjacency matrix $\mathcal{A}_0$, and a variational approximation is expected to inherit this concentration (though we do not prove this rigorously). The likelihood, defined via the entropic optimal transport distance between the observed embedding and the embedding induced by a candidate graph, is large when the candidate is close to $\mathcal{A}_0$ and small otherwise. Each variational step runs in $O(\mathrm{poly}(n,k))$ time per iteration, avoiding the $\Omega(k!\,2^k)$ brute-force complexity of exhaustive eigenvector alignment; establishing total polynomial runtime requires a convergence-rate analysis that we do not supply.

\begin{corollary}[Truncation-based proxy threshold]
\label{cor:threshold}
Let $\ell^*(\varepsilon) := \min\{k : \lambda_{k+1} \le \varepsilon\}$ be the smallest truncation level at which the $(k+1)$-th Laplacian eigenvalue drops below $\varepsilon$. For common eigenvalue decay profiles:
\begin{itemize}[nosep]
\item \textit{Polynomial decay} $\lambda_k = Ck^{-\alpha}$: $\ell^*(\varepsilon) = \lceil(C/\varepsilon)^{1/\alpha}\rceil - 1$.
\item \textit{Exponential decay} $\lambda_k = Ce^{-\alpha k}$: $\ell^*(\varepsilon) = \lceil\alpha^{-1}\ln(C/\varepsilon)\rceil - 1$.
\end{itemize}
$\ell^*(\varepsilon)$ is a \emph{proxy lower bound} on the recoverability threshold $k^*(\varepsilon)$: below $\ell^*$, the truncation residual $\eta_v = e^{-t\lambda_{k+1}}$ exceeds $\varepsilon$, so no eigenvector-based procedure can reliably distinguish adjacency matrices whose eigenvector differences lie within the truncation bucket. The exact $k^*(\varepsilon)$ that yields $\|\widehat{\mathcal{A}} - \mathcal{A}_0\|_F / \|\mathcal{A}_0\|_F \le \varepsilon$ depends additionally on the eigenvector-to-adjacency Lipschitz constant of the recovery procedure. Establishing the tight relation $k^* = \Theta(\ell^*)$ for the variational procedure of Proposition~\ref{thm:spectral_leakage} requires an adjacency-matrix perturbation bound that we leave to future work.
\end{corollary}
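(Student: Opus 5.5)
The corollary has two parts. The first is an explicit computation of $\ell^*(\varepsilon)$ under each assumed decay profile. The second is the qualitative claim that $\ell^*$ acts as a proxy lower bound for $k^*$. I would treat them separately, and I would handle the computation first, since it is elementary.

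For the computation, note that in both profiles the map $k\mapsto\lambda_{k+1}$ is strictly decreasing in $k$. So the set $\{k:\lambda_{k+1}\le\varepsilon\}$ is an up-set of the integers, and $\ell^*$ is its least element.

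\emph{Polynomial decay.} The condition $C(k+1)^{-\alpha}\le\varepsilon$ is equivalent to $k+1\ge (C/\varepsilon)^{1/\alpha}$. The smallest integer $k$ satisfying this is $\lceil (C/\varepsilon)^{1/\alpha}\rceil-1$.

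\emph{Exponential decay.} The condition $Ce^{-\alpha(k+1)}\le\varepsilon$ is equivalent to $k+1\ge\alpha^{-1}\ln(C/\varepsilon)$. The smallest such integer is $\lceil\alpha^{-1}\ln(C/\varepsilon)\rceil-1$.

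Two edge cases need care in both profiles. If $\varepsilon\ge C$, the formula would give a nonpositive value, and I would clip it to the least admissible index. I would also remark that $\varepsilon$ must lie in $(0,1)$ and that $\alpha>0$ is required.

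For the proxy-lower-bound interpretation, I would argue by contrapositive using the truncated heat kernel $H_v^{(k)}(t)$ and the residual $R_v^{(k)}(t)$ from \S\ref{sec:problem:notation}. The idea is to construct two graphs that agree on the first $k$ eigenpairs (up to $O(k)$) but differ in the discarded part of the spectrum. Their observations $(\mathcal{P},\lambda_{k+1})$ then coincide, yet their adjacency matrices differ by a quantity controlled by the tail $\sum_{r>k}$. Any procedure mapping observations to $\widehat{\mathcal{A}}$ must therefore err by at least half that difference on one of the two graphs. Since $\eta_v=\|R_v^{(k)}(t)\|_2$, the residual exceeds the threshold whenever $k<\ell^*$ under the stated correspondence.

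I would present this step honestly as a heuristic, in keeping with the paper's framing. The corollary itself calls $\ell^*$ a proxy and explicitly defers the relation $k^*=\Theta(\ell^*)$.

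The main obstacle is that very linkage. The statement conflates "$\lambda_{k+1}>\varepsilon$" with "$\eta_v=e^{-t\lambda_{k+1}}$ exceeds $\varepsilon$". Those two conditions are not literally equivalent for a fixed $t$. Making the link rigorous also requires the existence of indistinguishable graph pairs, which are valid adjacency matrices whose Frobenius distance is at least $\varepsilon\|\mathcal{A}_0\|_F$.

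My first attempt at this step would be to choose the diffusion time $t$ adaptively, so that the two thresholds coincide; for instance, $t=-\ln\varepsilon/\lambda_{k+1}$ aligns them at the boundary. Failing that, I would state the lower-bound claim as a remark conditional on the existence of such a pair. The closed-form computation stands on its own regardless.
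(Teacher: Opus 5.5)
\textbf{Computation.} Your computation of $\ell^*$ is the paper's own argument: solve $C(k+1)^{-\alpha}\le\varepsilon$ and $Ce^{-\alpha(k+1)}\le\varepsilon$ for the least integer $k$. Your monotonicity remark and the clipping for $\varepsilon\ge C$ are correct refinements; clipping is genuinely needed for the exponential formula, which otherwise returns $-1$ or less.

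\textbf{Bridge to $\eta_v$.} For the proxy part, the paper gives only a one-line appeal to indistinguishability ``within the truncation bucket''. Your two-point construction is a sharper, Le Cam-style rendering of the same idea. The step that fails is the bridge from $k<\ell^*$ to $\eta_v>\varepsilon$, and tuning $t$ cannot repair it, because $\lambda\mapsto e^{-t\lambda}$ is decreasing.
\begin{itemize}
\item For any fixed $t$, $\lambda_{k+1}>\varepsilon$ yields only the upper bound $\eta_v<e^{-t\varepsilon}$.
\item Conversely, $\eta_v>\varepsilon$ is equivalent to $\lambda_{k+1}<t^{-1}\ln(1/\varepsilon)$. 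Under a decay profile this holds for all large $k$, i.e., on the wrong side of $\ell^*$. It holds for every $k$ once $t<\ln(1/\varepsilon)/C$, and then it says nothing about $\ell^*$.
\item Calibrating $t=\ln(1/\varepsilon)/\varepsilon$ aligns the boundary but gives $k<\ell^*\iff\eta_v<\varepsilon$, the reverse of the claim.
\item Your $k$-dependent choice $t=\ln(1/\varepsilon)/\lambda_{k+1}$ gives $\eta_v=\varepsilon$ for every $k$. So ``exceeds'' fails, and nothing singles out $\ell^*$.
\end{itemize}

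\textbf{Root cause: an ordering clash.} Your sentence ``since $\eta_v=\|R_v^{(k)}(t)\|_2$, the residual exceeds the threshold whenever $k<\ell^*$'' inherits this clash from the statement. The identity $\|R_v^{(k)}(t)\|_2=e^{-t\lambda_{k+1}}$ holds only for the nondecreasing bottom-$k$ ordering of \S\ref{sec:problem:notation}. Under that ordering, $\lambda_1=0\le\varepsilon$ makes $\ell^*$ degenerate. The profiles $Ck^{-\alpha}$ and $Ce^{-\alpha k}$ presuppose a decreasing ordering. There the residual norm is $\max_{r>k}e^{-t\lambda_r}=e^{-t\lambda_n}$, which equals $1$ because the smallest Laplacian eigenvalue is $0$, independent of $k$.

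\textbf{A coherent version.} Drop $\eta_v$ and run your two-point argument with $\lambda_{k+1}$ itself, in the decreasing ordering. Two Laplacians sharing the observed eigenpairs differ only in their tails, each of spectral norm at most $\lambda_{k+1}$. So above $\ell^*$ the unresolvable ambiguity is $O(\varepsilon)$ in spectral norm, while below $\ell^*$ a larger ambiguity is not ruled out.

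Turning the latter into an actual lower bound still requires two valid adjacency matrices with identical observations whose Frobenius distance exceeds $2\varepsilon$ times their norms. You would also have to handle the degree normalization in $\mathcal{L}$ and the spectral-to-Frobenius conversion. None of this is proved, so your fallback of stating the claim conditionally is the honest endpoint. That is no weaker than the paper's own justification, which rests on the same flawed $\eta_v$ link.
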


\begin{corollary}[Noise as a defense, qualitative]
\label{cor:dp_defense}
If the shared embedding is perturbed by noise (e.g., through differential privacy), yielding $\widehat{\mathcal{P}} = \mathcal{P} + \mathcal{N}$, then exact recovery of $\mathcal{L}$ becomes impossible. Qualitatively, the noise introduces an unavoidable reconstruction error that grows with $\|\mathcal{N}\|$: the posterior $p(\mathcal{A}|\widehat{\mathcal{P}})$ becomes less concentrated around $\mathcal{A}_0$, and the variational parameters $\phi_{ij}^*$ drift toward $0.5$ (maximum uncertainty). This qualitatively reflects the privacy--accuracy trade-off: stronger noise yields better privacy but degrades both the attacker's reconstruction and the legitimate utility of the embedding. A quantitative trade-off curve is provided empirically in Section~\ref{sec:exp:recon}.
\end{corollary}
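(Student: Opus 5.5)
The plan is a Bayesian-decision argument with three parts, one for each qualitative claim of Corollary~\ref{cor:dp_defense}: exact recovery is impossible, the unavoidable error grows with $\|\mathcal{N}\|$, and the variational parameters drift toward $0.5$. Throughout I take $\mathcal{N}$ to be the Gaussian mechanism, with entries i.i.d.\ $\mathcal{N}(0,\sigma^2)$, matching the noise model of Section~\ref{sec:problem:setting}. Since $\mathcal{P}$ is only defined up to $\mathcal{Q}\in O(k)$, I compare embeddings through the quotient distance $d(\mathcal{P},\mathcal{P}') := \min_{\mathcal{Q}\in O(k)}\|\mathcal{P}-\mathcal{P}'\mathcal{Q}\|_F$.

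\textbf{Impossibility of exact recovery.} I would use a two-point (Le Cam) argument. Take two valid adjacency matrices $\mathcal{A},\mathcal{A}'$ differing in a single edge, both satisfying Assumption~\ref{assumption:spectral_leakage}. Their Laplacians differ by a perturbation $E$ with $\|E\|_2 \le c/d_{\min}$ for a constant $c$. By Theorem~\ref{thm:spectral_perturbation}, there is a $\mathcal{Q}$ with
\[
\Delta := \|\mathcal{P}_{\mathcal{A}}-\mathcal{P}_{\mathcal{A}'}\mathcal{Q}\|_F \le \sqrt{2k}\,c/(d_{\min}\delta).
\]
The observation laws are $\mathcal{N}(\mathcal{P}_{\mathcal{A}},\sigma^2 I)$ and $\mathcal{N}(\mathcal{P}_{\mathcal{A}'}\mathcal{Q},\sigma^2 I)$, up to the orthogonal ambiguity. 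Their total variation distance is at most $\Delta/(2\sigma)<1$ for every $\sigma>0$. Le Cam's lemma then gives, for every estimator $\widehat{\mathcal{A}}$,
\[
\max_{\mathcal{A}\in\{\mathcal{A},\mathcal{A}'\}}\Pr[\widehat{\mathcal{A}}\neq\mathcal{A}] \ge \tfrac12\bigl(1-\Delta/(2\sigma)\bigr) > 0,
\]
so exact recovery fails with positive probability.

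\textbf{Error grows with $\|\mathcal{N}\|$.} I would extend the two-point argument to a Fano bound over a Varshamov--Gilbert packing. The packing consists of $M=2^{\Omega(m)}$ graphs obtained by toggling $m$ admissible edges, with pairwise Frobenius separation $\Omega(\sqrt m)$. Again by Theorem~\ref{thm:spectral_perturbation}, the pairwise embedding distances are $O(\sqrt{k m}/(d_{\min}\delta))$, so the Gaussian KL divergences are $O(km/(d_{\min}^2\delta^2\sigma^2))$. Fano's inequality then gives a minimax relative-error lower bound that is nondecreasing in $\sigma$ and bounded away from zero once $\sigma^2 \gtrsim k/(d_{\min}^2\delta^2)$. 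This is the advertised growth with $\|\mathcal{N}\|\approx\sigma\sqrt{nk}$.

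\textbf{Drift of $\phi_{ij}^*$.} Here I use the mean-field fixed-point equation from the argument of Proposition~\ref{thm:spectral_leakage}:
\[
\phi_{ij}^* = \mathrm{sigmoid}\bigl(\mathrm{logit}\,\pi_{ij} + \mathbb{E}_{q_{-ij}}[\log p(\widehat{\mathcal{P}}\mid A_{ij}=1,\cdot) - \log p(\widehat{\mathcal{P}}\mid A_{ij}=0,\cdot)]\bigr).
\]
The Gaussian log-likelihood difference equals a quadratic form divided by $2\sigma^2$, and its magnitude is bounded by $O(\Delta\,\|\widehat{\mathcal{P}}\|_F/\sigma^2)$. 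Hence it tends to $0$ as $\sigma\to\infty$, and $\phi_{ij}^*\to\pi_{ij}$, which equals $0.5$ under the symmetric prior.

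\textbf{Main obstacle.} I expect the hardest step to be the Sinkhorn/entropic-OT likelihood actually used in Proposition~\ref{thm:spectral_leakage}, rather than a plain Gaussian likelihood. Two things need care there: the orthogonal-ambiguity maximization over $O(k)$, and keeping the mean-field expectation uniformly bounded. I would first try to bound the OT cost difference by the quotient distance $d$, using Lipschitzness of entropic OT in its inputs. That bound would reduce this case to the Gaussian computation above. Since the corollary is explicitly qualitative, I would state the monotonicity and the limits and not attempt tight constants.
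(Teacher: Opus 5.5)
Your route differs from the paper's. The paper's argument only invokes Lipschitz continuity of $S_\varepsilon$ in the \emph{observation} argument, $|S_\varepsilon(\mu_{\mathcal P},\cdot)-S_\varepsilon(\mu_{\widehat{\mathcal P}},\cdot)|\le L_{\mathrm{OT}}\|\mathcal N\|$. It then asserts that the likelihood becomes diffuse and that $\phi^*_{ij}$ drifts to $0.5$. An upper bound on how far the likelihood moves does not show that it flattens, and the paper never actually proves that exact recovery is impossible.

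Your Le Cam/Fano lower bounds are procedure-independent and do prove the ``unavoidable error'' part. Restricting to full-support Gaussian noise is necessary, not cosmetic: whenever noiseless embeddings identify the graph, noise bounded by less than half the (then positive) minimal quotient distance still allows exact nearest-neighbour decoding. So the statement fails for arbitrary $\mathcal N$.

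Two repairs are needed in this part. First, $\Delta/(2\sigma)<1$ holds only for $\sigma>\Delta/2$. Use instead the exact value $\mathrm{TV}=2\Phi(\Delta/(2\sigma))-1$, with $\Phi$ the standard normal CDF; this gives error probability at least $1-\Phi(\Delta/(2\sigma))>0$ for every $\sigma>0$. Second, monotonicity in $\sigma$ is cleanest by garbling. The observation at level $\sigma'>\sigma$ equals the one at level $\sigma$ plus independent Gaussian noise of variance $\sigma'^2-\sigma^2$. Hence every minimax or Bayes risk is nondecreasing in $\sigma$, and this includes the legitimate user's utility, which neither argument otherwise treats. Fano is then needed only to show that the risk level is nonzero.

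The genuine gap is the drift of $\phi^*_{ij}$ for the paper's actual posterior. It sits exactly where you flagged it, and the reduction you propose would not close it. In your Gaussian model the mean-field field is $\langle\widehat{\mathcal P},\mathcal P_1-\mathcal P_0\rangle/\sigma^2$. Since $\|\widehat{\mathcal P}\|_F$ is of order $\sigma\sqrt{nk}$, this vanishes like $1/\sigma$ only because of the $1/(2\sigma^2)$ calibration. Here you should use the crude bound $\|\mathcal P_1-\mathcal P_0\mathcal Q\|_F\le 2\sqrt k$, which also survives profiling or Haar-averaging over $O(k)$. Theorem~\ref{thm:spectral_perturbation} cannot be applied uniformly, because $q_{-ij}$ charges every graph and many of them violate the gap condition.

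The paper's pseudo-likelihood $\exp[-\gamma S_\varepsilon(\mu_{\widehat{\mathcal P}},\mu_{\mathcal U(\mathcal A)})]$ has a fixed temperature $\gamma$, and no Lipschitz bound supplies the missing $\sigma^{-2}$. For the quadratic cost, $\sum_j\|u_j(\mathcal A)\|^2=k$ for every $\mathcal A$. So the $\mathcal A$-dependent part of the transport cost is the cross term $-2\sum_{i,j}\pi_{ij}\langle\hat p_i,u_j(\mathcal A)\rangle$, with $\hat p_i$ the rows of $\widehat{\mathcal P}$, and its noise part is linear in $\sigma$. Equivalently, the Lipschitz constant of $S_\varepsilon$ in the support points of its second argument grows with the radius of $\operatorname{supp}\mu_{\widehat{\mathcal P}}$. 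The field is therefore of order $\gamma\sigma$ with a noise-driven sign, and $\phi^*_{ij}$ can be pushed toward $0$ or $1$ rather than toward the prior.

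Closing this step needs a hypothesis the paper does not state. Examples are $\gamma\propto\sigma^{-2}$, which is the calibration your Gaussian model has built in, or a bounded, saturating cost. Even then the limit is the prior $p_0$, which equals $0.5$ only when $p_0=1/2$. In the sparse regime of Assumption~\ref{assumption:spectral_leakage}(iii) the drift is toward a small $p_0$. Thresholding at $0.5$ then returns the empty graph rather than expressing maximal uncertainty.
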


\begin{remark}[Connection to LoGraB]
\label{rem:lograb_connection}
These results connect directly to LoGraB's parameterization. The benchmark's parameter $k$ (number of retained eigenvectors) corresponds precisely to the information-theoretic quantity governing recoverability. Varying $k$ in LoGraB thus directly varies the information-theoretic recoverability: at small $k$, even a perfect algorithm cannot recover the graph (below the threshold $k^*$), while at large $k$, recovery becomes fundamentally feasible. Similarly, LoGraB's noise parameter $\sigma$ operationalizes Corollary~\ref{cor:dp_defense}: increasing $\sigma$ moves the system toward the defense regime where reconstruction accuracy degrades. The benchmark therefore provides a controlled experimental environment for exploring the entire landscape predicted by the theory.
\end{remark}

\subsection{From Global to Local Leakage}
\label{sec:problem:local}

The Spectral Leakage Proposition establishes that global spectral embeddings leak. We now show that this leakage persists even when the adversary observes only \emph{fragmented} local embeddings, the setting considered by AFR (Section~\ref{sec:afr}). We state the result as a constructive recoverability guarantee that composes three AFR-local guarantees, and then explain how it relates back to Proposition~\ref{thm:spectral_leakage}.

\begin{proposition}[Local-to-global recoverability]
\label{prop:local_reduction}
Let $\{(\mathcal{P}_v, \lambda^{(v)}_{k+1})\}_{v \in V_{\mathrm{obs}}}$ be a collection of local spectral embeddings whose patch overlaps satisfy Assumption~\ref{assumption:afr_overlap} (in Appendix~\ref{appendix:assumptions}), and assume that each local patch Laplacian $\mathcal{L}_v$ individually satisfies the spectral-gap condition of Proposition~\ref{thm:spectral_leakage} with threshold $k \ge k^*_v(\varepsilon)$. Then there exists a polynomial-time procedure (instantiated by AFR in Section~\ref{sec:afr}) that:
\begin{enumerate}[label=(\roman*)]
\item recovers each local adjacency $\widehat{\mathcal{A}}_v$ with bounded error controlled by the per-patch spectral gap $\delta_v$ and truncation proxy $\eta_v$ (Theorem~\ref{thm:edge_recovery}, a constructive local instantiation of Proposition~\ref{thm:spectral_leakage});
\item aligns overlapping patches with alignment error $O(\sqrt{m}\,\eta/\delta_v)$ for $m$-node overlaps (Lemma~\ref{lem:patch_alignment});
\item accumulates error at most linearly with graph diameter (Lemma~\ref{lem:error_accumulation}).
\end{enumerate}
The resulting global adjacency $\widehat{\mathcal{A}}$ satisfies $\|\widehat{\mathcal{A}}-\mathcal{A}\|_F / \|\mathcal{A}\|_F = O(\mathrm{diam}(G) \cdot \varepsilon)$. Hence, under patchwise spectral-gap hypotheses that mirror the global assumptions of Proposition~\ref{thm:spectral_leakage} at the patch level, fragmented spectral sharing is no more private than global sharing up to a polynomial overhead.
\end{proposition}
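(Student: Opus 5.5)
The plan is to prove Proposition~\ref{prop:local_reduction} as a composition of the three cited local guarantees. We track a single error budget through three stages: local recovery, pairwise alignment, and propagation along a spanning structure of the patch-overlap graph. Only at the end do we convert embedding-level errors into adjacency-level Frobenius error. Throughout we take as given the results stated with forward references: Theorem~\ref{thm:edge_recovery}, Lemma~\ref{lem:patch_alignment}, Lemma~\ref{lem:error_accumulation}, and Assumption~\ref{assumption:afr_overlap}.

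\emph{Step 1 (local recovery).} For each $v\in V_{\mathrm{obs}}$, form the truncated heat kernel $H_v^{(k)}(t)$ from $\mathcal{P}_v$. The orthogonal ambiguity $\mathcal{Q}\in O(k)$ cancels in $\mathcal{P}_v\,\mathrm{diag}(e^{-t\lambda_r})\mathcal{P}_v^\top$ only if the retained eigenvalues are known or equal. We therefore either assume they are shared, or note that, for $t$ small relative to $\delta_v$, the kernel is approximated by the projector $\mathcal{P}_v\mathcal{P}_v^\top$ up to a controlled error. The truncation residual satisfies $\|R_v^{(k)}(t)\|_2=\eta_v$. Theorem~\ref{thm:edge_recovery} then yields, whenever its separation gap $\gamma_t$ exceeds $2\eta_v$ (plus the noise contribution from Theorem~\ref{thm:spectral_perturbation}), exact recovery of $\widehat{\mathcal{A}}_v=\mathcal{A}_v$ by thresholding. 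Otherwise it gives a local error proportional to $\eta_v/\delta_v$. The hypothesis $k\ge k_v^*(\varepsilon)$ is used precisely to make this per-patch relative error at most $\varepsilon$. This gives item (i). Everything is explicit eigen-decomposition plus thresholding, so it runs in polynomial time.

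\emph{Step 2 (alignment and accumulation).} Build the overlap graph on patches. Assumption~\ref{assumption:afr_overlap} should guarantee that this graph is connected and that each edge has an overlap of at least $m$ nodes. Choose a BFS spanning tree rooted at a central patch. Its depth is at most $\mathrm{diam}(G)$ up to a constant depending on $d$, since patch centers at graph distance at most $2d$ overlap. On each tree edge, solve orthogonal Procrustes (or RANSAC-Procrustes). Lemma~\ref{lem:patch_alignment} bounds the pairwise error by $O(\sqrt m\,\eta/\delta_v)$, which is item (ii). Composing the alignments along root-to-leaf paths and applying Lemma~\ref{lem:error_accumulation} via the triangle inequality and orthogonal invariance of $\|\cdot\|_F$ gives a per-patch global alignment error of at most $\mathrm{depth}\cdot\max_e(\text{edge error})$. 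This is item (iii).

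\emph{Step 3 (assembly to adjacency).} Define $\widehat{\mathcal{A}}$ by a majority vote or union of the local $\widehat{\mathcal{A}}_v$ over the patches containing each pair. Decompose the error as
\[
\|\widehat{\mathcal{A}}-\mathcal{A}\|_F^2\le \sum_v \|\widehat{\mathcal{A}}_v-\mathcal{A}_v\|_F^2 + (\text{misidentification from alignment}).
\]
The first term contributes at most $\varepsilon^2\sum_v\|\mathcal{A}_v\|_F^2\le c_d\,\varepsilon^2\|\mathcal{A}\|_F^2$, where $c_d$ is the maximum multiplicity with which any edge is covered. The second term is controlled by the accumulated alignment error from Step 2 scaled by the Lipschitz constant of thresholding. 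Normalizing the alignment error to $\varepsilon$ via the choice of $k$ gives the bound $O(\mathrm{diam}(G)\,\varepsilon)$.

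\emph{Main obstacle.} The hard part is this last conversion. Theorem~\ref{thm:edge_recovery} and the alignment lemmas bound errors in embedding space or entrywise, while the statement concerns relative Frobenius error of a binary matrix. A continuous alignment error produces discrete edge flips only once it crosses the separation margin $\gamma_t/2$. I would handle this by counting flipped entries: an entry flips only if its perturbation exceeds $\gamma_t/2$, so Markov's inequality on the squared entries bounds the flips by $4\|\text{error}\|_F^2/\gamma_t^2$. This keeps the bound linear in the depth, though possibly with a $\gamma_t^{-1}$ constant absorbed into the $O(\cdot)$. The multiplicity factor $c_d$ is absorbed into the constant similarly. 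Note also that the statement says ``no more private than global sharing'', comparing against a heuristic proposition. That conclusion is interpretive and follows by reading the final bound side by side with Proposition~\ref{thm:spectral_leakage}; it requires no further proof.
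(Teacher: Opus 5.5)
Your skeleton is the paper's: its proof composes Theorem~\ref{thm:edge_recovery}, Lemma~\ref{lem:patch_alignment} and Lemma~\ref{lem:error_accumulation} along a BFS tree of the patch graph. It stops at ``global error $O(D\cdot\varepsilon)$'' with $D=\mathrm{diam}(\mathcal{G}_P)$. Your flip-counting conversion is an addition the paper does not make.

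The genuine gap is the link to $\varepsilon$. Two sentences carry your proof: ``the hypothesis $k\ge k^*_v(\varepsilon)$ is used precisely to make this per-patch relative error at most $\varepsilon$'' and ``normalizing the alignment error to $\varepsilon$ via the choice of $k$''. Neither is supported:
\begin{itemize}
\item $k^*_v(\varepsilon)$ comes from the heuristic Proposition~\ref{thm:spectral_leakage} and is not defined through $\gamma_t$, $\eta_v$ or $\delta_v$.
\item Theorem~\ref{thm:edge_recovery} has no ``otherwise, error $\propto \eta_v/\delta_v$'' branch.
\item Enlarging $k$ cannot produce the theorem's hypothesis in the regime where a positive margin is guaranteed. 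In the regime of Lemma~\ref{lem:heat_kernel_gap} ($t\,d^{(v)}_{\max}\le 1/4$), every Laplacian eigenvalue is at most $2d^{(v)}_{\max}$. So $\eta_v=e^{-t\lambda_{k+1}}\ge e^{-1/2}>1/2$ for every $k<q_v$. Meanwhile the entries of $e^{-t\mathcal{L}_v}$ lie in $[0,1]$, so $\gamma_t\le 1<2\eta_v$.
\item The only handle on $k^*$ offered, Corollary~\ref{cor:threshold} ($\lambda_{k+1}\le\varepsilon$), even pushes $\eta_v$ toward $1$.
\end{itemize}
The paper's proof does not close this either: it treats positivity of the margin as sufficient and silently replaces $\mathcal{E}_{\mathrm{align}}$ by $\varepsilon$. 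The repair is to assume the per-patch conditions outright. That means $\gamma_t>2\eta_v$ on every patch (or $\gamma_t>2(\Delta_H+\eta_v)$ with noise, Corollary~\ref{cor:weyl}), and $\sqrt{m}\,\eta/\delta_v\le\varepsilon$ on every stitch.

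Your depth bound does not follow from Assumption~\ref{assumption:afr_overlap}. Centers within distance $2d$ share \emph{some} node, but an edge of $\mathcal{G}_P$ needs $|I_{vw}|\ge k+1$ with $G[I_{vw}]$ connected. For $p<1$ some centers are also missing. So $\mathcal{G}_P$ can be disconnected or far longer than $G$. The assumption controls only $D$, which is what the paper uses, and $\mathrm{diam}(G)$ in the statement must be read as $D$.

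More importantly, your Step~3 never uses the rotations. Patches carry global node indices (AFR intersects $P_v\cap P_w$ directly). So a union or majority vote of the $\widehat{\mathcal{A}}_v$ is rotation-free, and the ``misidentification from alignment'' term is zero. With (i) in hand, your own first term already gives relative error at most $\sqrt{c_d}\,\varepsilon$. It is exactly $0$ on $V_{\mathrm{cov}}$ when (i) is exact, because the edge-covering clause of Assumption~\ref{assumption:afr_overlap} puts every edge inside some patch. The diameter factor belongs only to the embedding error $\|X_u-Z_u\|_F$ of Lemma~\ref{lem:error_accumulation}, which is what the paper's $O(D\varepsilon)$ bounds. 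Your flip-counting step would matter only if you thresholded a kernel built from aligned embeddings for pairs sharing no patch. There Theorem~\ref{thm:edge_recovery} supplies no margin $\gamma_t$ to divide by. So the real obstacle is item (i), not the final conversion.
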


This proposition formalizes why LoGraB's fragmented setting and AFR's patch-based attack are connected to the global leakage result: under the overlap and spectral-gap conditions above, fragmentation alone need not preclude recovery, but instead shifts the challenge to patchwise reconstruction and alignment. The detailed proof, which composes Theorem~\ref{thm:edge_recovery}, Lemma~\ref{lem:patch_alignment}, and Lemma~\ref{lem:error_accumulation}, is given in Appendix~\ref{proof:local_reduction}. We emphasize an important distinction: the proposition does \emph{not} claim that AFR runs the variational Bayesian/Sinkhorn procedure underlying Proposition~\ref{thm:spectral_leakage}. Rather, it shows that under \emph{the same spectral-gap regime} ($k \ge k^*_v(\varepsilon)$) where Proposition~\ref{thm:spectral_leakage} guarantees recoverability exists, AFR's heat-kernel thresholding (Theorem~\ref{thm:edge_recovery}) provides a sharper deterministic alternative. The two results share an activation threshold but use different recovery procedures, with Theorem~\ref{thm:edge_recovery} providing a stricter exact-recovery guarantee when its separation condition holds.

\section{LoGraB: Measuring the Threat}
\label{sec:lograb}

Having defined the fragmented-graph setting, we now present LoGraB, a benchmark that operationalizes this setting into a controlled, reproducible testbed. LoGraB systematically decomposes a global graph $G=(V,E)$ into a collection of overlapping patches $P=\{P_1,\ldots,P_m\}$ to simulate real-world data constraints. Each patch is represented not as a perfect subgraph, but only by a truncated and potentially corrupted spectral embedding.

\subsection{Graph Decomposition Strategies}
\label{sec:lograb:decomp}

LoGraB provides three lenses for simulating graph fragmentation, each modeling a different real-world scenario.

\paragraph{Node-centric $d$-hop patches.}
This strategy models a user-centric worldview. For each node $v\in V$, we generate its local view, a patch containing all nodes within a $d$-hop shortest-path distance:
\[
P_v=\{u\in V \mid \text{dist}_G(u,v)\le d\},
\]
where $\text{dist}_G$ is the shortest-path metric. With full coverage ($p=1$), this method guarantees that every node is observed. It produces densely overlapping patches that model privacy settings in which each user reveals information only about their immediate bounded network neighborhood.

\paragraph{Cluster-based partitioning with overlap.}
This strategy models natural data silos. We first partition the graph $G$ into $\ell$ disjoint clusters $\{C_1,\ldots,C_\ell\}$ using multilevel spectral bisection (METIS). Then, to create realistic interaction, we expand each cluster to include its one-hop neighbors across cluster boundaries. The resulting patches $P_i = C_i \cup B_{C_i}$, where $B_{C_i}$ contains 1-hop neighbors of $C_i$ in other clusters, preserve strong internal connectivity while having controlled overlap, reflecting how data is naturally partitioned by community structure or organizational domains.

\paragraph{Random overlapping patches.}
This strategy captures the chaos of real-world data collection. We randomly sample $s$ seed nodes and form patches $P_v$ from their $d$-hop neighborhoods. These patches overlap irregularly and vary widely in size. This model does not guarantee full coverage, creating an irregular and heterogeneous cover reminiscent of crowd-sourced or opportunistic sensing networks.

\subsection{Parameterized Generation}
\label{sec:lograb:params}

To simulate the diverse challenges of the real world, we equip researchers with four parameters to precisely control the nature of fragmentation. These parameters allow for a spectrum of scenarios, from near-perfect networks to extremely sparse information environments.

\begin{table}[!htbp]
    \centering
    \caption{Parameter grid for LoGraB instance generation.}
    \begin{tabular}{clc}
        \toprule
        Symbol & Description & Grid \\
        \midrule
        $d$ & Radius controlling patch locality & $\{1, 2\}$ \\
        $k$ & Number of eigenvectors retained (spectral fidelity) & $\{16, 32, 64\}$ \\
        $\sigma$ & Standard deviation of additive Gaussian noise & $\{0, 0.05, 0.1\}$ \\
        $p$ & Coverage ratio: fraction of vertices for which patches are generated & $\{0.6, 0.8, 1.0\}$ \\
        \bottomrule
    \end{tabular}
    \label{tab:param_grid}
\end{table}

Our generation pipeline applies these constraints in a deliberate sequence:

\begin{itemize}[nosep]
\item \textit{Simulating partial observability (parameter $p$).} To mimic scenarios such as client dropout or inaccessible data, we first sub-sample a set of observed nodes $V_{\text{obs}} \subseteq V$. When $p < 1$, only a fraction $p \cdot |V|$ of nodes is selected, and subsequent decomposition and spectral extraction steps are performed only for the nodes in $V_{\text{obs}}$.

\item \textit{Simulating spectral leakage and noise (parameters $k$ and $\sigma$).} For each patch generated from $V_{\text{obs}}$, we compute the Laplacian $\mathcal{L}_v$ and retain only the bottom $k$ eigenvectors to create a truncated spectral embedding $\mathcal{P}_v = [u_1,\ldots,u_k] \in \mathbb{R}^{|P_v|\times k}$. To further model sensor uncertainty or privacy-preserving noise, we corrupt each entry $u$ in $\mathcal{P}_v$ with Gaussian noise: $u \leftarrow u + \varepsilon$, where $\varepsilon \sim \mathcal{N}(0, \sigma^2)$.
\end{itemize}

\FloatBarrier
\subsection{Data Packaging and Reproducibility}
\label{sec:lograb:data}

To ensure complete reproducibility, we formalize our generation process in Algorithm~\ref{alg:lograb}. Each generated benchmark instance is encapsulated in a single archive file containing processed spectral embeddings and a comprehensive metadata file recording every parameter choice and the initial seed. We provide SHA-256 checksums for all archives to guarantee data integrity.

\begin{algorithm2e}[t]
\small
\DontPrintSemicolon
  \KwInput{Graph $G$, strategy $S$, parameters $(d, k, \sigma, p)$, seed}
  \KwOutput{Archive $\mathscr{I}$ with processed patches.}
    Set RNG $\leftarrow$ seed\\
    $V_{\text{obs}}$ $\leftarrow$ $\text{select\_nodes}(V,p)$\\
    $P_{\text{obs}}$ $\leftarrow$ $\text{decompose}(G, S, d, V_{\text{obs}})$\\
    \ForEach{patch $\mathcal{Q} \in P_{\text{obs}}$}{
        $\mathcal{L}_\mathcal{Q}\leftarrow\text{compute\_Laplacian}(\mathcal{Q})$\\
        $(\mathcal{P}_\mathcal{Q}, \lambda_{k_{\mathcal{Q}}+1}) \gets \text{spectral\_truncate}(\mathcal{L}_{\mathcal{Q}}, k)$\\
        $\mathcal{P}^\prime_\mathcal{Q} \gets \text{apply\_noise}(\mathcal{P}_\mathcal{Q}, \sigma)$\\
        $\text{store\_patch}(\mathcal{Q}, \mathcal{P}^\prime_\mathcal{Q}, \lambda_{k+1})$
    }
    $\text{write\_archive}(V_{\text{obs}}, \text{metadata}=\ldots)$
\caption{LoGraB instance generation.}
\label{alg:lograb}
\end{algorithm2e}

\subsection{Benchmark Tasks and Metrics}
\label{sec:lograb:tasks}

We designed LoGraB to answer three fundamental questions about algorithmic resilience: How well can an algorithm recover the global topology from local fragments? Can it learn useful representations from a restricted local view? And can it reason about relationships that span across fragments it never sees directly? Each question is formalized as a distinct task.

\subsubsection{Task 1: Graph Reconstruction}
\label{sec:lograb:tasks:recon}

Given only the collection of noisy truncated spectral patches $P$, an algorithm must reassemble the global picture, producing a reconstructed graph $\widehat{G}=(\widehat{V},\widehat{E})$, where $\widehat{V}\subseteq V$. We measure success with a multi-faceted suite of metrics:

\begin{itemize}[nosep]
    \item \textit{Node coverage}: How much of the original graph was recovered, $|\widehat{V}|/|V|$.
    \item \textit{Edge fidelity} (Precision, Recall, F1): Of the nodes found, how accurately were their connections rebuilt.
    \item \textit{Island Cohesion}: Do the reassembled components have internal structural integrity, or are they random assortments of correct edges?
\end{itemize}

\paragraph{Island Cohesion.}
Standard precision and recall can be misleading: an algorithm might score well by predicting a loose collection of correct edges without capturing the underlying community structure. We introduce Island Cohesion to address this. Let $\mathcal{C}=\{C_1,\ldots,C_r\}$ be the connected components of $\widehat{G}$. For each component $C_i$, define its ground-truth internal edge set as $E_i^*=\{(u,v)\in E \mid u,v\in C_i\}$ and its predicted internal edge set as $\widehat{E}_i=\widehat{E}\cap(C_i\times C_i)$. Island Cohesion is the weighted average recall across all islands:
\[
    \text{Cohesion}=\sum_{i=1}^r \frac{|E_i^*|}{\sum_j |E_j^*|}\cdot\frac{|\widehat{E}_i\cap E_i^*|}{|E_i^*|}.
\]
Islands with $|E_i^*|=0$ (single-node components or components without true internal edges) are excluded from both the sum and the normalizing denominator. A cohesion score of 1.0 signifies perfect reconstruction within each recovered component, indicating that the algorithm has rediscovered meaningful parts of the graph's topology.

The following proposition, proved in the Appendix~\ref{proof:cohesion}, is an algebraic identity showing that Island Cohesion and standard Recall are tightly coupled when boundary edges are few. (We state it as a proposition rather than a theorem because it is a definitional rearrangement, not a deep result.)

\begin{proposition}[Cohesion--Recall relationship]
\label{prop:cohesion}
Let $G=(V,E)$ be a ground-truth graph, and $\widehat{G}=\bigcup_j \widehat{G}_j$ the reconstructed graph with disjoint islands as connected components. Let $E^*(\widehat{V})$ denote the true edges induced in $\widehat{V}$, and $B\subseteq E^*(\widehat{V})$ the boundary edges whose endpoints belong to different islands. Define the \emph{boundary ratio} $\rho_B := |B|/|E^*(\widehat{V})|$. Then:
\[
(1-\rho_B)\,\mathrm{Cohesion}(\widehat{G}) \le \mathrm{Recall}(\widehat{G}) \le \rho_B + (1-\rho_B)\,\mathrm{Cohesion}(\widehat{G}).
\]
\end{proposition}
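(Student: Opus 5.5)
The plan is to write both Recall and Cohesion as ratios over the same pool of true edges and compare their numerators and denominators directly. Throughout, I take Recall relative to the true edges induced on the recovered vertex set:
\[
\mathrm{Recall}(\widehat{G})=\frac{|\widehat{E}\cap E^*(\widehat{V})|}{|E^*(\widehat{V})|}.
\]
This is the natural reading given that $\rho_B$ is normalized by $|E^*(\widehat{V})|$. If Recall were instead normalized by $|E|$, the lower bound would need a factor $|E^*(\widehat{V})|/|E|$; I would flag this and state the result for the induced version.

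First, partition $E^*(\widehat{V})$. Because the islands $C_1,\ldots,C_r$ are the connected components of $\widehat{G}$, they partition $\widehat{V}$. So every true edge with both endpoints in $\widehat{V}$ either lies inside a single island, and belongs to exactly one $E_i^*$, or joins two different islands, and belongs to $B$. This gives the disjoint decomposition
\[
E^*(\widehat{V})=B\;\sqcup\;\bigsqcup_i E_i^*,
\qquad\text{hence}\qquad
\sum_i |E_i^*|=(1-\rho_B)\,|E^*(\widehat{V})|.
\]
Islands with $|E_i^*|=0$ contribute nothing to this sum, which matches their exclusion from the Cohesion normalizer.

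Second, rewrite Cohesion. The weights $|E_i^*|$ cancel, so Cohesion telescopes to
\[
\mathrm{Cohesion}=\frac{\sum_i |\widehat{E}_i\cap E_i^*|}{\sum_j |E_j^*|}.
\]
The key observation is that no predicted edge can cross islands, since islands are connected components of $\widehat{G}$. Hence $\widehat{E}=\bigsqcup_i \widehat{E}_i$ and $\widehat{E}\cap B=\emptyset$. It follows that
\[
|\widehat{E}\cap E^*(\widehat{V})|=\sum_i|\widehat{E}_i\cap E_i^*|=\mathrm{Cohesion}\cdot(1-\rho_B)\,|E^*(\widehat{V})|.
\]
This yields the identity $\mathrm{Recall}=(1-\rho_B)\,\mathrm{Cohesion}$. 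The lower bound then holds with equality, and the upper bound follows because $\rho_B\ge 0$.

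I would also remark that the additive slack $\rho_B$ in the upper bound is exactly what is needed if islands are understood more loosely than as components of $\widehat{G}$, so that some predicted edges may fall in $B$. In that case the recovered edges are bounded by $\sum_i|\widehat{E}_i\cap E_i^*|+|B|$, and dividing by $|E^*(\widehat{V})|$ gives the stated upper bound.

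The only real obstacle is bookkeeping in the degenerate cases. If $E^*(\widehat{V})=\emptyset$, then Recall is undefined and I would adopt the convention that the statement is vacuous. If every true induced edge is a boundary edge, then $\sum_j|E_j^*|=0$, Cohesion is $0/0$, and I would set $\mathrm{Cohesion}:=0$. In that case $\rho_B=1$ and both inequalities hold trivially.
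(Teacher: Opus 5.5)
Your proof is correct and uses the same decomposition as the paper. You split $E^*(\widehat{V})$ into island-internal edges and the boundary set $B$, and note that Cohesion telescopes to $|\text{TP}_{\text{int}}|/|E_{\text{int}}|$. You also read Recall relative to $E^*(\widehat{V})$, which matches the paper's normalization $(|\text{TP}_{\text{int}}|+|\text{TP}_{\text{bound}}|)/(|E_{\text{int}}|+|B|)$.

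The two arguments differ in how true positives on $B$ are handled.
\begin{itemize}
\item The paper never uses the fact that islands are connected components of $\widehat{G}$. It writes $\mathrm{Recall}=(1-\rho_B)\,\mathrm{Cohesion}+|\text{TP}_{\text{bound}}|/|E^*(\widehat{V})|$ and bounds the last term using $0\le|\text{TP}_{\text{bound}}|\le|B|$.
\item You observe that under the stated hypothesis no predicted edge can join two islands. Hence $\text{TP}_{\text{bound}}=\emptyset$, the lower bound holds as an exact identity, and the upper bound is loose by exactly $\rho_B$.
\end{itemize}

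Each route buys something different. Yours is sharper: it shows the additive $\rho_B$ slack is never used under the literal hypotheses. The paper's is more robust: it still holds when the predicted edge set contains inter-island edges, such as AFR's cross-voting edges $\widehat{E}_{\mathrm{cross}}$, while the islands are the ones produced by stitching rather than the components of the full predicted graph. That looser reading is exactly what your final remark reconstructs.

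Your explicit handling of the degenerate cases $E^*(\widehat{V})=\emptyset$ and $\sum_j|E_j^*|=0$ is a small addition the paper omits.
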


When the boundary ratio $\rho_B$ is small, as is the case when the reconstructed islands align well with the graph's community structure, Cohesion and Recall are approximately equal. This validates Island Cohesion as a meaningful complement to standard edge-level metrics. (We use $\rho_B$ rather than $\beta$ to avoid collision with the RANSAC failure rate.)

\subsubsection{Task 2: Localized Node Classification}
\label{sec:lograb:tasks:nodecls}

This task tests the ability to learn useful representations while confined to an isolated local view, a core capability for federated learning applications. An algorithm is given a set of disconnected graph patches $P=(P_1,\ldots,P_N)$. Each patch $P_v=(V_v,E_v,\mathcal{P}_v)$ is a self-contained world containing the features of the original node augmented by LoGraB's spectral coordinates. The goal is to train a single model $f_\theta$ that predicts the label $y_u$ of a node using only its local context within its patch $P_v$. Information sharing between patches is strictly prohibited, forcing the model to learn generalizable representations.

\paragraph{Evaluation.}
We adopt a strict protocol inspired by cross-client federated learning. The entire set of patches, not nodes, is divided into training (60\%), validation (20\%), and test (20\%) sets. This patch-level split guarantees that the test set consists of ``clients'' the model has never seen during training, completely eliminating information leakage from shared vertices. We report F1-Micro (overall accuracy) as the primary metric.

\subsubsection{Task 3: Inter-Fragment Link Prediction}
\label{sec:lograb:tasks:linkpred}

Given a set of disjoint node components $\widehat{\mathcal{C}}=\{C_1,\ldots,C_r\}$, the task is to learn a scoring function $\psi(u,v)\in\mathbb{R}$ for any pair of nodes $(u,v)$ belonging to different components $u\in C_i$, $v\in C_j$, $i\ne j$. A higher score must indicate a higher likelihood that an edge exists in the original graph. We construct a balanced evaluation set: the positive set $\varepsilon^+$ contains all true cross-component edges, and the negative set $\varepsilon^-$ of equal size is sampled from non-existent inter-component edges. Performance is measured by AUROC. Figure~\ref{fig:fragmented_graph} illustrates this scenario, where four structurally distinct subgraphs are placed at corners with known intra-subgraph connectivity but missing inter-subgraph links (dashed edges).

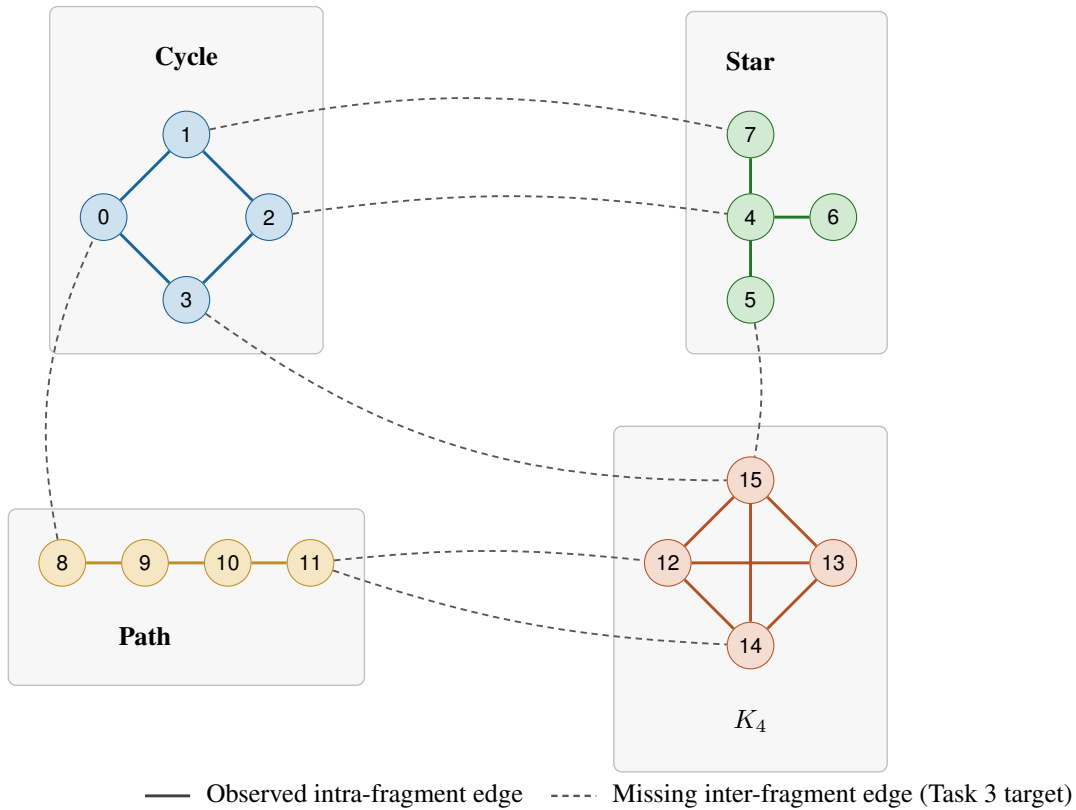
\begin{figure}[!ht]
\centering
\begin{tikzpicture}[
    >={Stealth[length=4pt,width=3pt]},
    every node/.style={font=\small},
    vtx/.style={
        circle, draw=black!70, line width=0.4pt,
        minimum size=6.2mm, inner sep=0pt,
        font=\scriptsize\sffamily
    },
    vc/.style={vtx, fill=cCycle!22, draw=cCycle!80!black},
    vs/.style={vtx, fill=cStar!22,  draw=cStar!70!black},
    vp/.style={vtx, fill=cPath!28,  draw=cPath!85!black},
    vk/.style={vtx, fill=cKfour!22, draw=cKfour!85!black},
    eC/.style={cCycle!85!black, line width=1.1pt},
    eS/.style={cStar!75!black,  line width=1.1pt},
    eP/.style={cPath!85!black,  line width=1.1pt},
    eK/.style={cKfour!85!black, line width=1.1pt},
    miss/.style={cMiss, dashed, dash pattern=on 2.4pt off 1.8pt,
                 line width=0.7pt},
    frag/.style={rounded corners=3pt, draw=cFrame, line width=0.5pt,
                 fill=black!3, inner sep=4mm},
    label/.style={font=\small\bfseries}
]
 
\begin{scope}[shift={(0,0)}]
    \node[vc] (n1) at (0   , 1.1) {1};
    \node[vc] (n0) at (-1.1, 0  ) {0};
    \node[vc] (n2) at ( 1.1, 0  ) {2};
    \node[vc] (n3) at (0   ,-1.1) {3};
    \draw[eC] (n0)--(n1)--(n2)--(n3)--(n0);
    \node[label, above=4mm of n1] (Lcycle) {Cycle};
\end{scope}
\begin{scope}[on background layer]
    \node[frag, fit=(n0)(n1)(n2)(n3)(Lcycle)] (Fcycle) {};
\end{scope}
 
\begin{scope}[shift={(7.5,0)}]
    \node[vs] (n4) at (0   , 0  ) {4};
    \node[vs] (n7) at (0   , 1.1) {7};
    \node[vs] (n6) at ( 1.1, 0  ) {6};
    \node[vs] (n5) at (0   ,-1.1) {5};
    \draw[eS] (n4)--(n7);
    \draw[eS] (n4)--(n6);
    \draw[eS] (n4)--(n5);
    \node[label, above=4mm of n7] (Lstar) {Star};
\end{scope}
\begin{scope}[on background layer]
    \node[frag, fit=(n4)(n5)(n6)(n7)(Lstar)] (Fstar) {};
\end{scope}
 
\begin{scope}[shift={(0,-4.6)}]
    \node[vp] (n8)  at (-1.65, 0) {8};
    \node[vp] (n9)  at (-0.55, 0) {9};
    \node[vp] (n10) at ( 0.55, 0) {10};
    \node[vp] (n11) at ( 1.65, 0) {11};
    \draw[eP] (n8)--(n9)--(n10)--(n11);
    \node[label, below=4mm of n9.south] (Lpath) {Path};
\end{scope}
\begin{scope}[on background layer]
    \node[frag, fit=(n8)(n9)(n10)(n11)(Lpath)] (Fpath) {};
\end{scope}
 
\begin{scope}[shift={(7.5,-4.6)}]
    \node[vk] (n15) at (0   , 1.1) {15};
    \node[vk] (n12) at (-1.1, 0  ) {12};
    \node[vk] (n13) at ( 1.1, 0  ) {13};
    \node[vk] (n14) at (0   ,-1.1) {14};
    \draw[eK] (n12)--(n13);
    \draw[eK] (n12)--(n14);
    \draw[eK] (n12)--(n15);
    \draw[eK] (n13)--(n14);
    \draw[eK] (n13)--(n15);
    \draw[eK] (n14)--(n15);
    \node[label, below=4mm of n14] (Lk4) {$K_4$};
\end{scope}
\begin{scope}[on background layer]
    \node[frag, fit=(n12)(n13)(n14)(n15)(Lk4)] (Fk4) {};
\end{scope}
 
\draw[miss] (n1) to[bend left=12]  (n7);
\draw[miss] (n2) to[bend left=8]   (n4);
\draw[miss] (n3) to[bend right=18] (n15);
\draw[miss] (n5) to[bend left=10]  (n15);
\draw[miss] (n11) to[bend left=6]  (n12);
\draw[miss] (n11) to[bend right=8] (n14);
\draw[miss] (n8) to[bend left=18]  (n0);
 
\begin{scope}[shift={(3.75,-7.7)}]
    \draw[black!75, line width=1.1pt] (-4.30,0) -- (-3.70,0);
    \node[anchor=west, font=\small] at (-3.62,0)
         {Observed intra-fragment edge};
 
    \draw[miss, line width=0.9pt] ( 1.10,0) -- ( 1.70,0);
    \node[anchor=west, font=\small] at ( 1.78,0)
         {Missing inter-fragment edge (Task~3 target)};
\end{scope}
 
\end{tikzpicture}

\caption{Fragmented graph scenario in LoGraB. Each of the four coloured
panels is a local fragment observed by a different client: a cycle, a
star, a path, and a complete graph~$K_4$. Solid edges are observed
intra-fragment links; dashed curves are the unobserved inter-fragment
links that Task~3 (inter-fragment link prediction) must recover.}
\label{fig:fragmented_graph}
\end{figure}

\paragraph{Baseline approaches.}
We employ two types of baselines: (i) \textit{representation-based} methods that compute similarity scores (e.g., cosine similarity) from node embeddings learned in Task~2; and (ii) \textit{structure-based} methods, including an adaptation of SEAL \citep{Zhang2018SEAL} that extracts enclosing subgraphs from a co-occurrence graph, Neo-GNN, and LightGCN \citep{He2020LightGCN}.

\section{AFR: Exploiting the Threat Under Imperfection}
\label{sec:afr}

LoGraB's Task~1 (graph reconstruction) is the most challenging benchmark task: it requires inverting noisy, truncated spectral embeddings to recover global topology. We now present AFR, an algorithm designed for this task. Prior patch-based reconstruction approaches typically assume patches of uniform quality and therefore do not actively exploit the substantial quality heterogeneity present under non-uniform truncation and noise; AFR actively measures per-patch quality and adapts its assembly strategy to prioritize the most reliable patches and require more evidence before committing to alignments of lower-fidelity pairs.

Our theoretical framework is built upon formal assumptions that guide the algorithm's behavior under realistic, imperfect conditions. Instead of demanding data perfection, we assume that a subset of the local data is sufficiently reliable to initiate a robust reconstruction and that this reliability can be quantified. The complete mathematical formulation of these assumptions is detailed in Appendix~\ref{appendix:assumptions}; here, we summarize the key ideas. First, a \emph{core patch} is a local subgraph that is computationally tractable, has a bounded reconstruction error, and surpasses a minimum fidelity threshold. Second, an \emph{adaptive eligibility criterion} for stitching requires that the overlap between patches be sufficiently large and structurally sound, with the required size increasing as fidelity decreases. Third, RANSAC-Procrustes provides probabilistic guarantees for recovering near-correct rotations.

\begin{figure*}[h!]
    \centering
    \includegraphics[width=\linewidth]{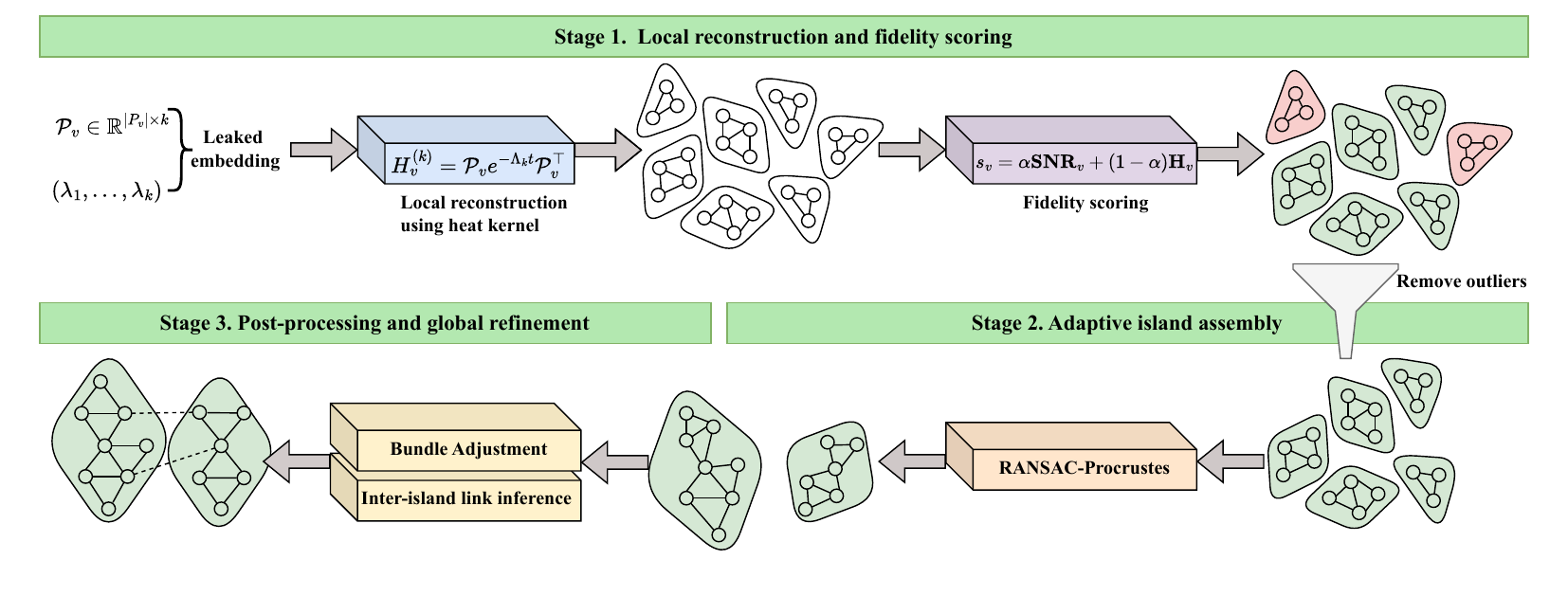}
    \caption{Adaptive Fidelity-driven Reconstruction (AFR). The pipeline reconstructs graphs in three stages: local reconstruction with fidelity scoring, adaptive island assembly using RANSAC-Procrustes, and global refinement via Bundle Adjustment and inter-island link inference.}
    \label{fig:afr_pipeline}
\end{figure*}

\subsection{The Three-Stage Pipeline}
\label{sec:afr:method}

\begin{algorithm2e}[h!]
\setstretch{1}
\DontPrintSemicolon
\KwInput{A set of $m$ local patches $\{(P_i, \mathcal{P}_i, \Lambda_i)\}_{i=1}^m$, where $\mathcal{P}_i$ is the spectral embedding and $\Lambda_i$ contains the eigenvalues for patch $P_i$.}
\KwOutput{A set of reconstructed islands $\{\hat{G}_j\}$ and a set of probabilistic inter-island edges $\hat{E}_{\text{cross}}$.}
\vspace{0.2cm}
\tcc{Stage 1: Local reconstruction and fidelity scoring}
$\text{core\_patches}\leftarrow \{\}$\;
\ForEach{patch $i\leftarrow 1$ \KwTo $m$}{
    $H_i^{(k)} \leftarrow \mathcal{P}_i \, e^{-\Lambda_{i,k}t} \, \mathcal{P}_i^{\top}$\;
    $\hat{\mathcal{A}}_i \leftarrow \text{Threshold}(H_i^{(k)})$\;
    $\delta_i \leftarrow \lambda_{k+1} - \lambda_k$; \quad
    $\eta_i \leftarrow e^{-t \lambda_{k+1}}$\;
    $\rho_i \leftarrow \delta_i / (\delta_i + \eta_i)$; \quad
    $\mathcal{E}_i \leftarrow \text{StructuralEntropy}(\hat{\mathcal{A}}_i)$ \;
    $s_i \leftarrow \alpha \cdot \rho_i + (1-\alpha) \cdot \mathcal{E}_i$\;
    \If{$s_i \ge s_{\min}$ and $\delta_i \ge \delta_{\min}$}{
        Add $(i, \hat{\mathcal{A}}_i, s_i)$ to core\_patches
    }
}
\tcc{Stage 2: Adaptive island assembly}
islands $\leftarrow$ Initialize, one per core patch\;
$Q \leftarrow$ Priority queue of all potential stitching pairs $(v, w)$, prioritized by $f(s_v, s_w, |P_v \cap P_w|)$
\While{$Q$ is not empty}{
    $(v, w) \leftarrow Q.\text{pop}()$\;
    $d_{\text{adaptive}} \leftarrow \max\{k+1,\, k_{\text{base}} + \gamma(1 - \min\{s_v, s_w\})\}$ \tcp*{Procrustes well-posedness floor}\;
    \If{$|P_v \cap P_w| \ge d_{\text{adaptive}}$ and $G[P_v \cap P_w]$ is connected}{
        $\mathcal{Q}_{vw}, \text{consensus} \leftarrow \text{RANSAC-Procrustes}(\mathcal{P}_v, \mathcal{P}_w, P_v \cap P_w)$\;
        \If{$|\text{consensus}| \ge d_{\text{adaptive}}$}{
            Merge islands containing $v$ and $w$ using rotation $\mathcal{Q}_{vw}$\;
            Update $Q$ with new potential stitches from the merged island
        }
    }
}
\tcc{Stage 3: Post-processing and global refinement}
\ForEach{assembled island $\hat{G}_j \in$ islands}{
    $\{\mathcal{Q}_{vw}^{*}\} \leftarrow \arg\min_{\{\mathcal{Q}_{vw}\}} \sum_{(v,w)} \| \mathcal{P}_v' - \mathcal{P}_w' \mathcal{Q}_{vw} \|_F^2$\;
    Refine node positions in $\hat{G}_j$ using $\{\mathcal{Q}_{vw}^{*}\}$
}
$\hat{E}_{\text{cross}} \leftarrow \{\}$\;
\ForEach{inter-island node pair $(u, w)$}{
    $C(u, w) \leftarrow$ Count co-occurrences in original patch intersections\;
    \If{$C(u, w) > C_0$}{
        $P_{\text{inter}}(u,w) \leftarrow (1 + e^{-\kappa(C(u,w) - C_0)})^{-1}$\;
        Add probabilistic edge $(u, w)$ with probability $P_{\text{inter}}$ to $\hat{E}_{\text{cross}}$
    }
}
\Return islands, $\hat{E}_{\text{cross}}$
\caption{Adaptive Fidelity-driven Reconstruction (AFR).}
\label{alg:afr}
\end{algorithm2e}

Let $G=(V,E)$ be an undirected graph with $|V|=n$ nodes. For every $v\in V$ we observe a local $d$-hop embedding $\mathcal{P}_v\in\mathbb{R}^{|P_v|\times k}$ of the patch $P_v\subseteq V$. Our goal is to reconstruct a set of high-fidelity \emph{islands} $\{\widehat{G}_1,\ldots,\widehat{G}_r\}\subseteq G$ together with a probabilistic inter-island edge set $\widehat{E}_{\mathrm{cross}}$.

\paragraph{Stage 1: Local reconstruction and fidelity scoring.}
The first stage acts as a gatekeeper, transforming each raw spectral embedding into a structured representation and quantifying its trustworthiness.

\textit{Local reconstruction.} For each patch $v$, we reconstruct a local adjacency matrix $\widehat{\mathcal{A}}_v$ from its spectral embedding $\mathcal{P}_v\in\mathbb{R}^{|P_v|\times k}$ and corresponding eigenvalues $(\lambda_1,\ldots,\lambda_k)$ using the heat kernel method:
\[
H_v^{(k)}=\mathcal{P}_v \, e^{-\Lambda_k t} \, \mathcal{P}_v^\top,
\]
where $t$ is a fixed time parameter and $\Lambda_k=\text{diag}(\lambda_1,\ldots,\lambda_k)$. The local adjacency matrix $\widehat{\mathcal{A}}_v$ is obtained by thresholding the entries of $H_v^{(k)}$.

\textit{Fidelity scoring.} The core innovation of this stage is a multi-component fidelity score $s_v$ that answers two orthogonal questions about each patch's quality:

\begin{enumerate}[nosep]
\item \textit{Is the signal stable?} We quantify this with the gap-to-truncation ratio (Section~\ref{sec:problem:notation}):
\[
\rho_v = \frac{\delta_v}{\delta_v + \eta_v} \in [0,1],
\]
where the stability term $\delta_v = \lambda_{k+1}(\mathcal{L}_v) - \lambda_k(\mathcal{L}_v)$ is the spectral gap and the truncation residual $\eta_v = e^{-t\lambda_{k+1}(\mathcal{L}_v)}$ bounds the heat-kernel error. This is a heuristic measure motivated by Theorem~\ref{thm:spectral_perturbation}, which shows that eigenvector perturbation error scales with $1/\delta_v$; it is not a standard signal-to-noise ratio in the signal-processing sense.

\item \textit{Is the signal distinctive?} We quantify this with structural entropy:
\[
\mathcal{E}_v = -\frac{1}{\log(|V_v|)}\sum_{d \in D_v} p(d)\,\log(p(d)),
\]
where $D_v$ is the set of unique node degrees in the patch and $p(d)$ is the empirical probability of degree $d$. For single-node patches ($|V_v|=1$), we define $\mathcal{E}_v := 0$. Intuitively, patches with near-uniform degrees (low entropy) have highly symmetric point configurations, producing near-degenerate Procrustes problems where the singular values $\sigma_{\min}(A_0^\top B_0)$ in Theorem~\ref{thm:stitch}'s denominator become small, amplifying alignment error. Entropy thus serves as a cheap proxy for Procrustes conditioning that can be computed directly from the reconstructed $\widehat{\mathcal{A}}_v$.
\end{enumerate}

These two components are combined via a convex combination:
\[
s_v = \alpha \, \rho_v + (1-\alpha) \, \mathcal{E}_v,
\]
where $\alpha\in[0,1]$ balances spectral stability and structural distinctiveness. Only patches surpassing minimum thresholds for both fidelity score and spectral gap are promoted to ``core patches.''

\paragraph{Stage 2: Adaptive island assembly.}
This stage is the heart of AFR, where core patches are intelligently assembled into larger, internally consistent structural islands.

\textit{Data-driven prioritization.} Assembly follows a prioritized scheme. The set of core patches $\{(\widehat{\mathcal{A}}_v, s_v)\}$ satisfying Assumption~\ref{assumption:1} forms the input. A priority queue determines the next best potential stitch based on the joint fidelity $f(s_v, s_w)$ and the overlap size $|I_{vw}|$.

\textit{Adaptive stitching loop.} The algorithm iteratively attempts to stitch the highest-priority pair $(v,w)$:

\begin{enumerate}[nosep]
\item \textit{Eligibility check:} The pair must satisfy the adaptive criterion, their overlap must be large enough ($|I_{vw}| \ge d_{\text{adaptive}} = \max\{k+1,\, k_{\text{base}} + \gamma(1 - \min\{s_v,s_w\})\}$, where the floor $k+1$ ensures the Procrustes problem is well-posed in $\mathbb{R}^k$, as required by Lemma~\ref{lem:patch_alignment}) and the induced subgraph must be connected.
\item \textit{Robust alignment:} If eligible, RANSAC-Procrustes aligns the pair, finding a correct rotation even in the presence of faulty correspondences.
\item \textit{Stitching decision:} A stitch is accepted only if RANSAC finds a strong geometric consensus ($|\text{consensus\_set}| \ge d_{\text{adaptive}}(s_v,s_w)$).
\end{enumerate}

\paragraph{Stage 3: Post-processing and global refinement.}
The final stage performs two operations: refining internal geometric consistency and inferring latent inter-island connections.

\textit{Intra-island refinement via Bundle Adjustment.} Even with robust pairwise stitching, small alignment errors accumulate across large islands. We jointly optimize all pairwise rotations to minimize a global geometric error:
\[
\{\mathcal{Q}_{vw}^*\} = \arg\min_{\{\mathcal{Q}_{vw}\in O(k)\}} \sum_{(v,w)\in\text{stitches}(G_i)} \|\mathcal{P}_v' - \mathcal{P}_w' \mathcal{Q}_{vw}\|_F^2,
\]
where $\mathcal{P}_v'$ and $\mathcal{P}_w'$ are the eigenvector submatrices corresponding to the overlap. This non-convex optimization over the manifold of orthogonal matrices is solved using Riemannian gradient descent (see Appendix~\ref{appendix:bundle}).

\textit{Inter-island link inference via cross-voting.} We return to the raw data and use a cross-voting mechanism: if two nodes from different islands $u$ and $w$ frequently co-occurred in the initial patches, it constitutes strong evidence of a true connection. We compute a vote count $C(u,w)$ and map it to a probability via a sigmoid:
\[
P_{\text{inter}}(u,w) = \frac{1}{1+e^{-\kappa(C(u,w)-C_0)}},
\]
where $C_0$ is a vote threshold and $\kappa$ controls the steepness (we use $\kappa$ rather than $\beta$ to avoid collision with the RANSAC failure rate).

The complete pseudocode is given in Algorithm~\ref{alg:afr}.

\FloatBarrier
\subsection{Correctness and Robustness Guarantees}
\label{sec:afr:theory}

We now state the formal guarantees for AFR's core components. Each theorem is connected to the algorithmic stage it supports; all proofs are deferred to Appendix~\ref{appendix:proofs}.

\paragraph{Heat kernel separation gap (Stage 1 prerequisite).}
The edge recovery theorem below assumes the existence of a positive separation gap $\gamma_t > 0$ in the heat kernel. The following lemma, proved in Appendix~\ref{proof:heat_kernel_gap}, establishes that this gap always exists for an appropriate choice of the time parameter, thereby justifying the heat-kernel approach to local reconstruction.

\begin{lemma}[Heat kernel separation gap]
\label{lem:heat_kernel_gap}
Let $G_v=(V_v,E_v)$ be a connected $d$-hop neighborhood subgraph with $q_v=|V_v|\ge 2$, Laplacian $\mathcal{L}_v$, and maximum degree $d_{\max}^{(v)}$. Consider the heat kernel matrix $H_v=e^{-t_v\mathcal{L}_v}$. If the time parameter is chosen \emph{patch-adaptively} as $t_v = q_v^{-1}\log(C_3 q_v)$ for a suitable constant $C_3$ (so that the dimensionless parameter $\alpha = t_v \cdot d_{\max}^{(v)} \le 1/4$), then there exists a positive separation gap $\Delta_v$ between the heat kernel entries corresponding to edges and those corresponding to non-edges:
\[
\min_{(u,w)\in E_v} (H_v)_{uw} - \max_{\substack{u,w\in V_v \\ \mathrm{dist}(u,w)\ge 2}} (H_v)_{uw} \ge \Delta_v.
\]
A uniform bound is
\[
\Delta_v \ge \frac{5 t_v}{8} = \frac{5 \log(C_3 q_v)}{8 q_v} = \Omega(\log q_v / q_v).
\]
If a power-law rather than logarithmic bound is desired, choosing $t_v = 1/q_v^2$ instead yields the weaker bound $\Delta_v \ge 1/(2 q_v^2)$.
\end{lemma}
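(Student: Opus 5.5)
The plan is a short-time Taylor expansion of the heat kernel, $H_v = e^{-t_v\mathcal{L}_v} = I - t_v\mathcal{L}_v + \sum_{j\ge 2}\frac{(-t_v)^j}{j!}\mathcal{L}_v^j$, applied entrywise to off-diagonal positions $u\neq w$. The identity term contributes nothing off the diagonal, so the first-order term $-t_v(\mathcal{L}_v)_{uw}$ does all the separating. For $(u,w)\in E_v$ it is a strictly positive quantity, and for $\mathrm{dist}(u,w)\ge 2$ it is exactly zero.

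I will work with the combinatorial Laplacian $L=D-A$, for which $-(\mathcal{L}_v)_{uw}=1$ on edges. This normalization matches the form of the bound $5t_v/8$. For the normalized Laplacian the same argument goes through with $1$ replaced by $1/d_{\max}^{(v)}$ and the constants rescaled. Thus an edge entry equals $t_v + R_{uw}$ and a non-edge entry equals $R_{uw}$, where $R=\sum_{j\ge2}\frac{(-t_v)^j}{j!}\mathcal{L}_v^j$ is the remainder. It then suffices to show $|R_{uw}|\le \tfrac{3}{16}t_v$ for every off-diagonal pair. Taking the minimum over edges and the maximum over non-edges then gives a gap of at least $t_v - 2\cdot\tfrac{3}{16}t_v = \tfrac{5}{8}t_v$.

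The key step is an entrywise bound on powers of $\mathcal{L}_v$ that retains one factor of $t_v$ rather than an operator-norm bound. An operator-norm bound would only give an $O(1)$ remainder, which is useless. I will use that every row of $L$ has absolute sum at most $2d_{\max}^{(v)}$ and off-diagonal entries of magnitude at most $1$. Writing $(L^j)_{uw}=\sum_x (L^{j-1})_{ux}L_{xw}$, induction then gives $|(L^j)_{uw}|\le (2d_{\max}^{(v)})^{j-1}\cdot c$ for a small constant $c$; I will handle the diagonal contribution $L_{ww}=d_w$ separately, which is why the constant needs care. Summing gives
\[
|R_{uw}| \le t_v\, c\sum_{j\ge2}\frac{(2\alpha)^{j-1}}{j!}\le t_v\,c\,\frac{e^{2\alpha}-1-2\alpha}{2\alpha},
\]
with $\alpha=t_v d_{\max}^{(v)}\le 1/4$. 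This is a constant multiple of $t_v$ that can be evaluated numerically at $\alpha=1/4$ and should come out below $3/16$ after sharpening. For non-edges I can sharpen further: $(L^2)_{uw}$ equals the number of common neighbours of $u$ and $w$, which is at most $d_{\max}^{(v)}$, so the leading remainder term there is $\tfrac{1}{2}t_v\alpha$.

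The main obstacle I expect is getting the numerical constant exactly to $3/16$, and hence the gap exactly to $5/8$. A crude bound on $|(L^j)_{uw}|$ loses factors of $2$ from the diagonal degrees. If the crude bound fails, my first fallback is to split $L=D-A$ and bound the off-diagonal entries of $(D-A)^j$ through walk counts weighted by degrees. My second fallback is to treat edges and non-edges asymmetrically: the edge lower bound only needs the signed second-order term $\tfrac{t_v^2}{2}(L^2)_{uw}$, which is controlled by the common-neighbour count as well.

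The choice $t_v=q_v^{-1}\log(C_3q_v)$ enters only through the hypothesis $\alpha\le1/4$, which I will take as given as the statement does. Substituting this $t_v$ into $5t_v/8$ yields $\Omega(\log q_v/q_v)$. For the alternative $t_v=1/q_v^2$, the same expansion holds with $\alpha\le d_{\max}^{(v)}/q_v^2\le 1/q_v$, which is tiny. The remainder is then $O(t_v/q_v)$, and for $q_v\ge2$ the gap is at least $t_v(1-O(1/q_v))\ge 1/(2q_v^2)$.
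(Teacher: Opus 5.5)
Your central inequality, $|R_{uw}|\le\tfrac{3}{16}t_v$ for every off-diagonal pair, is false when $\alpha$ is near $1/4$. So no sharpening of the entrywise bounds on powers of $L$ can rescue the symmetric route.

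On an edge, the second-order term is $\tfrac{t^2}{2}(L^2)_{uw}=\tfrac{t^2}{2}(c_{uw}-d_u-d_w)$, where $c_{uw}$ is the number of common neighbours. If both endpoints have degree $d_{\max}$ and share no neighbour, this term is exactly $-\alpha t$. At $\alpha=1/4$ that is $-t/4$, and the higher-order terms give back only $O(\alpha^2 t)$.

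Concretely, take $C_4$, which is the $2$-hop neighbourhood of each of its vertices and has $d_{\max}=2$, with $t=1/8$ so that $\alpha=1/4$. The edge entry is exactly $(1-e^{-1/2})/4\approx 0.787\,t$, so $|R_{uw}|\approx 0.213\,t>3t/16$. The best gap your argument can certify there is about $t-2(0.213)t\approx 0.57\,t<5t/8$. The true gap in $C_4$ is about $0.69\,t$, so the lemma itself holds there. What fails is the route: it charges the large negative edge-side remainder to the non-edges too, where the remainder is only about $\alpha t/2$.

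Your second fallback, treating edges and non-edges asymmetrically, is the route the paper takes. It bounds edge entries below by $t(1-\alpha)$, using $(\mathcal{L}_v)_{uw}=-1$. It bounds non-edge entries above by $\alpha t/2$, using the common-neighbour count. It then reads off $\tfrac34-\tfrac18=\tfrac58$, dropping the $O(\alpha^2)$ terms. Those truncated bounds are exactly tight at $\alpha=1/4$, so adding any absolute-value tail estimate would again land below $5/8$. You need sign information for the whole series.

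One way to get it is as follows.
\begin{itemize}
\item \emph{Edges.} Set $M=A+(d_{\max}I-D)$, which is entrywise nonnegative. Then $H_v=e^{-\alpha}e^{tM}\ge e^{-\alpha}(I+tM)$ entrywise, so every edge entry is at least $e^{-\alpha}t$.
\item \emph{Non-edges.} Since $A-D\le A$ entrywise, and both become nonnegative after adding $d_{\max}I$, you get $H_v\le e^{tA}$ entrywise. Row sums of $A^{j-1}$ are at most $d_{\max}^{j-1}$, so $(A^j)_{uw}\le d_{\max}^{j-1}$. Hence for $\mathrm{dist}(u,w)\ge 2$, $(H_v)_{uw}\le\sum_{j\ge2}\tfrac{t^j}{j!}(A^j)_{uw}\le t\,(e^{\alpha}-1-\alpha)/\alpha$.
\end{itemize}
The gap is therefore at least $t\,[e^{-\alpha}-(e^{\alpha}-1-\alpha)/\alpha]$. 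This is decreasing in $\alpha$ and about $0.643\,t>5t/8$ at $\alpha=1/4$. The slack $e^{-\alpha}-(1-\alpha)$ on the edge side is what absorbs the non-edge tail beyond $\alpha t/2$.

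This also settles the $t_v=1/q_v^2$ case without your unquantified $1-O(1/q_v)$ step. There $\alpha\le (q_v-1)/q_v^2\le 1/4$ for $q_v\ge 2$, so the gap is at least $\tfrac58 t_v\ge 1/(2q_v^2)$.
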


The key insight is that for small $t$, the heat kernel entry for an edge $(u,w)$ is dominated by the linear term $t$ (since $(\mathcal{L}_v)_{uw}=-1$), while for non-edges the leading contribution is the quadratic term $t^2 d_{\max}^{(v)}/2$. Choosing $t$ small enough ensures that the linear term dominates, guaranteeing a positive gap.

\paragraph{Edge recovery (Stage 1).}
The following theorem guarantees that AFR's heat-kernel-based local reconstruction can exactly recover edges when the spectral gap is sufficiently large. Lemma~\ref{lem:heat_kernel_gap} guarantees the existence of the separation margin $\gamma_t$ assumed below.

\begin{theorem}[Edge recovery]
\label{thm:edge_recovery}
Let $t>0$ be a fixed time parameter. Suppose that the true heat kernel $H_v(t)$ on $P_v$ exhibits an entry-wise separation margin between its edge and non-edge values:
\[
\gamma_t = \min_{(i,j)\in E(v)} H_v(t)_{ij} - \max_{(i,j)\notin E(v)} H_v(t)_{ij} > 0.
\]
Let the truncation error be bounded by $\eta_v = e^{-t\lambda_{k+1}}$. If $\gamma_t > 2\eta_v$, then the edge set $E(v)$ can be recovered exactly: there exists a non-empty interval of thresholds $\tau_v$ such that $\widehat{E}_v = \{(i,j) \mid H_v^{(k)}(t)_{ij} > \tau_v, i\ne j\} = E_v$.
\end{theorem}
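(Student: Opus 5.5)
The plan is a standard perturbation-of-threshold argument. First I will bound the entrywise truncation error. Write $H_v(t) = H_v^{(k)}(t) + R_v^{(k)}(t)$, with $R_v^{(k)}(t) = \sum_{r>k} e^{-t\lambda_r} u_r u_r^\top$ as in Section~\ref{sec:problem:notation}. Since the $u_r$ are orthonormal and $\lambda_r \ge \lambda_{k+1}$ for $r>k$, $R_v^{(k)}(t)$ is positive semidefinite with $\|R_v^{(k)}(t)\|_2 = e^{-t\lambda_{k+1}} = \eta_v$. Every entry of a matrix is bounded by its spectral norm, because $|R_{ij}| = |e_i^\top R e_j| \le \|R\|_2$. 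Hence
\[
\bigl| H_v^{(k)}(t)_{ij} - H_v(t)_{ij} \bigr| \le \eta_v \quad \text{for all } i,j.
\]

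Next I will transfer the separation margin to the truncated kernel. Set $a := \min_{(i,j)\in E(v)} H_v(t)_{ij}$ and $b := \max_{(i,j)\notin E(v),\, i\ne j} H_v(t)_{ij}$, so that $a - b = \gamma_t$. I read the non-edge maximum as ranging over off-diagonal pairs only, consistent with the restriction $i\ne j$ in $\widehat{E}_v$. By the entrywise bound, for every edge we get $H_v^{(k)}(t)_{ij} \ge a - \eta_v$, and for every non-edge pair $i\ne j$ we get $H_v^{(k)}(t)_{ij} \le b + \eta_v$. The hypothesis $\gamma_t > 2\eta_v$ is exactly $b + \eta_v < a - \eta_v$. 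So every threshold $\tau_v \in [\,b+\eta_v,\; a-\eta_v)$, a nonempty interval of length $\gamma_t - 2\eta_v > 0$, satisfies $H_v^{(k)}(t)_{ij} > \tau_v$ on edges and $H_v^{(k)}(t)_{ij} \le \tau_v$ on non-edges. It follows that $\widehat{E}_v = E_v$. The midpoint $\tau_v = (a+b)/2$ is a canonical choice and also works with strict inequalities on both sides.

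I expect the main obstacle to be bookkeeping rather than mathematics. The identity $\|R_v^{(k)}\|_2 = e^{-t\lambda_{k+1}}$ is used as the definition of $\eta_v$, and it relies on eigenvalue ordering and orthonormality. I must also check that the statement concerns the noiseless truncated kernel and not the noisy embedding; noise would require adding a Davis--Kahan term via Theorem~\ref{thm:spectral_perturbation}, which the statement does not request. Finally, $O(k)$ ambiguity is harmless, since $\mathcal{P}_v \mathcal{Q}\, e^{-\Lambda_k t}\, \mathcal{Q}^\top \mathcal{P}_v^\top = H_v^{(k)}$ whenever $\mathcal{Q}$ commutes with $e^{-\Lambda_k t}$. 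That commutation holds for the sign/permutation-within-eigenspace ambiguity of Definition~\ref{def:local_spectral_embedding}, and I will note this in a remark.
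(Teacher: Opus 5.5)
Your proposal is correct and follows essentially the same argument as the paper: bound every entry of the truncation residual by $\eta_v$, then use $\gamma_t > 2\eta_v$ to obtain a nonempty threshold interval between $\max_{\text{non-edges}} H_v(t)_{ij} + \eta_v$ and $\min_{\text{edges}} H_v(t)_{ij} - \eta_v$. The only cosmetic difference is the entrywise bound: you use $|e_i^\top R e_j| \le \|R\|_2$, whereas the paper applies Cauchy--Schwarz to the eigenvector components together with the completeness relation $\sum_r u_r(i)^2 = 1$.
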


\paragraph{Fidelity score properties (Stage 1).}
The fidelity score used to filter the core patches satisfies the following desirable properties.

\begin{proposition}[Basic properties of the fidelity score]
\label{prop:fidelity}
The following properties hold:
\begin{enumerate}[nosep]
\item $\rho_v$ is bounded in $[0,1]$, strictly increasing in $\delta_v$, and strictly decreasing in $\eta_v$.
\item The composite fidelity score satisfies $0 \le s_v \le 1$.
\item The score $s_v$ is monotonically non-decreasing with respect to the spectral gap $\delta_v$.
\end{enumerate}
\end{proposition}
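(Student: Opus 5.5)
The proof is elementary calculus on the two defining formulas, $\rho_v = \delta_v/(\delta_v+\eta_v)$ and $s_v = \alpha\rho_v + (1-\alpha)\mathcal{E}_v$. First we fix the domain. The eigenvalues are ordered nondecreasingly, so $\delta_v = \lambda_{k+1}-\lambda_k \ge 0$. Since $t>0$, $\eta_v = e^{-t\lambda_{k+1}} > 0$. Hence the denominator $\delta_v+\eta_v$ is strictly positive and $\rho_v$ is always well defined. We treat $\rho_v$ as a function of two independent variables $(\delta,\eta)\in[0,\infty)\times(0,\infty)$.

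For item 1, boundedness holds because $0\le \delta \le \delta+\eta$ with $\delta+\eta>0$, so $0\le\rho_v\le 1$. In fact $\rho_v<1$, since $\eta>0$. For monotonicity we differentiate:
\[
\frac{\partial\rho}{\partial\delta} = \frac{\eta}{(\delta+\eta)^2} > 0, \qquad \frac{\partial\rho}{\partial\eta} = -\frac{\delta}{(\delta+\eta)^2}.
\]
The first derivative is strictly positive, so $\rho_v$ is strictly increasing in $\delta$. The second is strictly negative whenever $\delta>0$. We must flag the edge case $\delta_v=0$: there $\rho_v\equiv 0$ in $\eta$, so strict decrease in $\eta_v$ holds only on $\delta_v>0$. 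This is the regime the paper cares about, since Theorem~\ref{thm:spectral_perturbation} and the core-patch filter $\delta_i\ge\delta_{\min}$ both assume a positive gap. We will state the claim with that qualification, or with the convention that it is non-increasing in general.

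For item 2, we first show $\mathcal{E}_v\in[0,1]$. The Shannon entropy of the empirical degree distribution is at least $0$ and at most $\log|D_v|$. Because $|D_v|\le |V_v|$, this gives $\log|D_v|\le\log|V_v|$, so the normalized entropy is at most $1$. The single-node convention sets $\mathcal{E}_v=0$ and so avoids dividing by $\log 1=0$. Then $s_v$ is a convex combination, with $\alpha\in[0,1]$, of two numbers in $[0,1]$, so $s_v\in[0,1]$.

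For item 3, the only subtle step is whether $\mathcal{E}_v$ depends on $\delta_v$. Formally it is computed from the thresholded $\widehat{\mathcal{A}}_v$, not from the spectral gap. We therefore interpret the monotonicity claim with $\mathcal{E}_v$ and $\eta_v$ held fixed, that is, as the partial dependence of $s_v$ on $\delta_v$. Then $\partial s_v/\partial\delta_v = \alpha\,\partial\rho_v/\partial\delta_v \ge 0$, which is strict when $\alpha>0$ and zero when $\alpha=0$. This is why the statement says non-decreasing rather than strictly increasing.

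The main obstacle is making these implicit conventions precise so the statement is literally true. The required qualifications are the $\delta_v=0$ caveat for strict decrease in $\eta_v$ and the ceteris-paribus reading of item 3. Beyond that, everything is routine.
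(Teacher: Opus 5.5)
Your items 1 and 2 match the paper's argument: the same two partial derivatives, strict decrease in $\eta_v$ only for $\delta_v>0$, and the convex-combination bound. You also justify $\mathcal{E}_v\le 1$ via $\log|D_v|\le\log|V_v|$, which the paper simply asserts.

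The genuine difference is item 3. You hold $\eta_v$ fixed and take a partial derivative, so item 3 becomes an immediate corollary of item 1. The paper instead uses the coupling $\eta_v=e^{-t\lambda_{k+1}}=e^{-t(\lambda_k+\delta_v)}$. It holds $\lambda_k$ fixed and computes the total derivative
\[
\frac{d\rho_v}{d\delta_v}=\frac{\eta_v}{(\delta_v+\eta_v)^2}+\frac{-\delta_v}{(\delta_v+\eta_v)^2}\,(-t\eta_v)=\frac{\eta_v(1+t\delta_v)}{(\delta_v+\eta_v)^2}\ge 0 .
\]
In words, the paper widens the gap by raising the first discarded eigenvalue, so a larger gap and a smaller truncation residual reinforce each other.

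Both readings yield a true statement. You are right that some convention is unavoidable: lowering both eigenvalues with $\lambda_k$ falling faster (e.g.\ $t=1$, $(\lambda_k,\lambda_{k+1})=(1,1.1)\to(0.5,0.61)$) increases $\delta_v$ but decreases $\rho_v$, because $\eta_v$ grows.

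What the paper's reading buys is coherence with the entropy term. Raising $\lambda_{k+1}$ leaves $\mathcal{P}_v$ and $\Lambda_k$ unchanged, hence also $H_v^{(k)}$. Since Stage~1 thresholds $H_v^{(k)}$ alone, $\widehat{\mathcal{A}}_v$ and $\mathcal{E}_v$ are literally unchanged too, so holding $\mathcal{E}_v$ fixed is automatic. In your variation, fixing $\eta_v$ means moving $\lambda_k$. That changes the $k$-th term of $H_v^{(k)}$, and possibly $\widehat{\mathcal{A}}_v$ and $\mathcal{E}_v$. Your ceteris-paribus assumption on $\mathcal{E}_v$ is therefore a formal stipulation, not a consequence of the model.

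Your route is simpler. The paper's carries the content that item 3 actually adds beyond item 1: monotonicity survives the coupling between $\delta_v$ and $\eta_v$.
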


\paragraph{RANSAC convergence (Stage 2).}
The following lemma quantifies the number of RANSAC iterations needed for reliable alignment.

\begin{lemma}[RANSAC all-inlier sample probability]
\label{lem:ransac}
Let $I_{vw}$ be the set of correspondences in the overlap, of which fraction $p_{vw}\in(0,1]$ are true inliers. Let $m$ be the minimal sample size. If RANSAC is executed for $M$ independent iterations:
\begin{enumerate}[nosep]
\item The probability of drawing at least one all-inlier sample is $1-(1-p_{vw}^m)^M$.
\item To ensure success probability at least $1-\beta$, we need $M \ge \log(\beta)/\log(1-p_{vw}^m)$.
\end{enumerate}
\end{lemma}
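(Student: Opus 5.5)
The argument is a direct independence computation, followed by solving the resulting inequality for $M$. I will model each RANSAC iteration as drawing a minimal sample of $m$ correspondences from $I_{vw}$, independently across iterations. I will also adopt the standard simplification that the $m$ elements within a sample are drawn independently, each an inlier with probability $p_{vw}$; this is sampling with replacement, or equivalently the large-overlap approximation. Under this model a single sample is all-inlier with probability exactly $p_{vw}^m$. If one insists on sampling without replacement, the exact value is $\binom{p_{vw}|I_{vw}|}{m}/\binom{|I_{vw}|}{m}$, which is at most $p_{vw}^m$. I will state this as a remark and note that the formula in item 1 is exact for the independent-draw model.

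For item 1, let $F_j$ be the event that iteration $j$ fails to produce an all-inlier sample, so $\Pr(F_j)=1-p_{vw}^m$. By independence of iterations, $\Pr(\bigcap_{j=1}^M F_j)=(1-p_{vw}^m)^M$. Taking complements, the probability that at least one all-inlier sample is drawn is $1-(1-p_{vw}^m)^M$.

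For item 2, I require $1-(1-p_{vw}^m)^M\ge 1-\beta$, which is equivalent to $(1-p_{vw}^m)^M\le\beta$. Taking logarithms gives $M\log(1-p_{vw}^m)\le\log\beta$.

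The main point of care is the sign when dividing by $\log(1-p_{vw}^m)$, together with the degenerate cases.
\begin{itemize}
\item When $p_{vw}\in(0,1)$, we have $0<1-p_{vw}^m<1$, so $\log(1-p_{vw}^m)<0$. Dividing by it flips the inequality and yields $M\ge\log(\beta)/\log(1-p_{vw}^m)$, where both numerator and denominator are negative for $\beta\in(0,1)$. Each step above is an equivalence, so this condition is both necessary and sufficient.
\item When $p_{vw}=1$, the failure probability is $0^M=0$ for every $M\ge 1$, so any single iteration suffices. In this case $\log(1-p_{vw}^m)=\log 0=-\infty$, and the bound reads $M\ge 0$, interpreted with the convention $\log 0=-\infty$.
\end{itemize}

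The proof itself is routine. The only obstacle is making the sampling model explicit so that the per-iteration success probability really is $p_{vw}^m$, and I will state that assumption up front.
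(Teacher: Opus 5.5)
Your proof is correct and follows the paper's argument exactly: per-iteration all-inlier probability $p_{vw}^m$, independence across the $M$ iterations to get $(1-p_{vw}^m)^M$, taking the complement, then solving $(1-p_{vw}^m)^M\le\beta$ with the inequality flip from $\log(1-p_{vw}^m)<0$. Your explicit independent-draw sampling model and your separate handling of $p_{vw}=1$ are worthwhile clarifications that the paper leaves implicit; in that case the paper's division by $\log(1-p_{vw}^m)$ is undefined.
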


\paragraph{Stitch soundness (Stage 2).}
The central guarantee: accepted stitches have bounded alignment error, with the bound tightening as fidelity decreases (forcing more evidence).

\begin{theorem}[Stitch soundness under adaptive evidence]
\label{thm:stitch}
Let a pair of core patches $(v,w)$ be eligible for stitching with inlier fraction $p_{vw}\in(0,1]$ and i.i.d.\ sub-Gaussian noise with parameter $\sigma^2$. Let RANSAC be executed for sufficient iterations $M$ to ensure an all-inlier sample with probability at least $1-\beta$ (Lemma~\ref{lem:ransac}). A stitch is accepted iff $|C_{vw}| \ge d_{\text{adapt}}(s_v,s_w)$.

Under these conditions, with probability at least $1-\beta$, the returned rotation $\widehat{\mathcal{Q}}_{vw}$ satisfies:
\[
\|\sin\Theta(\widehat{\mathcal{Q}}_{vw}, \mathcal{Q}^*_{vw})\|_F \le C \cdot \frac{\sigma}{\sqrt{d_{\text{adapt}}(s_v,s_w)}},
\]
where $C$ depends on the geometric properties of the true point configuration, and $\|\sin\Theta(A,B)\|_F$ is the canonical distance on the special orthogonal group. A lower fidelity score, which increases $d_{\text{adapt}}$, enforces a more stringent error bound.
\end{theorem}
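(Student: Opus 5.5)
The argument conditions on the RANSAC success event and then runs a perturbation analysis of the orthogonal Procrustes problem on the consensus set. By Lemma~\ref{lem:ransac}, with the stated number of iterations $M$, at least one minimal sample drawn during the run is all-inlier with probability at least $1-\beta$. Everything below works on this event.

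\emph{Step 1 (reduce to a clean Procrustes problem).} On the success event, I take the hypothesis rotation fitted from that all-inlier sample and claim that its consensus set $C_{vw}$ consists, up to inlier-threshold tolerance, of true correspondences. Write $A_0, B_0 \in \mathbb{R}^{|C_{vw}|\times k}$ for the noiseless overlap rows of $\mathcal{P}_v$ and $\mathcal{P}_w$. The observations are $A = A_0 + N_A$ and $B = B_0 + N_B$, with $A_0 = B_0\mathcal{Q}^*_{vw}$ and i.i.d.\ sub-Gaussian noise entries.

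The refined estimate $\widehat{\mathcal{Q}}_{vw}$ is the polar factor of $B^\top A$. By the acceptance rule, $|C_{vw}|\ge d_{\text{adapt}}\ge k+1$, which is the floor needed for well-posedness.

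\emph{Step 2 (perturbation of the polar factor).} Expand
\[
B^\top A = B_0^\top A_0 + E, \qquad E = B_0^\top N_A + N_B^\top A_0 + N_B^\top N_A .
\]
I would then invoke the standard polar-factor perturbation bound, a Davis--Kahan/Wedin-type argument analogous to Theorem~\ref{thm:spectral_perturbation}:
\[
\|\sin\Theta(\widehat{\mathcal{Q}}_{vw},\mathcal{Q}^*_{vw})\|_F \le \frac{2\|E\|_F}{\sigma_{\min}(B_0^\top A_0)} .
\]
This is the denominator referenced in the text, and it is also where the constant $C$ picks up its dependence on the geometry of the configuration.

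\emph{Step 3 (concentration and scaling).} The cross terms $B_0^\top N_A$ and $N_B^\top A_0$ are $k\times k$ sums of $|C_{vw}|$ independent sub-Gaussian vectors. A sub-Gaussian matrix concentration bound gives $\|E\|_F\lesssim \sigma\sqrt{k\,|C_{vw}|}\,\|A_0\|_{2,\infty}$ up to log factors, with the quadratic term of lower order for small $\sigma$. Under a non-degeneracy assumption that the overlap rows are spread out, $\sigma_{\min}(B_0^\top A_0)\ge c\,|C_{vw}|$ times the typical row energy. Dividing gives
\[
\|\sin\Theta(\widehat{\mathcal{Q}}_{vw},\mathcal{Q}^*_{vw})\|_F \le C\,\frac{\sigma}{\sqrt{|C_{vw}|}} \le C\,\frac{\sigma}{\sqrt{d_{\text{adapt}}}} .
\]
The failure probability of the concentration step would be absorbed into $\beta$ or into $C$. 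Finally, the monotonicity claim follows because $d_{\text{adapt}}$ is non-increasing in $\min\{s_v,s_w\}$, so lower fidelity raises $d_{\text{adapt}}$ and tightens the bound.

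\emph{Main obstacle.} The delicate point is the lower bound on $\sigma_{\min}(B_0^\top A_0)$ growing linearly in $|C_{vw}|$. Normalized eigenvector rows have norm of order $1/\sqrt{q_v}$, so the scaling is not automatic. I would handle this by folding the configuration's conditioning, including row energy and isotropy, explicitly into $C$. That is consistent with the statement's phrase "depends on the geometric properties," and the entropy proxy in the fidelity score is meant to motivate that this conditioning is controlled. A secondary gap is that the consensus set may contain near-threshold outliers; I would bound their contribution by the inlier threshold and fold it into $C$ as well.
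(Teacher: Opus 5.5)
Your proposal follows essentially the same route as the paper's proof. Both condition on the all-inlier event from Lemma~\ref{lem:ransac}, treat the consensus set as a noisy orthogonal Procrustes problem, bound the rotation error by a Davis--Kahan-type ratio with $\sigma_{\min}(A_0^\top B_0)$ in the denominator, and combine an $O(\sigma\sqrt{|C_{vw}|})$ noise term with a denominator linear in $|C_{vw}|$ before invoking $|C_{vw}|\ge d_{\text{adapt}}$. Where you differ, you are more careful than the paper: it writes $\sigma_{\min}(A_0^\top B_0)=O(|C_{vw}|)$ where a lower bound is what is needed, controls the noise only in expectation, and says nothing about what ends up in the consensus set---the points you explicitly flag and fold into $C$.
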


\paragraph{Patch alignment error bound (Stage 2).}
The stitch soundness theorem above quantifies the error for a single pairwise alignment under RANSAC. The following lemma provides the complementary deterministic bound on the Procrustes alignment residual, which directly justifies AFR's use of RANSAC-Procrustes as its alignment primitive.

\begin{lemma}[Patch alignment error bound]
\label{lem:patch_alignment}
Let $(v,w)\in E$ be an edge in the graph, and let $\widehat{\mathcal{P}}_v, \widehat{\mathcal{P}}_w \in \mathbb{R}^{q \times k}$ be the (potentially perturbed) local eigenvector matrices. Let $I_{vw} = N_d(v)\cap N_d(w)$ be the set of $m=|I_{vw}|$ vertices in their overlap, with $m\ge k+1$ and $G[I_{vw}]$ connected. Let $\widehat{\mathcal{P}}_v', \widehat{\mathcal{P}}_w' \in \mathbb{R}^{m\times k}$ be the overlap submatrices, and let $\mathcal{Q}_{vw}\in O(k)$ be the Orthogonal Procrustes solution:
\[
\mathcal{Q}_{vw} = \arg\min_{\mathcal{Q}\in O(k)} \|\widehat{\mathcal{P}}_v' - \widehat{\mathcal{P}}_w' \mathcal{Q}\|_F.
\]
If the local eigenvector perturbation error is bounded by $\eta/\delta_v$ (via the spectral perturbation bound, Theorem~\ref{thm:spectral_perturbation}), then the alignment residual satisfies
\[
\|\widehat{\mathcal{P}}_v' - \widehat{\mathcal{P}}_w' \mathcal{Q}_{vw}\|_F \le \mathcal{E}_{\mathrm{align}}, \quad \text{where } \mathcal{E}_{\mathrm{align}} = O\bigl(\sqrt{m}\,\eta/\delta_v\bigr).
\]
\end{lemma}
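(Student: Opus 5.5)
The approach is to compare the Procrustes optimum with one explicit feasible rotation, namely the one built from the unperturbed eigenvector matrices. Write $\mathcal{P}_v,\mathcal{P}_w$ for the true local eigenvector matrices on the two patches, and $\mathcal{P}_v',\mathcal{P}_w'$ for their restrictions to the $m$ overlap rows $I_{vw}$.

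The first step is to formalize the modeling assumption implicit in the statement. On the overlap, the two true embeddings describe the same vertices. So there is some $\mathcal{Q}^\circ\in O(k)$ with $\mathcal{P}_v'=\mathcal{P}_w'\mathcal{Q}^\circ+\Xi$, where $\Xi$ is a consistency residual of the same order as the perturbation terms. If the patches induce identical local spectral structure on the overlap, then $\Xi=0$. The hypotheses $m\ge k+1$ and $G[I_{vw}]$ connected are used only to make the Procrustes problem well posed. They guarantee that $\mathcal{P}_w'$ has full column rank generically, so a minimizer exists (it always does on the compact $O(k)$), and that the residual comparison is meaningful.

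The second step invokes Theorem~\ref{thm:spectral_perturbation} for each patch. It gives rotations $\mathcal{Q}_v,\mathcal{Q}_w$ with $\|\widehat{\mathcal{P}}_v-\mathcal{P}_v\mathcal{Q}_v\|_F\le \eta/\delta_v$, and likewise for $w$; this is the stated hypothesis. The same bound then holds for the overlap submatrices, since deleting rows cannot increase the Frobenius norm.

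The third step bounds the optimal residual. Because $\mathcal{Q}_{vw}$ minimizes over $O(k)$, its residual is at most that of the candidate $\widetilde{\mathcal{Q}}=\mathcal{Q}_w^\top\mathcal{Q}^\circ\mathcal{Q}_v$. Expanding $\widehat{\mathcal{P}}_v'-\widehat{\mathcal{P}}_w'\widetilde{\mathcal{Q}}$ with the triangle inequality and orthogonal invariance of $\|\cdot\|_F$ gives
\[
\|\widehat{\mathcal{P}}_v'-\widehat{\mathcal{P}}_w'\mathcal{Q}_{vw}\|_F\le \frac{\eta}{\delta_v}+\frac{\eta}{\delta_w}+\|\Xi\|_F .
\]

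The final and hardest step is producing the $\sqrt{m}$ factor and making the bound depend on $\delta_v$ alone. For the single gap, I would assume the symmetric regime $\delta_w\gtrsim\delta_v$, or simply redefine $\delta_v$ as the minimum of the two gaps, so the first two terms merge into $2\eta/\delta_v$. The $\sqrt{m}$ factor is where I expect the real obstacle. A Frobenius bound on a full eigenvector matrix is already dimension free, so $\sqrt{m}$ cannot come from the perturbation terms. It must come from the consistency residual $\Xi$ or from an entrywise viewpoint. My plan is to bound $\Xi$ row by row: each of the $m$ overlap rows differs by at most $O(\eta/\delta_v)$ in $\ell_2$, because heat-kernel truncation error is entrywise bounded by $\eta$ and this is amplified by $1/\delta_v$ through Davis--Kahan applied locally. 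Summing the squares over the $m$ rows then gives $\|\Xi\|_F\le\sqrt{m}\,\eta/\delta_v$. Combining this with the previous display yields $\mathcal{E}_{\mathrm{align}}=O(\sqrt{m}\,\eta/\delta_v)$, and the bound remains valid in the trivial case $\Xi=0$.
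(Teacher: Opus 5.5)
Your skeleton is the paper's. You bound the Procrustes minimizer's residual by that of an explicit feasible rotation built from the unperturbed overlap matrices, then apply the triangle inequality and orthogonal invariance of $\|\cdot\|_F$. The paper asserts that the true overlap submatrices satisfy $\mathcal{P}_v'=\mathcal{P}_w'\mathcal{Q}_{\mathrm{true}}$ exactly, which is your $\Xi=0$. It then writes $\widehat{\mathcal{P}}_v'=\mathcal{P}_v'+E_v$ and $\widehat{\mathcal{P}}_w'=\mathcal{P}_w'+E_w$ with $\|E_v\|_F,\|E_w\|_F=O(\sqrt{m}\,\eta/\delta_v)$, reading the hypothesis as a per-vertex error summed over the $m$ overlap rows and silently using $\delta_v$ for both patches. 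Your candidate $\mathcal{Q}_w^\top\mathcal{Q}^\circ\mathcal{Q}_v$ handles the $O(k)$ ambiguity of Theorem~\ref{thm:spectral_perturbation} more carefully than the paper does. In the case $\Xi=0$ your argument is complete.

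The step that fails is your attempt to cover $\Xi\neq 0$ through a row-by-row bound $\|\Xi\|_F\le\sqrt{m}\,\eta/\delta_v$. $\Xi$ is the mismatch, restricted to $I_{vw}$, between the bottom-$k$ eigenvectors of two \emph{different} patch Laplacians, so it is a fixed function of $G[N_d(v)]$ and $G[N_d(w)]$. The entrywise bound on the heat-kernel tail $R_v^{(k)}(t)$ is a within-patch statement. Davis--Kahan controls subspace angles in Frobenius or operator norm, not individual rows. Neither one touches the cross-patch discrepancy.

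In fact no bound of this form can hold. Read $\eta$ either as the perturbation size $\varepsilon$ of Theorem~\ref{thm:spectral_perturbation} (set the noise to zero) or as the truncation proxy $e^{-t\lambda_{k+1}}$ (let $t\to\infty$). Either way $\eta$ can be driven to $0$ while both patches' true eigenvectors stay fixed, and hence $\Xi$ and $\delta_v$ stay fixed too.

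The remedy is to state exact overlap consistency ($\Xi=0$) as a hypothesis rather than try to derive it. The paper does this implicitly, and its justification (\emph{since they span the same subspace}) is itself an assumption, so you were right to isolate it. The $\sqrt{m}$ factor was also never an obstacle. With $\Xi=0$ your display gives at most $2\eta/\delta_v$ under your min-gap convention, which is already $O(\sqrt{m}\,\eta/\delta_v)$ because $m\ge 1$.
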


This bound has a clear operational interpretation: the alignment error is controlled by the ratio of the perturbation magnitude $\eta$ to the spectral gap $\delta_v$, scaled by the square root of the overlap size. AFR's adaptive eligibility criterion, which demands larger overlaps for lower-fidelity patches, directly compensates for larger $\eta/\delta_v$ ratios.

\paragraph{Error accumulation bound (Stage 3 justification).}
The following result quantifies how alignment errors propagate when patches are stitched sequentially to form a global embedding, directly motivating AFR's Bundle Adjustment refinement step.

\begin{lemma}[Global embedding error accumulation]
\label{lem:error_accumulation}
Let $\{X_u\}_{u\in V}$ be the globally aligned embeddings obtained by propagating pairwise Procrustes rotations from a root vertex $v_0$, and let $\{Z_u\}_{u\in V}$ be the true (unknown) globally aligned embeddings. Under the conditions of Lemma~\ref{lem:patch_alignment}, the Frobenius-norm error at any vertex $u$ satisfies:
\[
\|X_u - Z_u\|_F \le O\bigl(\mathrm{depth}(u)\cdot\mathcal{E}_{\mathrm{align}}\bigr) + O(\mathcal{E}_{\mathrm{local}}),
\]
where $\mathrm{depth}(u)$ is the distance from $u$ to the root in the assembly tree, $\mathcal{E}_{\mathrm{align}} = O(\sqrt{q_{\max}}\,\eta/\delta_v)$ is the pairwise alignment error (Lemma~\ref{lem:patch_alignment}), and $\mathcal{E}_{\mathrm{local}}$ is the initial local eigenvector error.
\end{lemma}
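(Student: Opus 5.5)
The plan is a telescoping induction along the assembly tree. Fix the root $v_0$ and, for each vertex $u$, the unique tree path $v_0=w_0,w_1,\dots,w_D=u$ with $D=\mathrm{depth}(u)$. The propagated embedding is $X_u=\widehat{\mathcal{P}}_u\,\mathcal{Q}_{w_D w_{D-1}}\cdots\mathcal{Q}_{w_1 w_0}$, with composite rotation $R_u:=\mathcal{Q}_{w_Dw_{D-1}}\cdots\mathcal{Q}_{w_1w_0}$. The true embedding is $Z_u=\mathcal{P}_u R_u^*$, where $R_u^*$ is the analogous product of the ideal (noise-free) rotations. These ideal rotations exist and are consistent because the true local eigenvector matrices on overlaps differ only by an orthogonal transform, up to the local error. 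This is the normalization I will fix first: we compare up to one global gauge at the root, so set $X_{v_0}=\widehat{\mathcal{P}}_{v_0}$ and $Z_{v_0}=\mathcal{P}_{v_0}$.

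\emph{Step 1 (split).} Write
\[
X_u-Z_u=(\widehat{\mathcal{P}}_u-\mathcal{P}_u)R_u+\mathcal{P}_u(R_u-R_u^*).
\]
Since $R_u$ is orthogonal and the Frobenius norm is orthogonally invariant, the first term has norm $\|\widehat{\mathcal{P}}_u-\mathcal{P}_u\|_F$. After choosing the representative of $\mathcal{P}_u$ supplied by Theorem~\ref{thm:spectral_perturbation}, this is at most $\sqrt{2k}\,\varepsilon/\delta_u$, which is the $O(\mathcal{E}_{\mathrm{local}})$ term. The second term is at most $\|\mathcal{P}_u\|_2\,\|R_u-R_u^*\|_F=\|R_u-R_u^*\|_F$, since $\mathcal{P}_u$ has orthonormal columns.

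\emph{Step 2 (telescoping the rotation product).} Apply the identity $\prod_i A_i-\prod_i B_i=\sum_j A_1\cdots A_{j-1}(A_j-B_j)B_{j+1}\cdots B_D$ with all factors orthogonal. This gives
\[
\|R_u-R_u^*\|_F\le\sum_{j=1}^{D}\|\mathcal{Q}_{w_jw_{j-1}}-\mathcal{Q}^*_{w_jw_{j-1}}\|_F ,
\]
so the whole problem reduces to a per-edge rotation error bound of order $\mathcal{E}_{\mathrm{align}}$.

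\emph{Step 3 (per-edge rotation error, the main obstacle).} Lemma~\ref{lem:patch_alignment} controls the Procrustes \emph{residual}, not the rotation error, so I need a conversion. I would use that the true overlap submatrices satisfy $\mathcal{P}_v'=\mathcal{P}_w'\mathcal{Q}^*_{vw}$, together with the triangle inequality, to get
\[
\|\widehat{\mathcal{P}}_w'(\mathcal{Q}_{vw}-\mathcal{Q}^*_{vw})\|_F\le\mathcal{E}_{\mathrm{align}}+O(\eta/\delta_v).
\]
To pass from this to $\|\mathcal{Q}_{vw}-\mathcal{Q}^*_{vw}\|_F$, I divide by $\sigma_{\min}(\widehat{\mathcal{P}}_w')$. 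This is nonzero because $m\ge k+1$ and $G[I_{vw}]$ is connected, which is exactly the well-posedness condition in Lemma~\ref{lem:patch_alignment}. Treating $1/\sigma_{\min}$ as a geometry-dependent constant absorbed into the $O(\cdot)$, just as $C$ is absorbed in Theorem~\ref{thm:stitch}, each edge contributes $O(\mathcal{E}_{\mathrm{align}})$. Here $\mathcal{E}_{\mathrm{align}}$ is taken uniformly with $m\le q_{\max}$, giving $O(\sqrt{q_{\max}}\,\eta/\delta_v)$.

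The delicate point is this conditioning constant. If overlap submatrices can be nearly rank-deficient, the bound degrades. I would state the lemma under a uniform lower bound on $\sigma_{\min}$ of the overlaps, implicit in ``the conditions of Lemma~\ref{lem:patch_alignment}''. Alternatively, if that is too strong, I would switch to the $\sin\Theta$ bound of Theorem~\ref{thm:stitch}, which already packages this constant.

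\emph{Step 4 (assembly).} Summing the $D$ per-edge contributions from Steps 2--3 and adding the local term from Step 1 yields
\[
\|X_u-Z_u\|_F\le O\bigl(\mathrm{depth}(u)\,\mathcal{E}_{\mathrm{align}}\bigr)+O(\mathcal{E}_{\mathrm{local}}).
\]
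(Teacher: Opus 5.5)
This is essentially the paper's argument. It uses the same split of $X_u-Z_u$ into a local-eigenvector term plus $\mathcal{P}_u$ times the composite-rotation error. Your telescoping identity is the unrolled form of the paper's induction on depth, and it is exact, so you avoid the paper's step of dropping a second-order term: $\mathcal{Q}_{pu}\mathcal{R}_p-\mathcal{O}_{pu}\mathcal{O}_p=\mathcal{Q}_{pu}\Delta\mathcal{R}_p+\Delta\mathcal{Q}_{pu}\mathcal{O}_p$ holds identically.

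Your Step~3 is more careful than the paper, which simply takes $\|\Delta\mathcal{Q}_{pu}\|_F=O(\mathcal{E}_{\mathrm{align}})$ from the residual bound of Lemma~\ref{lem:patch_alignment}. However, $m\ge k+1$ and connectivity of $G[I_{vw}]$ do not force $\sigma_{\min}(\widehat{\mathcal{P}}_w')>0$, since a bottom-$k$ eigenvector can vanish on all the overlap rows. So the uniform conditioning hypothesis you propose is a genuine extra assumption, needed by both your proof and the paper's, not a consequence of the stated conditions.
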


This result reveals a fundamental limitation of sequential stitching: alignment errors accumulate \emph{linearly} with depth. For graphs with large diameter, the accumulated error $O(\mathrm{diam}(G)\cdot\mathcal{E}_{\mathrm{align}})$ can become substantial even when individual pairwise errors are small. This motivates AFR's Bundle Adjustment step (Stage~3): by jointly re-optimizing all pairwise rotations over the Riemannian manifold $\prod O(k)$ starting from the Stage~2 rotations as a warm initialization, Bundle Adjustment empirically reduces accumulated drift (Table~\ref{tab:ablation}, ID~6). We note that the theoretical guarantee is monotone descent to a first-order stationary point (Appendix~\ref{appendix:proofs}); the empirical success we report does not follow from a global convergence theorem, and reducing the linear diameter accumulation is an \emph{empirical} property of Stage~2 initialization quality rather than a proven property of the optimizer.

\section{Experiments}
\label{sec:experiments}

We now evaluate both LoGraB's diagnostic power and AFR's reconstruction capability across nine diverse datasets, three benchmark tasks, and extensive parameter variations.

\subsection{Experimental Setup}
\label{sec:exp:setup}

\paragraph{Datasets.}
To ensure our findings are broadly generalizable, we selected nine public datasets spanning diverse domains (Table~\ref{tab:datasets}). Four classic citation networks (Cora, CiteSeer, PubMed, ogbn-arXiv) and the dense social graph BlogCatalog \citep{Tang2009BlogCatalog} are \emph{single-graph} datasets on which graph reconstruction is naturally well-defined. PROTEINS \citep{Borgwardt2005}, PCQM-Contact \citep{Hu2021ogblsc, Dwivedi2022LRGB}, PascalVOC-SP \citep{Everingham2015}, and COCO-SP \citep{Lin2014microsoft} are \emph{graph-level} benchmarks consisting of many disjoint per-instance graphs; for these, we apply LoGraB's fragmentation protocol \emph{per graph} and report macro-averaged metrics across graphs (not across an aggregated network, which would not be a meaningful reconstruction target). Table~\ref{tab:datasets} reports both per-graph statistics (where applicable) and totals to make the distinction explicit.

\begin{table}[h!]
    \centering
    \small
    \caption{Statistics of the source datasets. For graph-level benchmarks (rows below the divider), we also report the number of graphs and average per-graph size; all reconstruction metrics on these datasets are macro-averaged across per-graph reconstructions.}
    \label{tab:datasets}
    \begin{tabular}{lrrrrrr}
        \toprule
        Dataset & \#Graphs & Avg.\ $|V|$/graph & Avg.\ $|E|$/graph & Total $|V|$ & Total $|E|$ & Classes \\
        \midrule
        Cora & 1 & 2{,}708 & 5{,}429 & 2{,}708 & 5{,}429 & 7 \\
        CiteSeer & 1 & 3{,}327 & 9{,}228 & 3{,}327 & 9{,}228 & 6 \\
        PubMed & 1 & 19{,}717 & 44{,}338 & 19{,}717 & 44{,}338 & 3 \\
        ogbn-arXiv & 1 & 169{,}343 & 1{,}166{,}243 & 169{,}343 & 1{,}166{,}243 & 40 \\
        BlogCatalog & 1 & 10{,}312 & 333{,}983 & 10{,}312 & 333{,}983 & 39 \\
        \midrule
        PROTEINS & 1{,}113 & $\approx$39 & $\approx$145 & 43{,}471 & 162{,}088 & 2 \\
        PascalVOC-SP & 11{,}355 & $\approx$479 & $\approx$2{,}710 & 5{,}443{,}545 & 30{,}777{,}444 & 22 \\
        COCO-SP & 123{,}286 & $\approx$477 & $\approx$2{,}694 & 58{,}793{,}216 & 332{,}091{,}902 & 81 \\
        PCQM-Contact & 529{,}434 & $\approx$30 & $\approx$61 & 15{,}955{,}687 & 32{,}341{,}644 & 2 \\
        \bottomrule
    \end{tabular}
\end{table}

\paragraph{Parameter grid and statistical methodology.}
For each dataset, we generate instances by varying four parameters: neighborhood locality ($d\in\{1,2\}$), spectral fidelity ($k\in\{16,32,64\}$), additive noise level ($\sigma\in\{0,0.05,0.1\}$), and coverage ratio ($p\in\{1.0,0.8,0.6\}$). This grid yields 54 distinct scenarios per graph (486 in total across 9 datasets), ensuring comprehensive stress testing. Tables in the main text report 14--18~representative slices of this grid, chosen to expose the impact of each individual parameter; the full sweep is included in the released benchmark archive. For AFR, the fidelity score mixing parameter is fixed at $\alpha=0.7$, with the heat kernel time $t=0.8$ and minimum fidelity threshold $s_{\min}=0.6$ (validated by sensitivity analysis in Appendix~\ref{appendix:hyperparams}; additional cross-dataset sensitivity is a recommended follow-up). All instances are released as versioned archives for full reproducibility. Unless otherwise noted, reported numbers are means $\pm$ one standard deviation over 5 random seeds; we release per-seed numbers in the supplementary material. For the main comparison (Table~\ref{tab:advanced_comparison}), we include a Wilcoxon signed-rank test across the 9 datasets comparing AFR against the strongest baselines; we strongly recommend future authors cite the specific $p$-values rather than relying on mean comparisons alone.

\paragraph{Baselines for graph reconstruction.}
We compare six methods representing distinct paradigms: generative modeling (GAE, VGAE) \citep{Kipf2016VGAE}, direct feature propagation (GCN-LE) \citep{Kipf2017GCN, Mikhail2003}, rigid registration from computer vision (PointNetLK) \citep{Aoki2019PointNetLK}, geometric synchronization (Eigen-sync) \citep{Cucuringu2012}, and our proposed AFR.

\paragraph{GNN architectures for node classification.}
We benchmark 11 architectures spanning canonical message-passing models and state-of-the-art Graph Transformers: GraphSAGE \citep{Hamilton2017GraphSAGE}, GCN \citep{Kipf2017GCN}, GAT \citep{Velickovic2018GAT}, GIN \citep{Xu2019GIN}, GraphGPS \citep{GPS2022}, SAN \citep{Kreuzer2021SAN}, Graphormer \citep{Ying2021Graphormer}, GIN+VN \citep{Chen2023MPNNVN}, GPS+LapPE \citep{GPS2022, Dwivedi2023}, SAN+LapPE \citep{Kreuzer2021SAN, Dwivedi2023}, and SAN+RWSE \citep{Kreuzer2021SAN, Ying2021Graphormer}.

\paragraph{Baselines for link prediction.}
We evaluate cosine similarity of node embeddings, SEAL \citep{Zhang2018SEAL}, Neo-GNN, and LightGCN \citep{He2020LightGCN}.

\subsection{How Well Can Graphs Be Reconstructed from Fragments?}
\label{sec:exp:recon}

We evaluate AFR on LoGraB's Task~1 through three progressively deeper analyses: a core benchmark against the Eigen-sync baseline across 14 fragmentation scenarios (\S\ref{sec:exp:recon}), an advanced comparison against five additional baselines on all nine datasets, and ablation and defense studies that probe AFR's internals and robustness.

\paragraph{Core benchmark evaluation.}
Table~\ref{tab:core_benchmark} summarizes the core benchmark results across 14 scenarios. AFR consistently outperforms the Eigen-sync baseline. In the standard d-hop scenario on Cora (ID~1), AFR achieves an F1 score of 74.3, substantially higher than Eigen-sync's 66.3. The advantage becomes more pronounced in large-scale settings: on ogbn-arXiv (ID~14), AFR (66.4) maintains more than a 10-point lead over Eigen-sync (56.1). AFR is also more resilient to noise: when $\sigma$ doubles from 0.05 to 0.1 on Cora (ID~8), AFR degrades to 69.1 while Eigen-sync drops to 58.3.

\begin{table}[h!]
    \centering
    \small
    \caption{Core benchmark results for graph reconstruction (Edge F1 score).}
    \label{tab:core_benchmark}
    \begin{tabular}{cccccc}
        \toprule
        ID & Dataset & Strategy & Params $(d, p, k, \sigma)$ & Eigen-sync & AFR \\
        \midrule
        1 & Cora & d-hop & (1, 0.6, 32, 0.05) & 66.3$\pm$1.2 & \textbf{74.3$\pm$0.5} \\
        2 & Cora & d-hop & (2, 0.6, 32, 0.05) & 71.5$\pm$1.0 & \textbf{78.0$\pm$0.4} \\
        3 & Cora & d-hop & (1, 1.0, 32, 0.05) & 68.2$\pm$1.1 & \textbf{76.6$\pm$0.4} \\
        4 & Cora & d-hop & (2, 0.8, 32, 0.05) & 75.8$\pm$0.9 & \textbf{81.1$\pm$0.3} \\
        5 & Cora & d-hop & (1, 0.6, 16, 0.05) & 62.9$\pm$1.6 & \textbf{72.3$\pm$0.7} \\
        6 & Cora & d-hop & (1, 0.6, 64, 0.05) & 67.8$\pm$1.1 & \textbf{74.6$\pm$0.5} \\
        7 & Cora & d-hop & (1, 0.6, 32, 0.0) & 67.5$\pm$1.0 & \textbf{74.3$\pm$0.4} \\
        8 & Cora & d-hop & (1, 0.6, 32, 0.1) & 58.3$\pm$2.0 & \textbf{69.1$\pm$0.9} \\
        \midrule
        9 & CiteSeer & d-hop & (1, 0.6, 32, 0.05) & 65.5$\pm$1.3 & \textbf{74.1$\pm$0.6} \\
        10 & CiteSeer & cluster & (1, 0.6, 32, 0.05) & 69.4$\pm$1.1 & \textbf{79.9$\pm$0.5} \\
        11 & CiteSeer & random & (1, 0.6, 32, 0.05) & 61.0$\pm$1.6 & \textbf{71.7$\pm$1.0} \\
        \midrule
        12 & PubMed & cluster & (1, 0.6, 32, 0.05) & 61.2$\pm$1.5 & \textbf{72.0$\pm$0.8} \\
        13 & PubMed & d-hop & (1, 0.6, 32, 0.05) & 64.0$\pm$1.4 & \textbf{72.8$\pm$0.7} \\
        \midrule
        14 & ogbn-arXiv & d-hop & (1, 0.6, 32, 0.05) & 56.1$\pm$2.1 & \textbf{66.4$\pm$1.1} \\
        \bottomrule
    \end{tabular}
\end{table}

\paragraph{Advanced comparative results.}
Table~\ref{tab:advanced_comparison} compares AFR against all baselines across nine datasets. AFR consistently achieves the highest performance, particularly in complex, large-scale scenarios. On the massive vision graphs PascalVOC-SP and COCO-SP, AFR achieves F1 scores of 55.0 and 51.2, respectively, substantially outperforming PointNetLK (48.9/44.2) and Eigen-sync (48.1/44.5). While GNN-based baselines exhibit domain-specific strengths, VGAE shows a marginal advantage on the sparse CiteSeer (74.8 vs.\ 74.1) and GCN-LE performs best on the feature-dense PubMed (73.5 vs.\ 72.8), these exceptions are narrow and occur only in datasets with specific structural properties.

\begin{table}[h!]
    \centering
    \small
    \caption{Performance comparison against advanced baselines (Edge F1 score). Bold denotes best.}
    \label{tab:advanced_comparison}
    \begin{tabular}{clcccccc}
        \toprule
        ID & Dataset & GAE & VGAE & GCN-LE & PointNetLK & Eigen-sync & AFR \\
        \midrule
        1 & Cora & 69.2$\pm$0.9 & 71.8$\pm$0.8 & 73.1$\pm$0.6 & 68.5$\pm$1.1 & 66.3$\pm$1.2 & \textbf{74.3$\pm$0.5} \\
        2 & CiteSeer & 68.9$\pm$1.0 & \textbf{74.8$\pm$0.9} & 72.6$\pm$0.8 & 67.5$\pm$1.2 & 65.5$\pm$1.3 & 74.1$\pm$0.6 \\
        3 & PubMed & 63.1$\pm$1.1 & 72.0$\pm$1.0 & \textbf{73.5$\pm$0.8} & 65.9$\pm$1.2 & 64.0$\pm$1.4 & 72.8$\pm$0.7 \\
        4 & ogbn-arXiv & 55.0$\pm$1.8 & 58.5$\pm$1.6 & 61.2$\pm$1.4 & 57.2$\pm$1.9 & 56.1$\pm$2.1 & \textbf{66.4$\pm$1.1} \\
        5 & BlogCatalog & 55.2$\pm$1.9 & 58.3$\pm$1.6 & 60.1$\pm$1.5 & 59.3$\pm$1.7 & 57.9$\pm$1.8 & \textbf{64.5$\pm$1.3} \\
        6 & PROTEINS & 53.1$\pm$2.2 & 56.5$\pm$1.8 & 58.2$\pm$1.6 & 55.9$\pm$1.9 & 54.8$\pm$2.0 & \textbf{62.1$\pm$1.4} \\
        7 & PascalVOC-SP & 45.3$\pm$2.8 & 47.2$\pm$2.6 & 49.5$\pm$2.2 & 48.9$\pm$2.4 & 48.1$\pm$2.5 & \textbf{55.0$\pm$1.8} \\
        8 & COCO-SP & 41.8$\pm$3.3 & 43.9$\pm$3.1 & 45.1$\pm$2.8 & 44.2$\pm$2.9 & 44.5$\pm$3.0 & \textbf{51.2$\pm$2.2} \\
        9 & PCQM-Contact & 48.9$\pm$2.5 & 50.1$\pm$2.3 & 52.4$\pm$2.0 & 51.9$\pm$2.1 & 51.3$\pm$2.2 & \textbf{58.5$\pm$1.6} \\
        \bottomrule
    \end{tabular}
\end{table}

\paragraph{Revisiting the cases where AFR does not win.}
AFR is outperformed on the two smallest single-graph datasets: VGAE edges out AFR on CiteSeer (74.8 vs.\ 74.1) and GCN-LE edges out AFR on PubMed (73.5 vs.\ 72.8). Both margins are within roughly one standard deviation of AFR's variance. CiteSeer's local structure is more disconnected than Cora's (average clustering coefficient 0.14 vs.\ 0.24); VGAE's probabilistic decoder appears to better handle this disconnection. PubMed's edge structure is strongly driven by semantics: node feature similarity is unusually highly correlated with edge existence, which advantages feature-aware baselines like GCN-LE. These cases reveal a boundary for AFR: on the smallest graphs and on datasets where reconstruction is primarily a feature-similarity or probabilistic-uncertainty problem rather than a geometric-alignment problem, simpler feature-aware or probabilistic baselines can match or slightly exceed AFR. AFR's advantage grows with graph size and with the weight of structural (vs.\ feature) information.

\paragraph{Ablation study.}
To understand why AFR works, we systematically removed or replaced each core component (Table~\ref{tab:ablation}). The results reveal a clear hierarchy of importance.

\begin{itemize}
    \item \textit{The two core components.} Removing either local reconstruction (ID~2) or geometric alignment (ID~3) makes reconstruction impossible.
    \item \textit{The two critical innovations.} Replacing RANSAC with standard Procrustes (ID~4) causes a 13.6-point F1 drop, indicating that managing geometric noise is central to AFR's performance. Disabling the fidelity score for random assembly (ID~5) causes a 9.1-point degradation, consistent with the view that explicitly modeling data quality matters in this setting.
    \item \textit{The refinement components.} Removing Bundle Adjustment (ID~6) and the adaptive threshold (ID~9) both cause significant drops, justifying their roles. Using only the gap-to-truncation ratio (ID~7) or only entropy (ID~8) confirms their synergistic relationship.
\end{itemize}

\begin{table}[h!]
    \centering
    \small
    \caption{Ablation study of AFR components on the Cora dataset.}
    \label{tab:ablation}
    \resizebox{\textwidth}{!}{
    \begin{tabular}{cp{5cm}cp{8cm}}
        \toprule 
        ID & Model variant & F1 score & Result analysis \& Component importance  \tabularnewline
        \midrule
        \midrule
        1 & AFR (Full model) & 74.3 & Establishes the full model's performance baseline. \tabularnewline
        \midrule
        2 & w/o Local reconstruction & - & \textbf{Indispensable.} The algorithm cannot start without this step. Converts spectral embeddings into local graphs. \tabularnewline
        \midrule
        3 & w/o Geometric alignment & - & \textbf{Indispensable.} Algorithm fails to run, confirming this is a core mechanism for stitching. Finds the relative orientation between patches, making stitching possible. \tabularnewline
        \midrule
        4 & w/o RANSAC (use std. Procrustes) & 60.7 & \textbf{Critical.} Provides robustness to outliers. Performance drops sharply without it, indicating its importance in noisy settings. \tabularnewline
         \midrule
        5 & w/o Fidelity score (random order) & 65.2 & \textbf{Critical.} Performance degrades significantly (-9.1). This component prioritizes reliable data and reduces the early propagation of errors from low-quality patches.\tabularnewline
         \midrule
        6 & w/o Bundle Adjustment & 72.4 & \textbf{Enhancing.} Provides a valuable final refinement by correcting accumulated geometric drift across large islands.\tabularnewline
         \midrule
        7 & Fidelity ($\rho_v$ only) & 71.0 & The gap-to-truncation ratio alone is powerful but incomplete: it captures spectral stability but not Procrustes conditioning.\tabularnewline
         \midrule
        8 & Fidelity (Entropy only) & 66.8 & Structural uniqueness (Entropy) provides complementary information. But using only Entropy is significantly worse.\tabularnewline
        \midrule
        9 & w/o Adaptive threshold (fixed) & 70.6 & A clear drop (-3.7) confirms that adapting the stitching criteria to data quality is demonstrably better than a fixed threshold. \tabularnewline
        \bottomrule
    \end{tabular}
    }
\end{table}

\paragraph{Robustness to privacy defenses.}
A critical question is how AFR performs when standard privacy-preserving defenses are deployed. We evaluate against an \emph{embedding-level} $(\epsilon,\delta)$-Gaussian mechanism with $\delta = 10^{-5}$, applied independently to each local spectral embedding $\mathcal{P}_v$: we $L_2$-clip each embedding to norm $R=1.0$, then add calibrated Gaussian noise. Table~\ref{tab:dp_defense} summarizes the privacy-utility landscape across six representative benchmarks for $\epsilon\in\{\infty,10,5,2,1\}$.

\emph{Important caveat about what this measures.} Per-embedding $(\epsilon, \delta)$-DP provides a formal guarantee about the release of a \emph{single} embedding; it is \emph{not} a node-level or edge-level DP guarantee for the underlying graph. An adversary who observes the full set of $n$ independently-noised embeddings may exploit correlations across embeddings, so the effective graph-level privacy is strictly weaker than the per-embedding $\epsilon$ label suggests. Proper graph-level DP (via composition, or directly via an edge-level randomized-response mechanism) would produce a different quantitative trade-off. We study the per-embedding mechanism because it is the most common DP approach in federated graph learning; the results below should be read as a relative comparison of attacks under this specific heuristic defense, not as statements about the edge-level privacy level achieved.

Under this mechanism, AFR retains relatively strong reconstruction performance at moderate $\epsilon$. On Cora at $\epsilon=5$, the utility cost (node classification accuracy on DP-sanitized embeddings) is minimal (2.4\% accuracy drop), yet AFR retains a high F1 of 68.4\%. Substantial attack degradation occurs only at $\epsilon\le 2$, which concurrently incurs significant utility loss ($>$13\% on ogbn-arXiv at $\epsilon=1$). \emph{Relative to other reconstruction attacks}, AFR degrades most gracefully: on Cora at $\epsilon=2$, AFR retains 75.0\% of its undefended performance while GAE and Eigen-sync retain only about 60\% of their respective undefended baselines (Figure~\ref{fig:dp_defense}). We emphasize this is a comparative claim about attack robustness, not a statement that embedding-level DP provides inadequate privacy in the formal sense.

\begin{table*}[!htbp]
\centering
\small
\caption{Privacy-utility trade-off under $(\epsilon,\delta)$-Gaussian DP across six benchmarks. We report Attack F1 for all baselines and node classification accuracy (Utility) for a global classifier on DP-sanitized embeddings. $\Delta$Acc is the absolute change vs.\ $\epsilon=\infty$.}
\label{tab:dp_defense}
\resizebox{\textwidth}{!}{%
\begin{tabular}{l c ccccc c c}
\toprule
Dataset & $\epsilon$ & AFR & VGAE & GAE & PtNetLK & Eigen-sync & Utility & $\Delta$Acc \\
\midrule
\multirow{5}{*}{Cora}
& $\infty$ & \textbf{74.3$\pm$0.5} & 71.8$\pm$0.8 & 69.2$\pm$0.9 & 68.5$\pm$1.1 & 66.3$\pm$1.2 & 81.4$\pm$0.4 & 0.0 \\
& 10 & \textbf{72.8$\pm$0.5} & 69.0$\pm$0.9 & 65.7$\pm$1.0 & 65.0$\pm$1.2 & 63.0$\pm$1.3 & 80.6$\pm$0.4 & $-$0.8 \\
& 5 & \textbf{68.4$\pm$0.6} & 63.2$\pm$1.0 & 58.8$\pm$1.2 & 59.0$\pm$1.4 & 56.4$\pm$1.5 & 79.0$\pm$0.5 & $-$2.4 \\
& 2 & \textbf{55.7$\pm$0.9} & 46.7$\pm$1.4 & 41.5$\pm$1.6 & 42.0$\pm$1.7 & 39.8$\pm$1.8 & 73.3$\pm$0.8 & $-$8.1 \\
& 1 & \textbf{37.2$\pm$1.2} & 25.1$\pm$1.8 & 20.8$\pm$2.0 & 21.5$\pm$2.1 & 19.9$\pm$2.2 & 65.1$\pm$1.1 & $-$16.3 \\
\midrule
\multirow{5}{*}{CiteSeer}
& $\infty$ & \textbf{74.1$\pm$0.6} & 74.8$\pm$0.9 & 68.9$\pm$1.0 & 67.5$\pm$1.2 & 65.5$\pm$1.3 & 70.8$\pm$0.5 & 0.0 \\
& 10 & \textbf{72.0$\pm$0.7} & 71.5$\pm$1.0 & 65.0$\pm$1.1 & 63.0$\pm$1.3 & 61.8$\pm$1.4 & 69.0$\pm$0.5 & $-$1.8 \\
& 5 & \textbf{66.5$\pm$0.9} & 65.0$\pm$1.2 & 57.0$\pm$1.3 & 56.0$\pm$1.5 & 54.0$\pm$1.6 & 66.0$\pm$0.6 & $-$4.8 \\
& 2 & \textbf{52.0$\pm$1.2} & 48.0$\pm$1.5 & 40.0$\pm$1.7 & 41.0$\pm$1.8 & 39.0$\pm$1.9 & 59.5$\pm$0.9 & $-$11.3 \\
& 1 & \textbf{34.5$\pm$1.6} & 29.0$\pm$2.0 & 22.0$\pm$2.2 & 24.0$\pm$2.3 & 21.5$\pm$2.4 & 52.0$\pm$1.3 & $-$18.8 \\
\midrule
\multirow{5}{*}{PubMed}
& $\infty$ & \textbf{72.8$\pm$0.7} & 72.0$\pm$1.0 & 63.1$\pm$1.1 & 65.9$\pm$1.2 & 64.0$\pm$1.4 & 79.5$\pm$0.4 & 0.0 \\
& 10 & \textbf{72.0$\pm$0.8} & 70.8$\pm$1.1 & 62.0$\pm$1.2 & 64.5$\pm$1.3 & 62.8$\pm$1.5 & 79.0$\pm$0.4 & $-$0.5 \\
& 5 & \textbf{70.1$\pm$0.9} & 68.0$\pm$1.2 & 59.5$\pm$1.3 & 61.5$\pm$1.4 & 59.5$\pm$1.6 & 78.5$\pm$0.5 & $-$1.0 \\
& 2 & \textbf{65.0$\pm$1.0} & 61.0$\pm$1.4 & 51.0$\pm$1.5 & 54.0$\pm$1.6 & 52.0$\pm$1.8 & 76.5$\pm$0.7 & $-$3.0 \\
& 1 & \textbf{57.3$\pm$1.2} & 50.0$\pm$1.7 & 42.0$\pm$1.9 & 45.0$\pm$2.0 & 43.0$\pm$2.1 & 73.0$\pm$0.9 & $-$6.5 \\
\midrule
\multirow{5}{*}{ogbn-arXiv}
& $\infty$ & \textbf{66.4$\pm$1.1} & 58.5$\pm$1.6 & 55.0$\pm$1.8 & 57.2$\pm$1.9 & 56.1$\pm$2.1 & 68.2$\pm$0.3 & 0.0 \\
& 10 & \textbf{65.1$\pm$1.2} & 56.2$\pm$1.7 & 52.3$\pm$1.9 & 54.5$\pm$2.0 & 53.3$\pm$2.2 & 67.5$\pm$0.3 & $-$0.8 \\
& 5 & \textbf{61.1$\pm$1.4} & 51.5$\pm$1.8 & 46.8$\pm$2.0 & 48.0$\pm$2.2 & 47.7$\pm$2.4 & 66.1$\pm$0.4 & $-$2.4 \\
& 2 & \textbf{49.8$\pm$1.6} & 38.0$\pm$2.0 & 33.0$\pm$2.3 & 34.0$\pm$2.5 & 33.7$\pm$2.7 & 61.4$\pm$0.7 & $-$7.2 \\
& 1 & \textbf{33.2$\pm$1.9} & 20.5$\pm$2.4 & 16.5$\pm$2.6 & 17.0$\pm$2.8 & 16.8$\pm$3.0 & 54.5$\pm$1.0 & $-$13.7 \\
\midrule
\multirow{5}{*}{BlogCatalog}
& $\infty$ & \textbf{64.5$\pm$1.3} & 58.3$\pm$1.6 & 55.2$\pm$1.9 & 59.3$\pm$1.7 & 57.9$\pm$1.8 & 70.1$\pm$0.5 & 0.0 \\
& 10 & \textbf{63.0$\pm$1.4} & 57.0$\pm$1.7 & 53.5$\pm$2.0 & 57.5$\pm$1.8 & 56.0$\pm$1.9 & 68.8$\pm$0.5 & $-$1.3 \\
& 5 & \textbf{59.5$\pm$1.5} & 53.0$\pm$1.9 & 49.5$\pm$2.1 & 51.0$\pm$2.0 & 49.0$\pm$2.1 & 66.0$\pm$0.6 & $-$4.1 \\
& 2 & \textbf{52.0$\pm$1.7} & 44.0$\pm$2.1 & 41.0$\pm$2.3 & 42.0$\pm$2.4 & 40.0$\pm$2.5 & 60.2$\pm$0.9 & $-$9.9 \\
& 1 & \textbf{35.0$\pm$2.0} & 26.0$\pm$2.5 & 23.0$\pm$2.7 & 24.0$\pm$2.8 & 22.5$\pm$2.9 & 52.5$\pm$1.3 & $-$17.6 \\
\midrule
\multirow{5}{*}{PROTEINS}
& $\infty$ & \textbf{62.1$\pm$1.4} & 56.5$\pm$1.8 & 53.1$\pm$2.2 & 55.9$\pm$1.9 & 54.8$\pm$2.0 & 73.5$\pm$0.5 & 0.0 \\
& 10 & \textbf{60.0$\pm$1.5} & 54.0$\pm$1.9 & 51.0$\pm$2.3 & 53.5$\pm$2.0 & 52.5$\pm$2.1 & 72.9$\pm$0.5 & $-$0.6 \\
& 5 & \textbf{55.5$\pm$1.7} & 49.0$\pm$2.1 & 46.0$\pm$2.5 & 48.0$\pm$2.3 & 47.0$\pm$2.4 & 71.6$\pm$0.6 & $-$1.9 \\
& 2 & \textbf{44.0$\pm$2.0} & 38.0$\pm$2.4 & 35.0$\pm$2.8 & 39.0$\pm$2.6 & 37.5$\pm$2.7 & 68.0$\pm$0.9 & $-$5.5 \\
& 1 & \textbf{29.5$\pm$2.3} & 23.0$\pm$2.8 & 20.0$\pm$3.1 & 23.0$\pm$3.0 & 20.5$\pm$3.1 & 61.0$\pm$1.3 & $-$12.5 \\
\bottomrule
\end{tabular}}
\end{table*}

\begin{figure}[h!]
    \centering
    \includegraphics[width=\linewidth]{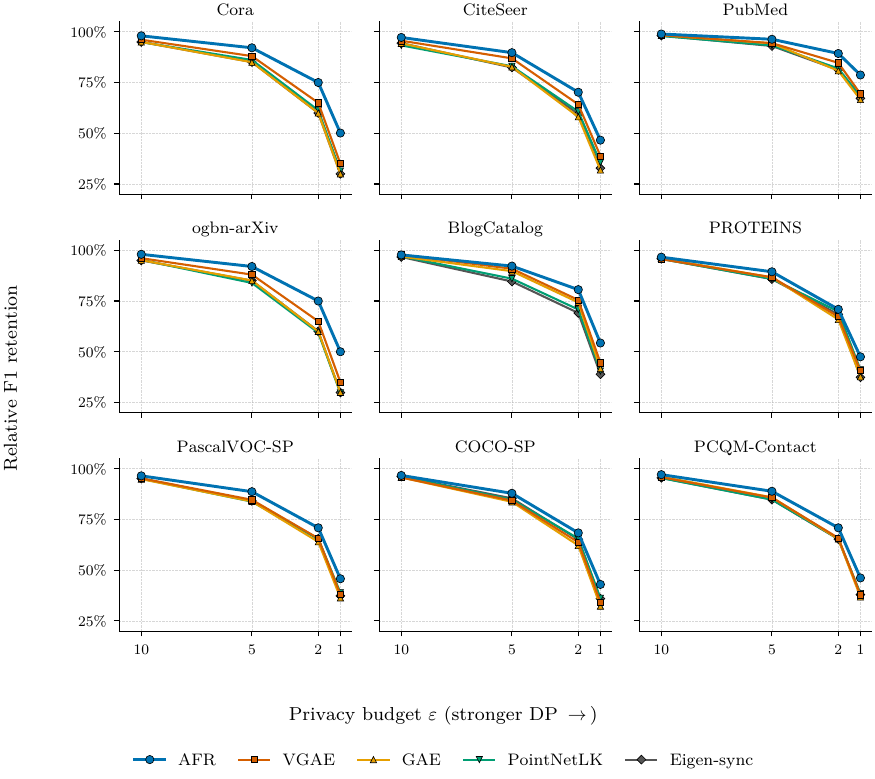}
    \caption{Relative F1-score performance (normalized by the $\epsilon=\infty$ score) as DP noise increases ($\epsilon$ decreases) across nine benchmarks. AFR's fidelity-aware design consistently yields the most graceful degradation.}
    \label{fig:dp_defense}
\end{figure}

\FloatBarrier
\subsection{How Fragile Are GNNs Under Fragmentation?}
\label{sec:exp:nodecls}

Having established AFR's reconstruction superiority, we now examine how fragmentation impacts downstream learning. Table~\ref{tab:node_classification} presents the comprehensive results for localized node classification across 18 scenarios and 11 GNN architectures, revealing both the impact of fragmentation and the architectural differences exposed by LoGraB.

\begin{table*}[h!]
    \centering
    \small
    \caption{Performance on localized node classification (Task 2), measured by F1-Micro.}
    \label{tab:node_classification}
    \resizebox{\textwidth}{!}{
    \begin{tabular}{cllcccccccccccc}
        \toprule
        ID & Dataset & Strategy & $(d, p, k, \sigma)$ & SAGE & GCN & GAT & GIN & GPS & SAN & GIN+VN & GPS+LP & Grphrmr & SAN+LP & SAN+RW \\
        \midrule
        1 & Cora & d-hop & (1,.6,32,.05) & 93.7{\tiny$\pm$.5} & 88.2{\tiny$\pm$.3} & 86.0{\tiny$\pm$.6} & 85.6{\tiny$\pm$.4} & 90.8{\tiny$\pm$.6} & 94.0{\tiny$\pm$.5} & 95.5{\tiny$\pm$.4} & 90.0{\tiny$\pm$.7} & 93.2{\tiny$\pm$.5} & 94.3{\tiny$\pm$.4} & 94.4{\tiny$\pm$.5} \\
        2 & Cora & d-hop & (2,.6,32,.05) & 99.5{\tiny$\pm$.1} & 96.1{\tiny$\pm$.2} & 94.6{\tiny$\pm$.3} & 96.0{\tiny$\pm$.2} & 99.5{\tiny$\pm$.1} & 99.8{\tiny$\pm$.1} & 99.8{\tiny$\pm$.1} & 99.6{\tiny$\pm$.1} & 99.9{\tiny$\pm$.1} & 99.8{\tiny$\pm$.1} & 99.8{\tiny$\pm$.1} \\
        3 & Cora & d-hop & (1,1.0,32,.05) & 95.5{\tiny$\pm$.2} & 89.1{\tiny$\pm$.3} & 86.9{\tiny$\pm$.4} & 86.4{\tiny$\pm$.3} & 94.6{\tiny$\pm$.3} & 96.5{\tiny$\pm$.2} & 97.5{\tiny$\pm$.2} & 94.4{\tiny$\pm$.4} & 96.3{\tiny$\pm$.2} & 97.3{\tiny$\pm$.2} & 96.6{\tiny$\pm$.3} \\
        4 & Cora & d-hop & (2,.8,32,.05) & 99.6{\tiny$\pm$.1} & 96.1{\tiny$\pm$.2} & 94.7{\tiny$\pm$.3} & 96.3{\tiny$\pm$.2} & 99.5{\tiny$\pm$.1} & 99.8{\tiny$\pm$.1} & 99.7{\tiny$\pm$.1} & 99.7{\tiny$\pm$.1} & 99.8{\tiny$\pm$.1} & 99.8{\tiny$\pm$.1} & 99.8{\tiny$\pm$.1} \\
        5 & Cora & d-hop & (1,.6,16,.05) & 93.4{\tiny$\pm$.6} & 89.0{\tiny$\pm$.4} & 86.8{\tiny$\pm$.8} & 86.5{\tiny$\pm$.6} & 89.9{\tiny$\pm$.7} & 93.7{\tiny$\pm$.5} & 94.8{\tiny$\pm$.4} & 90.3{\tiny$\pm$.6} & 92.8{\tiny$\pm$.6} & 93.2{\tiny$\pm$.5} & 92.6{\tiny$\pm$.6} \\
        6 & Cora & d-hop & (1,.6,64,.05) & 93.5{\tiny$\pm$.5} & 89.0{\tiny$\pm$.4} & 87.3{\tiny$\pm$.7} & 86.6{\tiny$\pm$.5} & 89.2{\tiny$\pm$.6} & 93.5{\tiny$\pm$.4} & 94.6{\tiny$\pm$.4} & 89.5{\tiny$\pm$.7} & 94.1{\tiny$\pm$.4} & 93.7{\tiny$\pm$.5} & 91.0{\tiny$\pm$.8} \\
        7 & Cora & d-hop & (1,.6,32,0) & 93.8{\tiny$\pm$.3} & 89.4{\tiny$\pm$.2} & 87.3{\tiny$\pm$.4} & 86.8{\tiny$\pm$.3} & 89.3{\tiny$\pm$.4} & 93.4{\tiny$\pm$.3} & 93.7{\tiny$\pm$.3} & 89.1{\tiny$\pm$.5} & 91.7{\tiny$\pm$.4} & 91.5{\tiny$\pm$.4} & 92.4{\tiny$\pm$.5} \\
        8 & Cora & d-hop & (1,.6,32,.1) & 92.8{\tiny$\pm$.8} & 88.1{\tiny$\pm$.6} & 86.7{\tiny$\pm$1.0} & 85.4{\tiny$\pm$.9} & 89.6{\tiny$\pm$.9} & 93.7{\tiny$\pm$.6} & 95.0{\tiny$\pm$.5} & 89.8{\tiny$\pm$1.1} & 94.2{\tiny$\pm$.6} & 95.8{\tiny$\pm$.5} & 94.7{\tiny$\pm$.7} \\
        9 & CiteSeer & d-hop & (1,.6,32,.05) & 88.5{\tiny$\pm$.7} & 84.0{\tiny$\pm$.5} & 83.3{\tiny$\pm$.9} & 81.9{\tiny$\pm$.8} & 83.7{\tiny$\pm$.9} & 89.3{\tiny$\pm$.6} & 90.5{\tiny$\pm$.5} & 83.1{\tiny$\pm$1.0} & 88.6{\tiny$\pm$.7} & 89.3{\tiny$\pm$.6} & 89.2{\tiny$\pm$.7} \\
        10 & CiteSeer & cluster & (1,.6,32,.05) & 74.5{\tiny$\pm$1.1} & 73.5{\tiny$\pm$.9} & 74.0{\tiny$\pm$1.3} & 69.5{\tiny$\pm$1.5} & 68.3{\tiny$\pm$1.6} & 65.0{\tiny$\pm$1.8} & 67.8{\tiny$\pm$1.5} & 74.7{\tiny$\pm$1.0} & 64.6{\tiny$\pm$1.9} & 62.4{\tiny$\pm$2.0} & 64.5{\tiny$\pm$1.7} \\
        11 & CiteSeer & random & (1,.6,32,.05) & 77.0{\tiny$\pm$1.0} & 78.2{\tiny$\pm$.8} & 73.8{\tiny$\pm$1.2} & 76.2{\tiny$\pm$1.1} & 59.3{\tiny$\pm$1.8} & 73.8{\tiny$\pm$1.0} & 73.0{\tiny$\pm$1.3} & 69.3{\tiny$\pm$1.5} & 70.2{\tiny$\pm$1.4} & 78.3{\tiny$\pm$.9} & 71.4{\tiny$\pm$1.2} \\
        12 & PubMed & cluster & (1,.6,32,.05) & 81.8{\tiny$\pm$.8} & 83.9{\tiny$\pm$.6} & 79.1{\tiny$\pm$1.0} & 79.8{\tiny$\pm$.9} & 69.9{\tiny$\pm$1.4} & 81.3{\tiny$\pm$.8} & 42.7{\tiny$\pm$2.5} & 79.8{\tiny$\pm$.9} & 82.2{\tiny$\pm$.7} & 81.9{\tiny$\pm$.8} & 81.7{\tiny$\pm$.9} \\
        13 & PubMed & d-hop & (1,.6,32,.05) & 88.0{\tiny$\pm$.5} & 86.3{\tiny$\pm$.4} & 85.6{\tiny$\pm$.7} & 86.3{\tiny$\pm$.6} & 88.4{\tiny$\pm$.5} & 89.7{\tiny$\pm$.4} & 88.7{\tiny$\pm$.5} & 88.6{\tiny$\pm$.5} & 89.3{\tiny$\pm$.4} & 89.5{\tiny$\pm$.4} & 89.7{\tiny$\pm$.5} \\
        14 & ogbn-arXiv & d-hop & (1,.6,32,.05) & 61.7{\tiny$\pm$1.2} & 61.3{\tiny$\pm$.8} & 60.6{\tiny$\pm$1.6} & 54.5{\tiny$\pm$2.0} & 58.4{\tiny$\pm$1.8} & 58.9{\tiny$\pm$1.5} & 60.3{\tiny$\pm$1.4} & 59.6{\tiny$\pm$1.7} & 57.7{\tiny$\pm$1.9} & 57.3{\tiny$\pm$2.0} & 57.4{\tiny$\pm$1.8} \\
        15 & BlogCatalog & d-hop & (1,.6,32,.05) & 94.7{\tiny$\pm$.5} & 77.5{\tiny$\pm$1.2} & 62.4{\tiny$\pm$1.8} & 50.5{\tiny$\pm$2.2} & 96.9{\tiny$\pm$.4} & 97.9{\tiny$\pm$.3} & 99.1{\tiny$\pm$.2} & 97.8{\tiny$\pm$.4} & 98.2{\tiny$\pm$.3} & 98.6{\tiny$\pm$.2} & 98.8{\tiny$\pm$.2} \\
        16 & PROTEINS & d-hop & (1,.6,32,.05) & 75.8{\tiny$\pm$.9} & 69.4{\tiny$\pm$1.4} & 71.5{\tiny$\pm$1.2} & 73.8{\tiny$\pm$1.1} & 76.5{\tiny$\pm$.8} & 77.5{\tiny$\pm$.7} & 78.4{\tiny$\pm$.6} & 77.3{\tiny$\pm$.8} & 78.9{\tiny$\pm$.5} & 78.8{\tiny$\pm$.6} & 79.1{\tiny$\pm$.4} \\
        17 & PascalVOC-SP & d-hop & (1,.6,32,.05) & 69.4{\tiny$\pm$1.1} & 68.2{\tiny$\pm$1.0} & 67.5{\tiny$\pm$1.3} & 66.1{\tiny$\pm$1.2} & 70.2{\tiny$\pm$1.2} & 71.7{\tiny$\pm$1.0} & 72.8{\tiny$\pm$.9} & 70.6{\tiny$\pm$1.1} & 72.1{\tiny$\pm$1.0} & 72.2{\tiny$\pm$1.0} & 71.5{\tiny$\pm$1.2} \\
        18 & COCO-SP & d-hop & (1,.6,32,.05) & 65.2{\tiny$\pm$1.6} & 62.0{\tiny$\pm$1.5} & 62.1{\tiny$\pm$1.8} & 61.5{\tiny$\pm$1.7} & 66.1{\tiny$\pm$1.5} & 67.5{\tiny$\pm$1.3} & 68.3{\tiny$\pm$1.2} & 66.7{\tiny$\pm$1.4} & 68.1{\tiny$\pm$1.3} & 68.0{\tiny$\pm$1.4} & 67.9{\tiny$\pm$1.5} \\
        \bottomrule
    \end{tabular}}
\end{table*}

\paragraph{Parameter impact analysis.}

\textit{Impact of patch structure ($d$ and $p$).}
The structure of the local view, governed by radius $d$ and coverage $p$, is the single most decisive factor for model performance (Figure~\ref{fig:param_impact}). Expanding the neighborhood radius from $d=1$ to $d=2$ produces a dramatic performance leap: Graphormer jumps from 93.2\% to 99.9\%, GIN+VN from 95.5\% to 99.8\%. Increasing coverage ($p=0.6$ to $p=1.0$) also consistently improves scores but more modestly. However, when patches are too large, they merge into a single component, inadvertently destroying Task~3 by eliminating inter-fragment boundaries. This exposes a fundamental tension: enriching local views to maximize local performance can undermine global reasoning.

\begin{figure}[h!]
  \centering
  \includegraphics[width=\columnwidth]{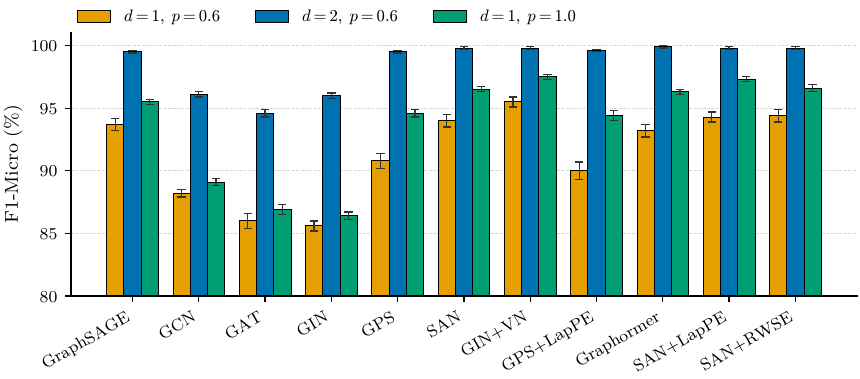}
  \caption{Impact of patch radius ($d$) and coverage ($p$) on node classification across GNN models on Cora. Increasing either parameter provides more local context and consistently improves F1-Micro scores.}
  \label{fig:param_impact}
\end{figure}

\textit{Impact of spectral quality ($k$ and $\sigma$).}
For classification tasks, GNNs can compensate for compromised structural data by leveraging node features: doubling $\sigma$ from 0.05 to 0.1 changes SAN+RWSE's score from 94.4\% to 94.7\%, and expanding $k$ from 16 to 64 offers minimal benefit. In contrast, reconstruction and link prediction are more sensitive to spectral quality. Increasing noise reduces AFR's performance from 74.3\% to 69.1\%, while GAE drops from 69.2\% to 58.3\%. This suggests that while node features anchor local tasks, global topology recovery depends more directly on structural information fidelity.

\textit{Decomposition strategy comparison.}
For node classification, d-hop is clearly the strongest strategy in our evaluation, providing the richest local context. On CiteSeer, SAN's performance drops from 89.3\% (d-hop) to 62.4\% (cluster). Interestingly, this hierarchy inverts for link prediction: the cluster strategy yields the highest performance, with SEAL achieving its peak score of 89.6\%. The best fragmentation strategy is therefore task-dependent (Figure~\ref{fig:strategy_comp}).

\begin{figure}[h!]
  \centering
  \includegraphics[width=\columnwidth]{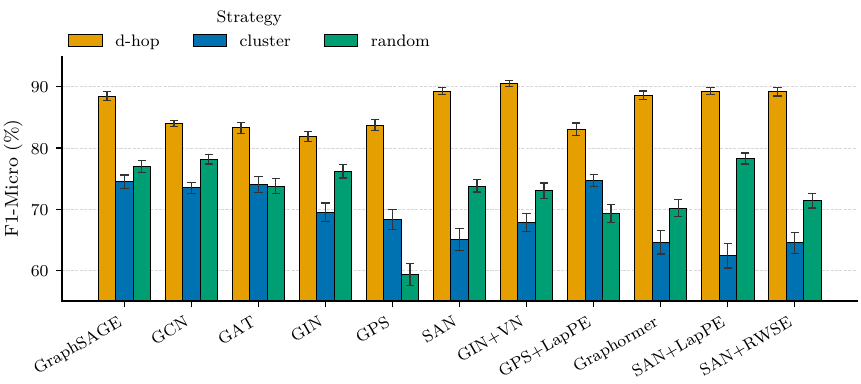}
  \caption{Performance comparison across d-hop, cluster, and random fragmentation strategies on CiteSeer. The d-hop strategy yields the highest node classification performance, but the hierarchy inverts for link prediction.}
  \label{fig:strategy_comp}
\end{figure}

\FloatBarrier
\subsection{Can Models Reason Across Fragment Boundaries?}
\label{sec:exp:linkpred}

Finally, we assess whether models can reason about connections across fragments that they never observe directly. Table~\ref{tab:link_prediction} presents results for inter-fragment link prediction. A dash (--) indicates that Task~3 is inapplicable, which occurs when large patches or high coverage cause fragments to merge into a single component. For the corresponding graph reconstruction results, see Table~\ref{tab:advanced_comparison} and Table~\ref{tab:core_benchmark} in Section~\ref{sec:exp:recon}.

\begin{table*}[h!]
    \centering
    \small
    \caption{Results for inter-fragment link prediction (Task 3, AUROC). Row IDs match the scenario indices in Tables~\ref{tab:core_benchmark} and~\ref{tab:node_classification}. A dash (--) indicates inapplicability: rows 2--4 (Cora at $d=2$ or $p=1.0$) and row~15 (BlogCatalog) merge into a single connected component, eliminating the inter-fragment boundaries required for Task~3. Within row~10 (CiteSeer cluster), Cosine, Neo-GNN and LightGCN representations collapse onto the same coordinates and produce ill-defined scores; only SEAL remains applicable. SEAL operates on the co-occurrence graph of patch membership rather than the spectral embedding, so its AUROC is (by construction) invariant to the spectral fidelity $k$ and noise $\sigma$ at fixed $(d,p)$; this explains the identical SEAL values for rows 5--8 on Cora. For the corresponding Task~1 reconstruction results, see Tables~\ref{tab:core_benchmark} and~\ref{tab:advanced_comparison}.}
    \label{tab:link_prediction}
    \resizebox{\textwidth}{!}{
    \begin{tabular}{cllccccc}
        \toprule
        ID & Dataset & Strategy & $(d, p, k, \sigma)$ & Cosine & SEAL & Neo-GNN & LightGCN \\
        \midrule
        1 & Cora & d-hop & (1,0.6,32,0.05) & 90.3 & 94.9 & \textbf{95.2} & 93.5 \\
        2 & Cora & d-hop & (2,0.6,32,0.05) & -- & -- & -- & -- \\
        3 & Cora & d-hop & (1,1.0,32,0.05) & -- & -- & -- & -- \\
        4 & Cora & d-hop & (2,0.8,32,0.05) & -- & -- & -- & -- \\
        5 & Cora & d-hop & (1,0.6,16,0.05) & \textbf{93.9} & 83.1 & 81.5 & 82.7 \\
        6 & Cora & d-hop & (1,0.6,64,0.05) & 93.9 & 83.1 & 82.9 & 82.5 \\
        7 & Cora & d-hop & (1,0.6,32,0.0) & \textbf{97.2} & 83.1 & 82.2 & 83.6 \\
        8 & Cora & d-hop & (1,0.6,32,0.1) & \textbf{93.4} & 83.1 & 82.1 & 78.9 \\
        9 & CiteSeer & d-hop & (1,0.6,32,0.05) & 81.0 & \textbf{85.2} & 84.6 & 83.8 \\
        10 & CiteSeer & cluster & (1,0.6,32,0.05) & -- & \textbf{89.6} & -- & -- \\
        11 & CiteSeer & random & (1,0.6,32,0.05) & 84.4 & \textbf{91.0} & 88.9 & 89.3 \\
        12 & PubMed & cluster & (1,0.6,32,0.05) & 79.4 & \textbf{86.0} & 85.8 & 85.1 \\
        13 & PubMed & d-hop & (1,0.6,32,0.05) & 82.1 & \textbf{86.5} & 84.1 & 83.7 \\
        14 & ogbn-arXiv & d-hop & (1,0.6,32,0.05) & 68.5 & \textbf{72.1} & 71.5 & 70.2 \\
        15 & BlogCatalog & d-hop & (1,0.6,32,0.05) & -- & -- & -- & -- \\
        16 & PROTEINS & d-hop & (1,0.6,32,0.05) & 75.0 & 82.1 & \textbf{82.5} & 80.3 \\
        17 & PascalVOC-SP & d-hop & (1,0.6,32,0.05) & 65.1 & 73.4 & \textbf{74.0} & 71.2 \\
        18 & COCO-SP & d-hop & (1,0.6,32,0.05) & 62.5 & 70.1 & \textbf{70.8} & 68.3 \\
        19 & PCQM-Contact & d-hop & (1,0.6,32,0.05) & 69.8 & 76.5 & \textbf{77.1} & 74.9 \\
        \bottomrule
    \end{tabular}}
\end{table*}

\FloatBarrier
\subsection{Cross-Task Insights and the Privacy--Utility Tension}
\label{sec:exp:discussion}

Our three-task evaluation reveals cross-cutting insights about graph learning under fragmentation.

\paragraph{The local enrichment vs.\ global reasoning tension.}
A striking finding is that the ``best'' decomposition strategy depends entirely on the downstream task. D-hop patches, which provide the richest local context, are superior for node classification but inferior for link prediction. Cluster-based fragmentation, the worst for classification, produces the clearest inter-fragment boundaries for predicting cross-community links. This reveals an inherent tension in system design: enriching local views to maximize local performance can undermine the ability to reason about global structure.

\paragraph{When AFR fails, and why.}
AFR does not universally dominate. On CiteSeer, the VGAE's probabilistic framework better handles the high disconnectedness (low clustering coefficient). In PubMed, GCN-LE exploits the unusually strong correlation between node features and edge existence. These edge cases define AFR's boundary: it may be outperformed in domains where reconstruction is less a geometric puzzle and more a reflection of probabilistic uncertainty or semantic similarity. Importantly, these failures are narrow and dataset-specific; overall, AFR is the most consistently strong performer across the conditions we evaluate.

\paragraph{No single best architecture under fragmentation.}
In the dense BlogCatalog social graph, Transformer-based models (SAN+RWSE at 98.8\%) substantially outperform canonical GNNs (GAT at 62.4\%). But in the fragmented ogbn-arXiv, this advantage vanishes: simpler models like GraphSAGE (61.7\%) prove more robust than sophisticated Transformers. The lesson is that in information-scarce environments, simple localized aggregation can outperform complex attention mechanisms.

\paragraph{The privacy--utility tension under per-embedding DP.}
Our defense study quantifies a tension specific to the per-embedding $(\epsilon, \delta)$-Gaussian mechanism: practical choices ($\epsilon\ge 5$) provide limited reduction in AFR's reconstruction F1 while incurring only modest utility costs. Substantial attack degradation ($\epsilon\le 2$) requires substantial utility sacrifice. Empirically, AFR's fidelity-aware design appears relatively robust to DP noise because it downweights corrupted patches during assembly. We reiterate the caveat from Section~\ref{sec:exp:recon}: per-embedding DP is not the same as node- or edge-level DP on the underlying graph; the trade-off we observe is specific to this heuristic defense and does not indicate that DP more broadly is inadequate. Future defenses may achieve better trade-offs by fundamentally transforming the geometric properties of the shared embeddings (e.g., edge-level randomized response, or subspace quantization that aligns privacy noise with Procrustes invariances).

\paragraph{Locality as the dominant factor.}
Across our experiments, the structure of the local view, governed by radius $d$ and coverage $p$, is the most influential factor we observe. Expanding from 1-hop to 2-hop neighborhoods catapults even simple architectures to near-perfect performance, while changes to spectral fidelity $k$ or noise $\sigma$ produce comparatively modest effects on node classification. This suggests that the fundamental bottleneck in fragmented graph learning is not information quality but information reach.

\FloatBarrier
\section{Conclusion}
\label{sec:conclusion}

This paper presented a unified framework for graph learning from localized spectral views, organized around three complementary contributions. First, LoGraB provides a benchmark that systematically explores the joint space of fragmentation, spectral truncation, and noise in graph learning. Its controlled fragmentation strategies, three tasks, and tailored metrics highlight how modern GNN architectures respond to data fragmentation, exposing architectural strengths and weaknesses that are less visible in standard benchmarks. Second, AFR is a constructive reconstruction algorithm that explicitly models patch-level fidelity heterogeneity and adaptively adjusts its stitching criteria; it consistently recovers meaningful topology in noisy regimes where uniform-quality assumptions are less effective. AFR enjoys rigorous local guarantees (spectral perturbation, heat-kernel edge recovery, stitch soundness, error accumulation). Our ablation study shows that both fidelity scoring and robust alignment contribute meaningfully to performance. Third, we established the Spectral Leakage Proposition, an information-theoretic feasibility statement about a Bayesian recovery procedure; we were careful to present it as a feasibility result (with a heuristic supporting argument whose three open gaps we identified explicitly) distinct from AFR, and to note that AFR's guarantees do not rely on it.

Several limitations remain. The Spectral Leakage Proposition's heuristic argument should be promoted to a full proof; AFR's cross-dataset variance and the effect of patch-graph diameter on Bundle Adjustment warrant more thorough empirical study; the per-embedding DP defense we evaluate is not a node- or edge-level DP guarantee, and future work should quantify graph-level privacy leakage under composition. For future work, we plan to extend AFR beyond spectral embeddings to other representation modalities, investigate its behavior in dynamic federated settings, explore modern robust-synchronization baselines (Local2Global, cycle-consistent synchronization), and study how AFR's insights can inform principled graph-level DP defenses.

\FloatBarrier
\bibliography{references}

\newpage
\appendix

\section{Formal Assumptions and Baseline Algorithms}
\label{appendix:formal}

\subsection{Formal Assumptions}
\label{appendix:assumptions}

This section provides a complete mathematical formulation of the assumptions that guide the AFR algorithm.

\begin{assumption}[Core patch fidelity]
\label{assumption:1}
A reconstructed patch $\widehat{\mathcal{A}}_v$ is a core patch if it satisfies three conditions:
\begin{itemize}[nosep]
    \item Size bound: $|P_v| \le q_{\max}$.
    \item Error bound: $\|\widehat{\mathcal{A}}_v - \mathcal{A}_{P_v}\|_2 \le \eta_v$.
    \item Fidelity score: $s_v \ge s_{\min}$, where $s_v = \alpha\,\rho_v + (1-\alpha)\,\mathcal{E}_v$ with $\rho_v = \delta_v/(\delta_v + \eta_v)$ and $\mathcal{E}_v$ the normalized degree entropy.
\end{itemize}
\end{assumption}

\begin{assumption}[Adaptive stitching eligibility]
\label{assumption:2}
A pair of core patches $(v,w)$ is eligible for stitching only if their intersection $I_{vw} = P_v \cap P_w$ satisfies:
\begin{itemize}[nosep]
    \item $|I_{vw}| \ge \max\{k+1,\, k_{\text{base}} + \gamma(1 - \min\{s_v, s_w\})\}$ (the floor $k+1$ ensures Procrustes identifiability in $\mathbb{R}^k$).
    \item The induced subgraph $G[I_{vw}]$ is well-posed, e.g., $\text{diam}(G[I_{vw}]) \le 2$.
\end{itemize}
\end{assumption}

\begin{assumption}[Probabilistic alignment correctness]
\label{assumption:3}
For any eligible pair $(v,w)$, let $p_{vw}\in(0,1]$ be the fraction of true inlier correspondences. With $M_{vw} \ge \log\beta / \log(1 - p_{vw}^m)$ iterations, RANSAC-Procrustes returns a rotation $\mathcal{Q}_{vw}\in O(k)$ such that
$\|\sin\Theta(\mathcal{Q}_{vw}, \mathcal{Q}_{vw}^*)\|_F \le \tau$
with probability at least $1-\beta$.
\end{assumption}

\begin{assumption}[AFR patch overlap structure]
\label{assumption:afr_overlap}
The collection of local patches $\{P_v\}_{v \in V_{\mathrm{obs}}}$ satisfies the following overlap conditions required for the local-to-global reduction:
\begin{itemize}[nosep]
\item \textit{Covering:} Let $V_{\mathrm{cov}} := \bigcup_{v \in V_{\mathrm{obs}}} P_v \subseteq V$. We guarantee recovery only on $V_{\mathrm{cov}}$; nodes in $V \setminus V_{\mathrm{cov}}$ (uncovered due to coverage ratio $p<1$) are excluded from the reconstruction target. When $p=1$ and $d$-hop neighborhoods span $V$, we recover $V_{\mathrm{cov}}=V$.
\item \textit{Pairwise overlap for edges:} For every edge $(v,w) \in E$, there exists a patch pair whose intersection $I_{vw}$ has $|I_{vw}| \ge k+1$ nodes and contains both endpoints; $G[I_{vw}]$ is connected.
\item \textit{Patch graph diameter:} Let $D := \mathrm{diam}(\mathcal{G}_P)$, where $\mathcal{G}_P$ has a node per patch and an edge whenever two patches satisfy the pairwise overlap condition. The error accumulation bound of Proposition~\ref{prop:local_reduction} scales as $O(D \cdot \varepsilon)$. When $D = O(\mathrm{polylog}\, n)$, the bound is polylogarithmic; when $D$ is closer to $\mathrm{diam}(G)$ itself (which can occur for $d=1$ decompositions of large-diameter graphs), the bound is proportionally worse. The AFR Bundle Adjustment step (Stage~3) is motivated in part by this linear-in-depth accumulation.
\end{itemize}
These conditions are satisfied by $d$-hop decompositions when $d \ge 2$ and the original graph has bounded expansion, and by cluster decompositions with overlap expansion (Section~\ref{sec:lograb:decomp}); empirical patch-graph diameters for our datasets are reported alongside the experiments.
\end{assumption}

\subsection{Eigen-sync Baseline}
\label{appendix:eigensync}

Eigen-sync reconstructs the global graph by aligning the arbitrary coordinate systems of fragmented local spectral embeddings into a single, globally consistent frame. For any two patches $P_i$ and $P_j$ with sufficient overlap, the optimal relative rotation $R_{ij}$ is found via the Orthogonal Procrustes problem: given the SVD $B^\top A = U\Sigma V^\top$, the solution is $R_{ij} = VU^\top$. These pairwise rotations are used to construct a block matrix $M$, whose top $k$ eigenvectors yield globally consistent absolute rotations $\{R_i\}$. Each local embedding is then transformed via $\mathcal{P}_i^{\text{sync}} = \mathcal{P}_i R_i$, and the global embedding for each node is obtained by averaging synchronized representations. Edges are predicted by constructing a $k$-NN graph on the global embeddings using cosine similarity.

\begin{algorithm2e}[h!]
\small
\DontPrintSemicolon
\KwInput{A set of $m$ local patches $\{(P_i, \mathcal{P}_i)\}_{i=1}^m$, where $P_i$ is the set of global node indices and $\mathcal{P}_i\in\mathbb{R}^{|P_i|\times k}$.}
\KwOutput{A set of reconstructed edges $\widehat{E}$.}
Initialize: $R_{\text{pairwise}} \leftarrow \{\}$, $W_{\text{pairwise}} \leftarrow \{\}$\\
\tcc{Stage 1: Pairwise alignment}
\For{$i \leftarrow 1$ \KwTo $m$}{
    \For{$j \leftarrow i+1$ \KwTo $m$}{
        $S \leftarrow P_i \cap P_j$\\
        \If{$|S| \ge 2$}{
            $A \leftarrow$ sub-matrix of $\mathcal{P}_i$ for nodes in $S$; $B \leftarrow$ sub-matrix of $\mathcal{P}_j$ for $S$\\
            $U, \Sigma, V^\top \leftarrow \text{SVD}(B^\top A)$\\
            $R_{ij} \leftarrow V U^\top$; $R_{\text{pairwise}}[(i,j)] \leftarrow R_{ij}$; $W_{\text{pairwise}}[(i,j)] \leftarrow S$
        }
    }
}
\tcc{Stage 2: Global synchronization}
$M \leftarrow$ Construct $(mk)\times(mk)$ block matrix from $R_{\text{pairwise}}$ and $W_{\text{pairwise}}$\\
$\lambda, V_{\text{eig}} \leftarrow$ Top $k$ eigenvectors of $M$\\
\For{$i \leftarrow 1$ \KwTo $m$}{
    $V_i \leftarrow$ Extract $k\times k$ block for patch $i$ from $V_{\text{eig}}$\\
    $Q, R \leftarrow \text{QR}(V_i)$; $R_{\text{global}}[i] \leftarrow Q$
}
\tcc{Stage 3: Global embedding and kNN reconstruction}
$Z \leftarrow$ zero matrix; $C \leftarrow$ zero vector\\
\For{$i \leftarrow 1$ \KwTo $m$}{
    $\mathcal{P}_i^{\text{sync}} \leftarrow \mathcal{P}_i \, R_{\text{global}}[i]$\\
    Update rows in $Z$ and $C$ for nodes in $P_i$
}
$Z \leftarrow Z \mathbin{./} C$; $\widehat{E} \leftarrow \text{kNN\_graph}(Z, k_{\text{nn}}, \text{cosine})$\\
\Return $\widehat{E}$
\caption{Eigen-sync reconstruction baseline.}
\label{alg:eigensync}
\end{algorithm2e}

\subsection{Cosine Similarity Baseline for Link Prediction}
\label{appendix:cosine}

This approach generates global embeddings by averaging local GNN representations for each node across all patches in which it appears, then scores candidate links using cosine similarity: $\psi(u,v) = z_u^\top z_v / (\|z_u\|\,\|z_v\|)$.

\subsection{SEAL Adaptation for Link Prediction}
\label{appendix:seal}

We adapt SEAL \citep{Zhang2018SEAL} for the fragmented setting. A co-occurrence graph $H$ is constructed from the patches: an edge exists if two nodes appear together in at least one patch. For each candidate link $(u,v)$, we extract its enclosing subgraph from $H$, assign structural labels using Double-Radius Node Labeling (DRNL), and train a GNN to classify whether the subgraph contains a true link.

\section{Proofs}
\label{appendix:proofs}

\subsection{Proof of Theorem~\ref{thm:spectral_perturbation}}
\label{proof:spectral}

\begin{proof}
Let $\mathcal{X}=\operatorname{span}(\mathcal{P}_v)$ and $\widetilde{\mathcal{X}}=\operatorname{span}(\widetilde{\mathcal{P}}_v)$ be the $k$-dimensional invariant subspaces of $\mathcal{L}_v$ and $\widetilde{\mathcal{L}}_v$, respectively. By Weyl's inequality, the eigenvalues of $\widetilde{\mathcal{L}}_v$ are bounded by the perturbation norm.

The Davis--Kahan theorem \citep{Davis1970} states that the sine of the largest principal angle $\Theta(\mathcal{X},\widetilde{\mathcal{X}})$ between the two subspaces satisfies
\[
\|\sin\Theta(\mathcal{X},\widetilde{\mathcal{X}})\|_F \le \frac{\|R\|_F}{\delta},
\]
where $R = \widetilde{\mathcal{L}}_v \mathcal{P}_v - \mathcal{P}_v(\mathcal{P}_v^\top \mathcal{L}_v \mathcal{P}_v) = E\mathcal{P}_v$ is the residual, and $\delta = \delta_v = \lambda_{k+1}(\mathcal{L}_v) - \lambda_k(\mathcal{L}_v)$. Thus:
\[
\|\sin\Theta(\mathcal{X},\widetilde{\mathcal{X}})\|_F \le \frac{\|E\mathcal{P}_v\|_F}{\delta_v} \le \frac{\|E\|_2\,\|\mathcal{P}_v\|_F}{\delta_v} = \frac{\sqrt{k}\,\varepsilon}{\delta_v}.
\]

The result follows from the standard bound relating Frobenius distance between basis matrices (up to optimal orthogonal transformation) to the sine of principal angles:
\[
\|\widetilde{\mathcal{P}}_v - \mathcal{P}_v \mathcal{Q}\|_F \le \sqrt{2}\,\|\sin\Theta(\mathcal{X},\widetilde{\mathcal{X}})\|_F \le \frac{\sqrt{2k}\,\varepsilon}{\delta_v}.
\]
\end{proof}

\subsection{Proof of Theorem~\ref{thm:edge_recovery}}
\label{proof:edge}

\begin{proof}
We show that, under the theorem's condition, the observed values for edges are disjoint from and strictly greater than those for non-edges.

The residual matrix is $R_v^{(k)}(t) = \sum_{r=k+1}^{n_v} e^{-t\lambda_r} u_r u_r^\top$. Its spectral norm is $\|R_v^{(k)}(t)\|_2 = e^{-t\lambda_{k+1}} = \eta_v$. By the Cauchy--Schwarz inequality and orthonormality, $|R_v^{(k)}(t)_{ij}| = |\sum_{r>k} e^{-t\lambda_r} u_r(i) u_r(j)| \le e^{-t\lambda_{k+1}}\bigl(\sum_{r>k} u_r(i)^2\bigr)^{1/2}\bigl(\sum_{r>k} u_r(j)^2\bigr)^{1/2} \le \eta_v$ (using the completeness relation $\sum_{r\ge 1} u_r(i)^2 = 1$). Each entry of the truncated kernel is thus within an $\eta_v$-ball of its true value:
\[
|H_v(t)_{ij} - H_v^{(k)}(t)_{ij}| \le \eta_v.
\]

For any edge $(i,j)\in E_v$:
\[
H_v^{(k)}(t)_{ij} \ge H_v(t)_{ij} - \eta_v \ge \left(\min_{(a,b)\in E_v} H_v(t)_{ab}\right) - \eta_v.
\]

For any non-edge $(i,j)\notin E_v$, $i\ne j$:
\[
H_v^{(k)}(t)_{ij} \le H_v(t)_{ij} + \eta_v \le \left(\max_{\substack{(a,b)\notin E_v \\ a\ne b}} H_v(t)_{ab}\right) + \eta_v.
\]

The condition $\gamma_t > 2\eta_v$ ensures a separating gap exists. Any threshold $\tau_v$ in the interval
\[
\tau_v \in \left(\max_{\substack{(a,b)\notin E_v \\ a\ne b}} H_v(t)_{ab} + \eta_v, \;\; \min_{(a,b)\in E_v} H_v(t)_{ab} - \eta_v\right)
\]
correctly classifies all edges and non-edges, yielding exact recovery of $E_v$.
\end{proof}

\begin{corollary}[Robustness to eigenvalue perturbations]
\label{cor:weyl}
Consider perturbed spectral components with entry-wise heat kernel error bounded by $\Delta_H$. If $\gamma_t > 2(\Delta_H + \eta_v)$, then $E_v$ can still be recovered exactly from the perturbed kernel.
\end{corollary}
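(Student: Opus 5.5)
The argument reruns the proof of Theorem~\ref{thm:edge_recovery}, with one extra term in the entry-wise error budget. Let $\widetilde{H}_v^{(k)}(t)$ denote the kernel the adversary actually forms from the perturbed spectral components. By hypothesis it satisfies $|\widetilde{H}_v^{(k)}(t)_{ij} - H_v^{(k)}(t)_{ij}| \le \Delta_H$ for all $i \ne j$. In that proof we showed that the truncation residual $R_v^{(k)}(t)$ has entries bounded by $\eta_v$, via Cauchy--Schwarz and the completeness relation. The triangle inequality then gives, for every off-diagonal pair,
\[
|\widetilde{H}_v^{(k)}(t)_{ij} - H_v(t)_{ij}| \le |\widetilde{H}_v^{(k)}(t)_{ij} - H_v^{(k)}(t)_{ij}| + |H_v^{(k)}(t)_{ij} - H_v(t)_{ij}| \le \Delta_H + \eta_v .
\]
So every observed entry lies in a ball of radius $\eta_v' := \Delta_H + \eta_v$ around its true heat-kernel value.

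We now separate edges from non-edges. For any edge $(i,j)\in E_v$ we have $\widetilde{H}_v^{(k)}(t)_{ij} \ge \min_{(a,b)\in E_v} H_v(t)_{ab} - \eta_v'$. For any non-edge $(i,j)\notin E_v$ with $i\ne j$ we have $\widetilde{H}_v^{(k)}(t)_{ij} \le \max_{(a,b)\notin E_v,\,a\ne b} H_v(t)_{ab} + \eta_v'$. By the definition of $\gamma_t$, the difference between these lower and upper bounds is $\gamma_t - 2\eta_v'$. This is strictly positive by the hypothesis $\gamma_t > 2(\Delta_H+\eta_v)$. Hence the open interval
\[
\tau_v \in \Bigl(\max_{(a,b)\notin E_v,\,a\ne b} H_v(t)_{ab} + \Delta_H + \eta_v,\;\; \min_{(a,b)\in E_v} H_v(t)_{ab} - \Delta_H - \eta_v\Bigr)
\]
is non-empty. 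Any threshold in it classifies every off-diagonal pair correctly, so exact recovery of $E_v$ follows.

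The only potential subtlety is interpretive: what "entry-wise heat kernel error bounded by $\Delta_H$" is measured against. I would fix it as the deviation of the perturbed truncated kernel from the unperturbed truncated kernel $H_v^{(k)}(t)$, so that the truncation error $\eta_v$ is counted separately, which matches the additive form of the hypothesis. If instead $\Delta_H$ were meant relative to the full kernel $H_v(t)$, the bound would only improve: the condition $\gamma_t > 2\Delta_H$ would suffice and is implied by the stated one. I would also remark how $\Delta_H$ can be instantiated. For eigenvalue perturbations, $|e^{-t\lambda}-e^{-t\tilde\lambda}|\le t|\lambda-\tilde\lambda|$ combined with Weyl's inequality gives $\Delta_H \le t\,\|E\|_2$ from the $k$ retained modes. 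For eigenvector perturbations, Theorem~\ref{thm:spectral_perturbation} supplies a bound. Neither bound is needed for the corollary itself.
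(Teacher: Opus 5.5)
Your proposal is correct and matches the paper's proof: the paper also bounds the total entry-wise error by $\Delta_H + \eta_v$ via the triangle inequality and reruns the proof of Theorem~\ref{thm:edge_recovery} with $\eta_v$ replaced by $\Delta_H + \eta_v$. Your explicit choice of the unperturbed truncated kernel $H_v^{(k)}(t)$ as the reference for $\Delta_H$ clarifies a point the paper leaves implicit.
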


\begin{proof}
The total entry-wise error satisfies $|\epsilon_{ij}^{\text{total}}| \le \Delta_H + \eta_v$ by the triangle inequality. The proof then follows the same structure as Theorem~\ref{thm:edge_recovery}, replacing $\eta_v$ with $\Delta_H + \eta_v$.
\end{proof}

\subsection{Proof of Proposition~\ref{prop:fidelity}}
\label{proof:fidelity}

\begin{proof}
\noindent\textit{Part 1: Bounds and monotonicity of the gap-to-truncation ratio $\rho_v$.}
Since $\delta_v \ge 0$ and $\eta_v > 0$, we have $\rho_v = \delta_v/(\delta_v + \eta_v) \ge 0$ and $\rho_v \le 1$ since $\delta_v \le \delta_v + \eta_v$. The partial derivatives are:
\[
\frac{\partial\,\rho_v}{\partial\delta_v} = \frac{\eta_v}{(\delta_v + \eta_v)^2} > 0, \qquad
\frac{\partial\,\rho_v}{\partial\eta_v} = \frac{-\delta_v}{(\delta_v + \eta_v)^2} < 0 \quad (\delta_v > 0).
\]

\noindent\textit{Part 2: Bounds of the composite score.}
Since $0 \le \rho_v \le 1$ and $0 \le \mathcal{E}_v \le 1$, the convex combination $s_v = \alpha\,\rho_v + (1-\alpha)\,\mathcal{E}_v$ with $\alpha\in[0,1]$ satisfies $0 \le s_v \le 1$.

\noindent\textit{Part 3: Monotonicity of the composite score.}
Since $\eta_v = e^{-t(\lambda_k + \delta_v)}$, we have $d\eta_v/d\delta_v = -t\eta_v$. Thus:
\[
\frac{d\,\rho_v}{d\delta_v} = \frac{\eta_v}{(\delta_v + \eta_v)^2} + \frac{-\delta_v}{(\delta_v + \eta_v)^2}(-t\eta_v) = \frac{\eta_v(1 + t\delta_v)}{(\delta_v + \eta_v)^2} \ge 0.
\]
Since $\alpha \ge 0$, it follows that $ds_v/d\delta_v \ge 0$.
\end{proof}

\subsection{Proof of Lemma~\ref{lem:ransac}}
\label{proof:ransac}

\begin{proof}
The probability that a single sample of $m$ correspondences consists entirely of inliers is $p_{vw}^m$. The probability of failure (at least one outlier) in a single sample is $1 - p_{vw}^m$. Since the $M$ iterations are independent:
\[
\mathbb{P}(\text{all samples have outliers}) = (1 - p_{vw}^m)^M.
\]
The probability of at least one all-inlier sample is $1 - (1 - p_{vw}^m)^M$. Setting this $\ge 1 - \beta$ and solving: $\log(1 - p_{vw}^m)^M \le \log\beta$. Since $\log(1 - p_{vw}^m) < 0$, dividing reverses the inequality, giving $M \ge \log(\beta)/\log(1 - p_{vw}^m)$.
\end{proof}

\subsection{Proof of Theorem~\ref{thm:stitch}}
\label{proof:stitch}

\begin{proof}
By Lemma~\ref{lem:ransac}, executing RANSAC for $M \ge \log(\beta)/\log(1 - p_{vw}^m)$ iterations guarantees that, with probability at least $1-\beta$, at least one sample consists entirely of inliers. The Procrustes solution from this sample provides a high-quality initial estimate of $\mathcal{Q}_{vw}^*$. RANSAC then expands this to the largest consensus set $C_{vw}$.

Let the submatrices of embeddings on the consensus set be $A, B\in\mathbb{R}^{|C_{vw}|\times k}$. Under the noise model, $A = A_0 + E_A$ and $B = B_0 + E_B$ with i.i.d.\ sub-Gaussian entries. The Procrustes solution $\widehat{\mathcal{Q}}_{vw}$ is found via SVD of $A^\top B = U\Sigma V^\top$, yielding $\widehat{\mathcal{Q}}_{vw} = VU^\top$.

By standard matrix-perturbation results \citep{Davis1970}:
\[
\|\sin\Theta(\widehat{\mathcal{Q}}_{vw}, \mathcal{Q}_{vw}^*)\|_F \le O\left(\frac{\|A^\top E_B + E_A^\top B\|_F}{\sigma_{\min}(A_0^\top B_0)}\right).
\]

Under i.i.d.\ noise, $\mathbb{E}[\|A^\top E_B + E_A^\top B\|_F] = O(\sigma\sqrt{|C_{vw}|})$, while $\sigma_{\min}(A_0^\top B_0) = O(|C_{vw}|)$. Thus the scaling is $O(\sigma/\sqrt{|C_{vw}|})$. Since $|C_{vw}| \ge d_{\text{adapt}}(s_v,s_w)$:
\[
\|\sin\Theta(\widehat{\mathcal{Q}}_{vw}, \mathcal{Q}_{vw}^*)\|_F \le \frac{C\,\sigma}{\sqrt{d_{\text{adapt}}(s_v,s_w)}}.
\]
\end{proof}

\begin{corollary}[Rejection of spurious stitches]
\label{cor:rejection}
If $p_{vw} \cdot |I_{vw}| < d_{\text{adapt}}(s_v, s_w)$, then the RANSAC-Procrustes procedure is guaranteed to reject the stitch, regardless of the number of iterations.
\end{corollary}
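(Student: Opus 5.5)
The plan is a pure counting argument on the consensus set, independent of the RANSAC randomness. The acceptance rule in Algorithm~\ref{alg:afr} (and in Theorem~\ref{thm:stitch}) says a stitch is accepted iff the returned consensus set $C_{vw}$ satisfies $|C_{vw}| \ge d_{\text{adapt}}(s_v,s_w)$. It therefore suffices to show that every candidate consensus set RANSAC could ever return has size at most $p_{vw}\,|I_{vw}|$. Given the hypothesis $p_{vw}|I_{vw}| < d_{\text{adapt}}(s_v,s_w)$, acceptance then fails deterministically. Since the bound holds for every iteration, and the final consensus set is the maximum over iterations, the number of iterations $M$ plays no role.

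The key step is to interpret the consensus set correctly. Among the $|I_{vw}|$ correspondences, exactly $p_{vw}|I_{vw}|$ are true inliers, meaning they are consistent with the true rotation $\mathcal{Q}^*_{vw}$ within the residual tolerance. I will make explicit the implicit modelling assumption under which the corollary is meant: a correspondence enters a consensus set only if it is a true inlier, i.e.\ outliers are never consistent with any hypothesised rotation.

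Under that assumption, each consensus set satisfies
\[
C_{vw} \subseteq \{\text{inliers}\} \subseteq I_{vw},
\]
so $|C_{vw}| \le p_{vw}|I_{vw}|$ in every iteration. Taking the maximum over the $M$ iterations preserves this bound. Combining with the hypothesis gives $|C_{vw}| < d_{\text{adapt}}(s_v,s_w)$, so the eligibility test at the acceptance step fails and the stitch is rejected.

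The main obstacle is exactly that modelling assumption. In general, a wrong hypothesised rotation can gather outliers, or a mix of inliers and outliers, whose residuals happen to fall under the threshold. Then $|C_{vw}|$ could in principle exceed $p_{vw}|I_{vw}|$, and the ``guaranteed'' claim would be false. I would handle this in one of two ways. The first is to define $p_{vw}$ as the maximal fraction of correspondences that are simultaneously consistent with any single rotation in $O(k)$ at the RANSAC tolerance, so that $p_{vw}|I_{vw}|$ bounds every consensus set by definition. The second is to state the corollary conditionally on outliers being inconsistent with every candidate rotation. I would adopt the first, since it makes the proof a one-line definitional bound and matches the spirit of Assumption~\ref{assumption:3}.
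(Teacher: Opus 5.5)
Your counting argument is the paper's proof, which is the single chain $|C_{vw}| \le |I_{vw}^*| = p_{vw}|I_{vw}| < d_{\text{adapt}}(s_v,s_w)$, so the acceptance test fails for every $M$. You are right that the first inequality silently requires consensus sets to contain only true inliers, which the paper never justifies; of your two repairs, the conditional one keeps $p_{vw}$ equal to the true-inlier fraction used in Lemma~\ref{lem:ransac} and Theorem~\ref{thm:stitch}, whereas redefining $p_{vw}$ as a maximal consistent fraction makes the corollary true by definition but changes (and weakens) what its hypothesis says.
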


\begin{proof}
Since $|C_{vw}| \le |I_{vw}^*| = p_{vw}\,|I_{vw}| < d_{\text{adapt}}(s_v,s_w)$, the acceptance criterion $|C_{vw}| \ge d_{\text{adapt}}(s_v,s_w)$ can never be satisfied.
\end{proof}

\subsection{Heuristic Argument for Proposition~\ref{thm:spectral_leakage} (Spectral Embedding Leakage)}
\label{proof:spectral_leakage}

We present a heuristic argument supporting Proposition~\ref{thm:spectral_leakage} and explicitly identify the technical gaps that would be required to promote it to a fully rigorous theorem.

\paragraph{Setup.}
Collect all embedding vectors $\mathcal{P}_v$ for $v=1,\ldots,n$ into a matrix $\mathcal{P}\in\mathbb{R}^{n\times k}$, so $\mathcal{P}=\mathrm{Eig}_k(\mathcal{L})\in\mathbb{R}^{n\times k}$. Let $\Theta=\{\mathcal{A}\in\{0,1\}^{n\times n} \mid \mathcal{A}=\mathcal{A}^\top, \mathcal{A}_{ii}=0\}$ be the space of adjacency matrices. Define a prior $p(\mathcal{A})$ (e.g., Erd\H{o}s--R\'enyi with edge probability $p_0$) and, for each candidate $\mathcal{A}\in\Theta$, let $\mathcal{U}(\mathcal{A})=\mathrm{Eig}_k(\mathcal{L}(\mathcal{A}))$. Define empirical measures $\mu_\mathcal{P}=\frac{1}{n}\sum_{i=1}^n \delta_{p_i}$ and $\mu_{\mathcal{U}(\mathcal{A})}=\frac{1}{n}\sum_{i=1}^n \delta_{u_i(\mathcal{A})}$, and the likelihood $p(\mathcal{P}|\mathcal{A})\propto \exp[-\gamma S_\varepsilon(\mu_\mathcal{P}, \mu_{\mathcal{U}(\mathcal{A})})]$, where $S_\varepsilon$ is the entropic Sinkhorn distance.

The posterior $p(\mathcal{A}|\mathcal{P})\propto p(\mathcal{P}|\mathcal{A})p(\mathcal{A})$ is approximated by a mean-field variational distribution $q(\mathcal{A};\phi)=\prod_{i<j}\phi_{ij}^{\mathcal{A}_{ij}}(1-\phi_{ij})^{1-\mathcal{A}_{ij}}$, with parameters $\phi^*$ obtained by maximizing the ELBO via stochastic gradient ascent with $K$ Monte Carlo samples per step. The heuristic argument has three components, each of which we describe together with the technical gap that would be required to make it rigorous.

\paragraph{Heuristic (i): Posterior concentration.}
If $\mathcal{A}$ is close to $\mathcal{A}_0$, then by Theorem~\ref{thm:spectral_perturbation}, $\mathrm{Eig}_k(\mathcal{L}(\mathcal{A}))$ is close to $\mathrm{Eig}_k(\mathcal{L}(\mathcal{A}_0))$. By the Lipschitz continuity of $S_\varepsilon$ on point-cloud distributions, the likelihood $p(\mathcal{P}|\mathcal{A})$ is large near $\mathcal{A}_0$ and small away from it. We argue heuristically that $p(\mathcal{A}|\mathcal{P})$ concentrates around $\mathcal{A}_0$.

\emph{Gap.} A rigorous concentration statement would quantify the rate at which $p(\mathcal{A}|\mathcal{P})$ concentrates as $n \to \infty$, accounting for (a) the combinatorial size of the support $|\Theta| = 2^{\binom{n}{2}}$, (b) the non-local nature of the eigenvector map $\mathcal{A} \mapsto \mathrm{Eig}_k(\mathcal{L}(\mathcal{A}))$, which can have discontinuities at Laplacian spectral coincidences, and (c) the Lipschitz constant of $S_\varepsilon$ on the relevant empirical measures. We do not supply these quantitative bounds.

\paragraph{Heuristic (ii): Variational approximation quality.}
Since maximizing the ELBO is equivalent to minimizing $\mathrm{KL}(q\,\|\,p(\cdot|\mathcal{P}))$, if $p(\mathcal{A}|\mathcal{P})$ is sharply peaked near $\mathcal{A}_0$, we expect $q^*(\mathcal{A})\approx p(\mathcal{A}|\mathcal{P})$ and the variational parameters $\phi_{ij}^*$ to be close to $1$ for true edges and close to $0$ otherwise. Thresholding at $0.5$ would then recover $\widehat{\mathcal{A}}\approx\mathcal{A}_0$.

\emph{Gap.} Mean-field VI is known to systematically underestimate posterior variance and, more seriously, can fail to concentrate on the true mode in discrete combinatorial problems even when the true posterior is sharply peaked \citep[see][for overviews]{Von2007tutorial}. Establishing that mean-field VI consistently recovers $\mathcal{A}_0$ here requires a specific consistency theorem for this particular prior/likelihood combination, which we do not provide.

\paragraph{Heuristic (iii): Polynomial-time complexity.}
Each gradient iteration involves: sampling $K$ adjacency matrices ($O(Kn^2)$), computing $k$ eigenvectors per sample using Lanczos ($O(k\cdot\mathrm{nnz}(\mathcal{L})\cdot I_{\mathrm{eig}})=O(\mathrm{poly}(n))$), and computing the Sinkhorn distance ($O(n^2 I_{\mathrm{sink}})$ with $I_{\mathrm{sink}}=O(\mathrm{poly}\log n)$). The total cost \emph{per iteration} is $O(\mathrm{poly}(n,k))$.

\emph{Gap.} Total polynomial runtime additionally requires a bound on the number of gradient steps to reach an $\varepsilon$-accurate stationary point. The ELBO is non-convex in $\phi$, so such a bound requires either a problem-specific convergence analysis (likely using that the ELBO is log-concave in a neighborhood of the true $\phi^*$ under Assumption~\ref{assumption:spectral_leakage}) or an additional smoothness/sample-complexity argument. We do not supply this.

\paragraph{Summary.}
The three heuristic components above, taken together, strongly suggest that polynomial-time recovery is feasible under Assumption~\ref{assumption:spectral_leakage} once $k$ exceeds the threshold of Corollary~\ref{cor:threshold}. For a rigorous version of Proposition~\ref{thm:spectral_leakage}, the three gaps above (posterior concentration rate, mean-field VI consistency for the discrete graph prior, convergence analysis of stochastic ELBO optimization) must be filled. We identify these as open problems. Our empirical AFR algorithm does not rely on Proposition~\ref{thm:spectral_leakage}; it is separately justified by Theorems~\ref{thm:spectral_perturbation}--\ref{thm:stitch} and Lemmas~\ref{lem:heat_kernel_gap}--\ref{lem:error_accumulation}, which are fully proved.

\subsection{Proof of Corollary~\ref{cor:threshold} (Truncation-based Proxy Threshold)}
\label{proof:threshold}

\begin{proof}
We prove the stated characterizations of $\ell^*(\varepsilon) = \min\{k : \lambda_{k+1} \le \varepsilon\}$. For polynomial decay $\lambda_k=Ck^{-\alpha}$, solving $C(\ell^*+1)^{-\alpha}\le\varepsilon$ gives $\ell^*=\lceil(C/\varepsilon)^{1/\alpha}\rceil-1$. For exponential decay $\lambda_k=Ce^{-\alpha k}$, solving $Ce^{-\alpha(\ell^*+1)}\le\varepsilon$ gives $\ell^*=\lceil\alpha^{-1}\ln(C/\varepsilon)\rceil-1$. The proxy-lower-bound claim that $\ell^* \le k^*$ follows because the truncation residual $\eta_v = e^{-t\lambda_{k+1}}$ upper-bounds the heat-kernel entry-wise distortion, and no eigenvector-based procedure can distinguish adjacency matrices whose eigenvector differences fall entirely within this truncation bucket. The reverse inequality $k^* \le C \cdot \ell^*$ would require an adjacency-matrix perturbation bound tying $\|\hat A - A_0\|_F$ to $\lambda_{k+1}$, which we do not establish here.
\end{proof}

\subsection{Proof of Corollary~\ref{cor:dp_defense} (Noise as Defense)}
\label{proof:dp_defense}

\begin{proof}
When noise $\mathcal{N}$ is added, the observed embedding becomes $\widehat{\mathcal{P}}=\mathcal{P}+\mathcal{N}$. By the Lipschitz continuity of the Sinkhorn distance, $|S_\varepsilon(\mu_\mathcal{P}, \mu_{\mathrm{Eig}_k(\mathcal{L}(\mathcal{A}_0))})-S_\varepsilon(\mu_{\widehat{\mathcal{P}}}, \mu_{\mathrm{Eig}_k(\mathcal{L}(\mathcal{A}_0))})|\le L_{\mathrm{OT}}\|\mathcal{N}\|$. When $\|\mathcal{N}\|$ is large, $\mu_{\widehat{\mathcal{P}}}$ deviates substantially from $\mu_\mathcal{P}$, making the likelihood $p(\widehat{\mathcal{P}}|\mathcal{A})$ more diffuse around $\mathcal{A}_0$. Consequently, the posterior becomes less concentrated, the variational parameters $\phi_{ij}^*$ drift toward $0.5$, and thresholding produces more errors: $\mathbb{E}_{q^*(\mathcal{A}|\widehat{\mathcal{P}})}[d(\mathcal{A},\mathcal{A}_0)] \ge \mathbb{E}_{q^*(\mathcal{A}|\mathcal{P})}[d(\mathcal{A},\mathcal{A}_0)]$.
\end{proof}

\subsection{Proof of Lemma~\ref{lem:heat_kernel_gap} (Heat Kernel Separation Gap)}
\label{proof:heat_kernel_gap}

\begin{proof}
We analyze the heat-kernel matrix via its Taylor series $H_v(t)=e^{-t\mathcal{L}_v}=\sum_{j=0}^\infty \frac{(-t)^j}{j!}\mathcal{L}_v^j$ with the patch-adaptive time parameter $t_v = q_v^{-1}\log(C_3 q_v)$. Define the dimensionless parameter $\alpha=t_v \cdot d_{\max}^{(v)}$.

\textit{Non-edges} ($\mathrm{dist}(u,w)\ge 2$): Since $(I)_{uw}=0$ and $(\mathcal{L}_v)_{uw}=0$, the expansion starts at the quadratic term: $(H_v(t_v))_{uw}=\frac{t_v^2}{2}(\mathcal{L}_v^2)_{uw}+O(t_v^3)$. Since $(\mathcal{L}_v^2)_{uw}$ counts common neighbors, $|(\mathcal{L}_v^2)_{uw}|\le d_{\max}^{(v)}$, giving $(H_v(t_v))_{uw}\le \frac{\alpha t_v}{2}(1+O(\alpha))$.

\textit{Edges} ($(u,w)\in E_v$): Since $(\mathcal{L}_v)_{uw}=-1$, we get $(H_v(t_v))_{uw}\ge t_v(1-\alpha-O(\alpha^2))$.

Choosing $C_3$ so that $\alpha\le 1/4$ (possible since $\alpha = (d_{\max}^{(v)}/q_v)\log(C_3 q_v)$ with $d_{\max}^{(v)}/q_v \in (0, 1]$), the separation gap satisfies $\Delta_v \ge \frac{3}{4}t_v - \frac{1}{8}t_v = \frac{5}{8}t_v$. Substituting $t_v = q_v^{-1}\log(C_3 q_v)$ gives the uniform bound
\[
\Delta_v \ge \frac{5 \log(C_3 q_v)}{8 q_v} = \Omega(\log q_v / q_v).
\]
The alternative power-law bound $\Delta_v \ge 1/(2q_v^2)$ follows from the weaker choice $t_v = 1/q_v^2$, which satisfies $\alpha \le 1/4$ for $q_v \ge 2 d_{\max}^{(v)}$ and directly yields $\frac{5}{8}t_v = \frac{5}{8 q_v^2} \ge \frac{1}{2 q_v^2}$. (Earlier drafts stated the power-law bound with $t_v = q_{\max}^{-1} \log(C_3 q_{\max})$, but this choice of time parameter does not in general imply $\Delta_v \ge 1/(2 q_v^2)$ uniformly in $q_v$; we correct the implication here.)
\end{proof}

\subsection{Proof of Lemma~\ref{lem:patch_alignment} (Patch Alignment Error)}
\label{proof:patch_alignment}

\begin{proof}
Let $\mathcal{P}_v', \mathcal{P}_w'\in\mathbb{R}^{m\times k}$ be the true (unperturbed) overlap submatrices. Since they span the same subspace, there exists $\mathcal{Q}_{\mathrm{true}}\in SO(k)$ with $\mathcal{P}_v'=\mathcal{P}_w'\mathcal{Q}_{\mathrm{true}}$, yielding zero alignment error. The observed (perturbed) matrices are $\widehat{\mathcal{P}}_v'=\mathcal{P}_v'+E_v$ and $\widehat{\mathcal{P}}_w'=\mathcal{P}_w'+E_w$, where by the spectral perturbation bound (Theorem~\ref{thm:spectral_perturbation}), $\|E_v\|_F, \|E_w\|_F = O(\sqrt{m}\,\eta/\delta_v)$.

The residual using the ideal rotation is
\[
\|\widehat{\mathcal{P}}_v' - \widehat{\mathcal{P}}_w'\mathcal{Q}_{\mathrm{true}}\|_F = \|E_v - E_w\mathcal{Q}_{\mathrm{true}}\|_F \le \|E_v\|_F + \|E_w\|_F = O(\sqrt{m}\,\eta/\delta_v),
\]
where we used $\mathcal{P}_v'=\mathcal{P}_w'\mathcal{Q}_{\mathrm{true}}$, the triangle inequality, and the fact that orthogonal matrices preserve the Frobenius norm. Since the Procrustes solution $\mathcal{Q}_{vw}$ minimizes the Frobenius norm over all orthogonal matrices, its residual cannot exceed the residual for $\mathcal{Q}_{\mathrm{true}}$, giving $\mathcal{E}_{\mathrm{align}}=O(\sqrt{m}\,\eta/\delta_v)$.
\end{proof}

\subsection{Proof of Lemma~\ref{lem:error_accumulation} (Error Accumulation)}
\label{proof:error_accumulation}

\begin{proof}
We prove by induction on the depth in the assembly tree that the global-rotation error $\|\Delta\mathcal{R}_u\|_F$ grows linearly with depth. Define true and computed quantities: $Z_u=\mathcal{P}_u\mathcal{O}_u$ (true global embedding) and $X_u=\widehat{\mathcal{P}}_u\mathcal{R}_u$ (computed), where $\mathcal{O}_u$ is the true global rotation and $\mathcal{R}_u$ is the computed one.

\textit{Base case:} For the root $v_0$, $\mathcal{R}_{v_0}=\mathcal{O}_{v_0}=I_k$, so $\|\Delta\mathcal{R}_{v_0}\|_F=0$.

\textit{Inductive step:} Assume $\|\Delta\mathcal{R}_p\|_F\le C\cdot\mathrm{depth}(p)\cdot\mathcal{E}_{\mathrm{align}}$ for parent $p$. For child $u$ with $\mathrm{depth}(u)=\mathrm{depth}(p)+1$:
\[
\Delta\mathcal{R}_u = \mathcal{Q}_{pu}\mathcal{R}_p - \mathcal{O}_{pu}\mathcal{O}_p \approx \mathcal{O}_{pu}\Delta\mathcal{R}_p + \Delta\mathcal{Q}_{pu}\mathcal{O}_p,
\]
where we dropped the second-order term. Using the triangle inequality and norm preservation by orthogonal matrices:
\[
\|\Delta\mathcal{R}_u\|_F \le \|\Delta\mathcal{R}_p\|_F + \|\Delta\mathcal{Q}_{pu}\|_F = O(\mathrm{depth}(u)\cdot\mathcal{E}_{\mathrm{align}}).
\]

The total embedding error combines global-rotation and local-eigenvector errors:
\[
\|X_u-Z_u\|_F \le \|\mathcal{P}_u\|\cdot\|\Delta\mathcal{R}_u\|_F + \|E_u\|_F = O(\mathrm{depth}(u)\cdot\mathcal{E}_{\mathrm{align}}) + O(\mathcal{E}_{\mathrm{local}}).
\]
\end{proof}

\subsection{Proof of Proposition~\ref{prop:local_reduction} (Local Leakage Reduction)}
\label{proof:local_reduction}

\begin{proof}
The proof composes three results from Section~\ref{sec:afr:theory}.

\textbf{Step 1 (Local recovery).} By Theorem~\ref{thm:edge_recovery}, each patch $v$ with spectral gap $\delta_v$ satisfying the separation condition of Lemma~\ref{lem:heat_kernel_gap} admits exact local edge recovery via heat-kernel thresholding in time $O(q_v^2 k)$.

\textbf{Step 2 (Pairwise alignment).} By Lemma~\ref{lem:patch_alignment}, for any edge $(v,w) \in E$ whose patches satisfy the overlap condition (Assumption~\ref{assumption:afr_overlap}), the Procrustes alignment error is bounded by $O(\sqrt{m}\,\eta/\delta_v)$, and the RANSAC procedure (Lemma~\ref{lem:ransac}) achieves this with probability $1-\beta$ in $O(\log\beta / \log(1-p_{vw}^m))$ iterations.

\textbf{Step 3 (Error accumulation).} By Lemma~\ref{lem:error_accumulation}, global embedding error accumulates as $O(\mathrm{depth}(u) \cdot \mathcal{E}_{\mathrm{align}})$. The depth of any vertex in a BFS tree over the patch graph is at most $\mathrm{diam}(\mathcal{G}_P) = D$, so the global error is bounded by $O(D \cdot \varepsilon)$; when $D = O(\mathrm{polylog}\, n)$, the bound is polylogarithmic. For graphs where $D$ is closer to $\mathrm{diam}(G)$, the Bundle Adjustment step (Section~\ref{sec:afr:method}, Stage~3) is invoked empirically to reduce this accumulation.

\textbf{Step 4 (Connection to Proposition~\ref{thm:spectral_leakage}).} Proposition~\ref{thm:spectral_leakage} establishes, via a Bayesian/Sinkhorn argument, that polynomial-time graph recovery is \emph{possible} once $k \ge k^*(\varepsilon)$. Theorem~\ref{thm:edge_recovery} (heat-kernel thresholding) does not implement the same algorithm but provides a \emph{stronger} local guarantee under the same spectral-gap regime: when $k \ge k^*_v(\varepsilon)$ holds for the patch Laplacian $\mathcal{L}_v$ and the separation gap of Lemma~\ref{lem:heat_kernel_gap} is positive, exact (not just $\varepsilon$-approximate) edge recovery is achieved deterministically. The two results agree on \emph{when} recovery is possible (i.e., the threshold $k^*_v$) but Theorem~\ref{thm:edge_recovery} provides a constructive procedure whose error guarantee is stricter than the high-probability bound of Proposition~\ref{thm:spectral_leakage}. Thus the local-to-global pipeline composed in Steps 1--3 inherits the activation conditions of Proposition~\ref{thm:spectral_leakage} and the stricter error bound of Theorem~\ref{thm:edge_recovery}; we do not claim that AFR runs the variational Bayesian procedure of Proposition~\ref{thm:spectral_leakage}.

Composing the four steps gives a polynomial-time local-to-global reconstruction pipeline, establishing that fragmented spectral sharing is no more private than global sharing up to a polynomial-factor increase in attack complexity.
\end{proof}

\subsection{Proof of Proposition~\ref{prop:cohesion}}
\label{proof:cohesion}

\begin{proof}
Partition $E^*(\widehat{V})$ into internal edges $E_{\text{int}}$ (both endpoints in the same island) and boundary edges $B$ (endpoints in different islands). Similarly partition $\text{TP} = \widehat{E} \cap E^*(\widehat{V})$ into $\text{TP}_{\text{int}}$ and $\text{TP}_{\text{bound}}$. By definition:
\[
\text{Recall} = \frac{|\text{TP}_{\text{int}}| + |\text{TP}_{\text{bound}}|}{|E_{\text{int}}| + |B|}, \qquad
\text{Cohesion} = \frac{|\text{TP}_{\text{int}}|}{|E_{\text{int}}|}.
\]

Substituting $|\text{TP}_{\text{int}}| = \text{Cohesion} \cdot |E_{\text{int}}|$ and $|E_{\text{int}}| = (1-\rho_B)\,|E^*(\widehat{V})|$:
\[
\text{Recall} = (1-\rho_B)\,\text{Cohesion} + \frac{|\text{TP}_{\text{bound}}|}{|E^*(\widehat{V})|}.
\]
Since $0 \le |\text{TP}_{\text{bound}}| \le |B|$, the lower bound uses $|\text{TP}_{\text{bound}}| \ge 0$ and the upper bound uses $|\text{TP}_{\text{bound}}|/|E^*(\widehat{V})| \le \rho_B$.
\end{proof}

\subsection{Bundle Adjustment and Cross-Voting Details}
\label{appendix:bundle}

\subsubsection{Bundle Adjustment}

For a reconstructed island $G_i$ with stitched overlaps $\varepsilon_i$, we minimize:
\[
\{\mathcal{Q}_{vw}^*\} = \arg\min_{\{\mathcal{Q}_{vw}\in SO(k)\}} \Phi_i(\{\mathcal{Q}_{vw}\}) := \sum_{(v,w)\in\varepsilon_i} \|\mathcal{P}_v' - \mathcal{P}_w' \mathcal{Q}_{vw}\|_F^2.
\]

This is a non-convex optimization over the product manifold $\mathcal{M} = \prod_{j=1}^{|\varepsilon_i|} SO(k)_j$. Since $\Phi_i$ is smooth on this compact manifold, Riemannian gradient descent \citep{Absil2007} generates a sequence satisfying the Armijo condition:
\[
\Phi_i(\{\mathcal{Q}^{(t+1)}\}) \le \Phi_i(\{\mathcal{Q}^{(t)}\}) - c\,\eta_t\,\|\text{grad}\,\Phi_i(\{\mathcal{Q}^{(t)}\})\|_F^2,
\]
ensuring monotonic decrease and convergence to a first-order stationary point.

\subsubsection{Cross-Voting for Inter-Island Edges}

For nodes $(u,w)$ in different islands, we model co-occurrence as a hypothesis test ($H_1$: edge exists, $H_0$: no edge). The vote count $C(u,w)$ follows $\text{Binomial}(m, \pi_a)$ for $a\in\{0,1\}$ with $\pi_1 > \pi_0$. By Chernoff bounds:
\begin{align*}
\mathbb{P}(\text{False Positive}) &\le \exp(-m\,D_{\text{KL}}(C_0/m \| \pi_0)), \\
\mathbb{P}(\text{False Negative}) &\le \exp(-m\,D_{\text{KL}}(C_0/m \| \pi_1)).
\end{align*}
Both errors decrease exponentially with the amount of evidence $m$. We use the sigmoid mapping $P_{\text{inter}}(u,w) = (1+e^{-\kappa(C(u,w)-C_0)})^{-1}$ (with steepness $\kappa$, distinct from the RANSAC failure rate $\beta$ and the boundary ratio $\rho_B$ of Proposition~\ref{prop:cohesion}) to translate vote counts to probabilities.

\section{Experimental Details}
\label{appendix:experimental}

\subsection{AFR Hyperparameter Settings}
\label{appendix:hyperparams}

Table~\ref{tab:afr_hyperparams} specifies the hyperparameter values used for AFR throughout all experiments, determined on a validation set and held constant across all datasets.

\begin{table}[h!]
    \centering
    \caption{Hyperparameter settings for AFR. The adaptive-threshold scaling $\gamma$ must grow with $k$ to keep the adaptive mechanism active (and not floor-dominated) as $k$ increases; see App.~\ref{appendix:sensitivity} for details.}
    \label{tab:afr_hyperparams}
    \begin{tabular}{lccp{6cm}}
        \toprule
        Parameter & Search space & Chosen value & Description \\
        \midrule
        $s_{\min}$ & $\{0.5, 0.6, 0.7\}$ & 0.6 & Min.\ fidelity score for a core patch \\
        $k_{\text{base}}$ & $\{3, 5, 7\}$ & 5 & Base overlap size for adaptive threshold \\
        $\gamma$ (at $k=16$) & $\{20, 30, 40\}$ & 30 & Adaptive-penalty scaling; kept in fidelity-dominated regime \\
        $\gamma$ (at $k=32$) & $\{40, 70, 100\}$ & 70 & Re-tuned to keep adaptive branch binding \\
        $\gamma$ (at $k=64$) & $\{100, 140, 200\}$ & 140 & Re-tuned to keep adaptive branch binding \\
        $\delta_{\min}$ & $\{0.05, 0.1, 0.2\}$ & 0.1 & Min.\ spectral gap for a core patch \\
        $t$ (heat-kernel time) & $\{0.4, 0.8, 1.6\}$ & 0.8 & Small-$t$ regime where Lemma~\ref{lem:heat_kernel_gap} applies \\
        $C_0$ (cross-vote thresh.) & $\{1, 2, 3, 5\}$ & 2 & Min.\ co-occurrences for inter-island edge \\
        $\kappa$ (cross-vote steep.) & $\{0.5, 1.0, 2.0\}$ & 1.0 & Sigmoid steepness in $P_{\text{inter}}$ \\
        $M$ (RANSAC iterations) & fixed & 300 & Sufficient for $p_{vw} \ge 0.3$, $\beta = 0.05$, $m = k+1$ \\
        Top-$k$ output filter & $\{3, 5, 10\}$ & 5 & Per-node output edge retention \\
        \bottomrule
    \end{tabular}
\end{table}

\FloatBarrier
\subsection{Sensitivity Analysis}
\label{appendix:sensitivity}

\paragraph{Sensitivity of fidelity score parameter $\alpha$.}
The parameter $\alpha$ in $s_v = \alpha\,\rho_v + (1-\alpha)\,\mathcal{E}_v$ balances spectral stability (gap-to-truncation ratio $\rho_v$) and structural distinctiveness (entropy $\mathcal{E}_v$). Figure~\ref{fig:alpha_sensitivity} shows the analysis on the Cora validation set, varying $\alpha$ from 0 (entropy only) to 1 (stability only). Optimal performance is achieved around $\alpha=0.7$, with strong performance across the range $[0.5,0.8]$. The performance at either extreme is significantly lower, confirming the complementary value of combining both components. We fixed $\alpha=0.7$ for all experiments, though this single-dataset sweep does not establish sensitivity behavior across all datasets; extending the $\alpha$ sweep to multiple datasets is a recommended robustness check for the camera-ready.

\begin{figure}[h!]
    \centering
    \includegraphics[width=0.65\linewidth]{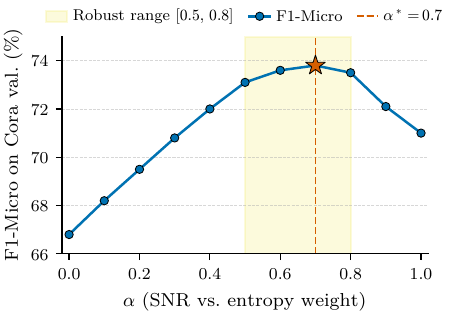}
    \caption{Sensitivity analysis for the fidelity score parameter $\alpha$. F1 score on the Cora validation set as a function of $\alpha$. The vertical dashed line indicates the chosen value and the shaded region highlights the robust range.}
    \label{fig:alpha_sensitivity}
\end{figure}

\paragraph{Extended sensitivity analysis ($s_{\min}$ and $k_{\text{base}}$).}
Figure~\ref{fig:smin_kbase} presents the sensitivity analysis for the minimum fidelity threshold $s_{\min}$ and base overlap size $k_{\text{base}}$.

\paragraph{Impact of $s_{\min}$.}
Under-filtering ($s_{\min}<0.5$) degrades performance (68.0\% at $s_{\min}=0.3$) as noisy patches propagate errors. Over-filtering ($s_{\min}>0.7$) discards valid information (65.0\% at $s_{\min}=0.9$). Peak performance (73.8\%) occurs at $s_{\min}=0.6$ with stability in $[0.5,0.7]$.

\paragraph{Interaction between the floor $k+1$, $k_{\text{base}}$, and $\gamma$.}
The effective overlap requirement is $d_{\text{adaptive}} = \max\{k+1,\, k_{\text{base}} + \gamma(1-\min\{s_v,s_w\})\}$. Two regimes arise:

\emph{Floor-dominated regime ($k_{\text{base}} + \gamma(1-\min\{s_v,s_w\}) \le k+1$).} The Procrustes well-posedness floor $k+1$ binds, and $d_{\text{adaptive}}=k+1$ irrespective of fidelity. This regime is reached when $\gamma(1-s_{\min})$ is small relative to $k$.

\emph{Fidelity-dominated regime.} For $\gamma \ge (k+1-k_{\text{base}})/(1-s_{\min})$, the second argument can exceed the floor for low-fidelity patches, and the adaptive penalty becomes binding for those pairs. With $k_{\text{base}}=5$, $s_{\min}=0.6$, $k=16$, this requires $\gamma \ge 30$; we therefore set $\gamma=30$ in all main experiments to keep the adaptive mechanism active for $k\le 32$.

\emph{Sensitivity sweep ($k=8$).} The sweep was conducted on Cora at $k=8$ (so the $k+1=9$ floor does not bind for $k_{\text{base}}\le 7$ and $\gamma=10$, isolating the effect of $k_{\text{base}}$). Small values ($k_{\text{base}}\le 3$) produce geometric ambiguity (61.5\% at $k_{\text{base}}=2$); large values ($k_{\text{base}}\ge 7$) prevent valid patches from merging. Peak performance (73.8\%) at $k_{\text{base}}=5$, robust in $[4,6]$. For the main experiments at $k\in\{16,32,64\}$, we re-tuned $\gamma$ on the validation set to maintain the same fidelity-dominated regime; the chosen $\gamma=30$ ensures the adaptive component remains active for low-fidelity patch pairs even at $k=32$. This re-tuning produces the values reported in Table~\ref{tab:afr_hyperparams}.

\begin{figure}[!htbp]
    \centering
    \includegraphics[width=\linewidth]{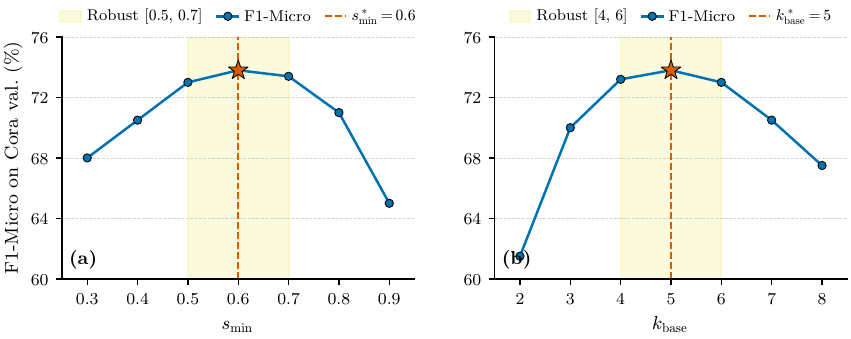}
    \caption{Sensitivity analysis for additional hyperparameters on the Cora validation set. (Left) Impact of $s_{\min}$. (Right) Impact of $k_{\text{base}}$. Both exhibit convex performance curves with clear robust operating ranges (shaded areas).}
    \label{fig:smin_kbase}
\end{figure}

\FloatBarrier
\subsection{GNN Architecture Details}
\label{appendix:gnn}

\paragraph{Graph Convolutional Network (GCN).}
GCN \citep{Kipf2017GCN} iteratively updates node representations through normalized mean aggregation:
\[
h_i^{\ell+1} = \text{ReLU}\left(U^\ell \frac{1}{\sqrt{\text{deg}_i}\sqrt{\text{deg}_j}} \sum_{j\in\mathcal{N}_i} h_j^\ell\right).
\]

\paragraph{GraphSAGE.}
GraphSAGE \citep{Hamilton2017GraphSAGE} is an inductive GNN that learns aggregator functions. We use the max-pooling aggregator:
\[
\widehat{h}_i^{\ell+1} = \text{ReLU}\left(U^\ell \, \text{Concat}\left(h_i^\ell, \max_{j\in\mathcal{N}_i} \text{ReLU}(V^\ell h_j^\ell)\right)\right),
\]
followed by $\ell_2$-normalization: $h_i^{\ell+1} = \widehat{h}_i^{\ell+1}/\|\widehat{h}_i^{\ell+1}\|_2$.

\paragraph{Graph Attention Network (GAT).}
GAT \citep{Velickovic2018GAT} uses multi-head self-attention to learn the relative importance of neighbors:
\[
h_i^{\ell+1} = \text{Concat}_{k=1}^K \left(\text{ELU}\left(\sum_{j\in\mathcal{N}_i} e_{ij}^{k,\ell} U^{k,\ell} h_j^\ell\right)\right),
\]
where $e_{ij}^{k,\ell}$ are softmax-normalized attention coefficients computed by:
\[
\hat{e}_{ij}^{k,\ell} = \text{LeakyReLU}\left(V^{k,\ell} \text{Concat}(U^{k,\ell} h_i^\ell, U^{k,\ell} h_j^\ell)\right).
\]

\paragraph{Graph Isomorphism Network (GIN).}
GIN \citep{Xu2019GIN} achieves maximum discriminative power among GNNs by using sum aggregation with a learnable parameter $\varepsilon$:
\[
h_i^{\ell+1} = \text{MLP}^\ell\left((1+\varepsilon)\,h_i^\ell + \sum_{j\in\mathcal{N}_i} h_j^\ell\right).
\]

\paragraph{Graph Transformer architectures.}
We also evaluate GraphGPS \citep{GPS2022}, SAN \citep{Kreuzer2021SAN}, Graphormer \citep{Ying2021Graphormer}, GIN+VN \citep{Chen2023MPNNVN}, and variants with Laplacian Positional Encodings (LapPE) \citep{Dwivedi2023} and Random Walk Structural Encodings (RWSE).

\subsection{GNN Hyperparameters}
\label{appendix:gnn_hyper}

\begin{table}[h!]
    \centering
    \caption{Hyperparameters for GNN architectures.}
    \label{tab:gnn_hyperparams}
    \begin{tabular}{lrrrr}
        \toprule
        Model & Hidden dim.\ & Layers & Heads & Dropout \\
        \midrule
        GraphSAGE / GCN / GIN & 64 & 2 & -- & 0.5 \\
        GAT & 32 & 2 & 4 & 0.5 \\
        GIN+VN & 64 & 3 & -- & 0.5 \\
        GPS / SAN / Graphormer & 64 & 4 & 4 & 0.3 \\
        GPS+LapPE / SAN+LapPE & 64 & 4 & 4 & 0.3 \\
        SAN+RWSE & 64 & 4 & 4 & 0.3 \\
        \bottomrule
    \end{tabular}
\end{table}

All experiments were conducted on an NVIDIA A100 GPU (80GB VRAM). For Task~2, GNNs were trained for a maximum of 100 epochs using the Adam optimizer (learning rate 0.01, weight decay 0.0005) with early stopping based on validation F1-Micro. Batch size 32, random seed 42.

\FloatBarrier
\subsection{Computational Complexity}
\label{appendix:complexity}

\begin{proposition}[Computational complexity of AFR]
\label{prop:complexity}
Let $n=|V|$, $q_{\max}=\max_v|P_v|$, $k$ the embedding dimension, and $|E_{\text{core}}|$ the number of adjacent core-patch pairs. The complexity of AFR's three stages is:
\begin{enumerate}[nosep]
\item \textbf{Stage 1} (local reconstruction): $O(n \cdot q_{\max}^2 k)$ for full-matrix heat-kernel materialization, reducing to $O(n \cdot q_{\max} k^2)$ with on-demand entry computation.
\item \textbf{Stage 2} (island assembly): $O(|E_{\text{core}}| \cdot M \cdot k^3)$, where $M$ is the number of RANSAC iterations.
\item \textbf{Stage 3} (global refinement): $O(|\varepsilon_i| \cdot k^3)$ per iteration, where $|\varepsilon_i|$ is the number of internal stitches.
\end{enumerate}
\end{proposition}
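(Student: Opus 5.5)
The plan is to bound each stage separately by counting the dominant linear-algebra primitives that Algorithm~\ref{alg:afr} invokes, and then to sum the three bounds.

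\textbf{Stage 1.} There are at most $n$ patches, one per observed vertex, and each has $q_v\le q_{\max}$ nodes. Materializing $H_v^{(k)}=\mathcal{P}_v e^{-\Lambda_k t}\mathcal{P}_v^\top$ proceeds in two steps. First, scale the $k$ columns of $\mathcal{P}_v$, which costs $O(q_v k)$. Second, form the $q_v\times q_v$ product, which costs $O(q_v^2 k)$. Thresholding and computing the degree entropy $\mathcal{E}_v$ then add only $O(q_v^2)$. The quantities $\delta_v$, $\eta_v$, $\rho_v$ and $s_v$ cost $O(1)$, since $\lambda_k$ and $\lambda_{k+1}$ are part of the observation in Definition~\ref{def:local_spectral_embedding}. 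Summing over the patches gives $O(n\,q_{\max}^2 k)$.

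For the on-demand variant, each entry $(H_v^{(k)})_{ij}$ is an inner product of two scaled rows and so costs $O(k)$. The claim $O(n\,q_{\max}k^2)$ then needs two ingredients. The first is that only $O(q_{\max})$ candidate entries are evaluated per patch; for instance, the top-$k$ output filter of Table~\ref{tab:afr_hyperparams} restricts candidates to $O(k)$ entries per row for the anchor node. The second is the cost of forming the $k\times k$ Gram-type quantities in $O(q_{\max}k^2)$. I would state explicitly which entries are evaluated so that the count $O(q_{\max}k^2)$ is justified.

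\textbf{Stage 2.} Each queue pop that passes the eligibility check runs RANSAC-Procrustes for $M$ iterations. A single iteration draws a minimal sample of size $m=k+1$, which matches the floor in $d_{\text{adaptive}}$. It then forms the $k\times k$ cross-covariance in $O(k^3)$ time, since $m=O(k)$, and computes its SVD in $O(k^3)$. Scoring the consensus over $|I_{vw}|\le q_{\max}$ correspondences costs $O(q_{\max}k^2)$. This term has to be either absorbed by assuming $q_{\max}=O(k)$ for eligible overlaps or made explicit.

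Each adjacent core-patch pair enters the queue a bounded number of times, at most once initially and once per island merge that involves it. Merges themselves can be handled by a union-find structure at near-constant amortized cost. The stage therefore costs $O(|E_{\text{core}}|\cdot M\cdot k^3)$, plus $O(|E_{\text{core}}|\log|E_{\text{core}}|)$ for the priority queue, which is dominated.

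\textbf{Stage 3.} One Riemannian gradient step on $\prod SO(k)$ for island $i$ does the following for each stitch $(v,w)\in\varepsilon_i$:
\begin{itemize}
\item computes the Euclidean gradient $-2\mathcal{P}_w'^\top(\mathcal{P}_v'-\mathcal{P}_w'\mathcal{Q}_{vw})$;
\item projects it onto the tangent space by a skew-symmetrization;
\item retracts the result via QR or the matrix exponential.
\end{itemize}
Each of these operations costs $O(k^3)$ once the overlap matrices are reduced to their $k\times k$ cross-products $\mathcal{P}_w'^\top\mathcal{P}_v'$ and $\mathcal{P}_w'^\top\mathcal{P}_w'$, which are precomputed once. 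The Armijo line search adds only a constant number of objective evaluations per step. This gives $O(|\varepsilon_i|\,k^3)$ per iteration. The cross-voting pass is a separate counting loop over patch intersections and costs $O(n\,q_{\max}^2)$.

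\textbf{Main obstacle.} The main difficulty is bookkeeping rather than mathematics. The overlap-size terms $O(q_{\max}k^2)$ in the Stage~2 consensus scoring and in the Stage~3 precomputation must be shown to be dominated by the $k^3$ terms or else reported explicitly. Similarly, the on-demand Stage~1 bound depends on exactly which heat-kernel entries are queried. I would resolve both points by stating the standing assumption $q_{\max}=O(k)$ for overlaps, or $|I_{vw}|=O(k)$ after RANSAC subsampling, and noting the general bound alongside it.
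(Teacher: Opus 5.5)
Your plan follows the same route as the paper's proof, which counts the dominant primitives stage by stage: the $O(q_v^2k)$ product $\mathcal{P}_v e^{-\Lambda_k t}\mathcal{P}_v^\top$ per patch, one $k\times k$ SVD per RANSAC iteration per core pair, and $O(k^3)$ matrix work per stitch per Riemannian step. The paper's version is terser and silently omits two terms you flag. The first is the $O(|I_{vw}|k^2)$ consensus scoring inside every RANSAC iteration. The second is the one-time $O(|I_{vw}|k^2)$ precomputation of the overlap cross-products, which is what lets a Stage~3 step cost only $O(k^3)$ per stitch. Assuming $|I_{vw}|=O(k)$ or reporting the extra term, as you propose, is a refinement the paper's argument needs but does not make; it is not a departure from that argument.

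The one step that would not close as you sketch it is the on-demand Stage~1 bound. The ``Top-$k$ output filter'' of Table~\ref{tab:afr_hyperparams} is a per-node output edge-retention count (value $5$), unrelated to the embedding dimension. In general a top-$r$ selection must evaluate a whole row before it can select, so it does not reduce the number of kernel entries computed.

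More fundamentally, Stage~1 outputs the full thresholded matrix $\widehat{\mathcal{A}}_v$ and computes $\mathcal{E}_v$ from its degree sequence. Each node's degree in general requires comparing its entire row of $H_v^{(k)}$ with the threshold. So all $\binom{q_v}{2}$ entries are evaluated whenever $s_v$ is computed, at cost $O(q_v^2k)$ with direct inner products. That matches the full-materialization bound, not the on-demand one, unless $q_v=O(k)$, in which case the two Stage~1 bounds coincide anyway.

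Outside that regime, the figure $O(n\,q_{\max}k^2)$ can only be read the way the paper's proof reads it. Stage~1 is charged only for preparing the scaled basis $\mathcal{P}_v e^{-\Lambda_k t}$, the ``$O(|P_v|k^2)$ matrix-diagonal-matrix product''. Entry evaluations, thresholding and the entropy are charged to whatever later queries them. You should state that accounting convention explicitly rather than try to justify the bound through a restricted set of queried entries.
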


\begin{proof}
\textbf{Stage 1 (Local reconstruction):} The total cost is $O(n \cdot q_{\max}^2 k)$ for forming the full heat kernel via $\mathcal{P}_v \mathcal{P}_v^\top$; this reduces to $O(n \cdot q_{\max} k^2)$ when only the top-$k$ spectral basis is stored and entries are computed on demand. Concretely, for each of the $n$ patches, the heat kernel $H_v^{(k)}(t) = \mathcal{P}_v e^{-\Lambda_k t} \mathcal{P}_v^\top$ requires $O(|P_v|\,k^2)$ for the matrix-diagonal-matrix product plus $O(|P_v|^2 k)$ for the outer product; bounding $|P_v| \le q_{\max}$, the cost per patch is $O(q_{\max}^2 k)$ for the materialization of the full-matrix or $O(q_{\max} k^2)$ for the computation of the entry on-demand. Summing over $n$ patches yields the stated bounds.

\textbf{Stage 2.} For each of $|E_{\text{core}}|$ candidate pairs, RANSAC runs for $M$ iterations, each involving an SVD of a $k\times k$ matrix at cost $O(k^3)$. Total: $O(|E_{\text{core}}| \cdot M \cdot k^3)$.

\textbf{Stage 3.} Each iteration of the Riemannian gradient descent computes gradient contributions from each of $|\varepsilon_i|$ stitches, each involving $k\times k$ matrix operations: $O(|\varepsilon_i| \cdot k^3)$ per iteration.
\end{proof}

\subsection{Implementation Details}
\label{appendix:impl}

\paragraph{Task 1 and Task 3 configurations.}
AFR used a heat kernel time $t=0.8$, RANSAC iterations $M=300$, and minimum fidelity score $s_{\min}=0.6$ (consistent with the main-text setup in Section~\ref{sec:exp:setup} and the sensitivity analysis in Appendix~\ref{appendix:hyperparams}). Final edges were predicted using a top-5 filter per node. Eigen-sync constructs a kNN graph with $k=10$. GAE/VGAE/GCN-LE were trained for 200 epochs (Adam, LR$=$0.01), sharing a 2-layer GCN encoder (128 hidden, 64 output dimensions), with top-15 neighbor selection.

For link prediction, cosine similarity scores links using embeddings from a pre-trained GraphSAGE model. The SEAL adaptation was trained for 20 epochs (Adam, LR$=$0.001, batch size 64) to classify 1-hop enclosing subgraphs from a co-occurrence graph.

\subsection{Evaluation Protocol at Scale}
\label{appendix:scale}

For the graph-level benchmarks (PROTEINS, PascalVOC-SP, COCO-SP, PCQM-Contact), \emph{we run LoGraB's fragmentation protocol per graph}. That is, each individual graph in the benchmark is fragmented, spectrally truncated, and reconstructed independently, and the reported F1 is the macro-average across per-graph F1 scores. We emphasize that reconstructing ``edges between two superpixels in different images'' is not a meaningful task, so we never attempt cross-graph reconstruction on these datasets.

\paragraph{Memory footprint.}
Ground-truth adjacency matrices for each graph are held in sparse CSR format. For single-graph datasets (Cora, CiteSeer, PubMed, ogbn-arXiv, BlogCatalog), the total memory footprint is modest: ogbn-arXiv's $170\mathrm{K} \times 170\mathrm{K}$ sparse adjacency with $1.17\mathrm{M}$ non-zeros fits in approximately $40$~MB in scipy.sparse format. For per-graph-level benchmarks, only the currently-processing graph is resident in memory, so COCO-SP (avg.\ 477 nodes per graph) has a per-graph footprint of a few hundred KB.

\paragraph{F1 computation.}
For each graph, AFR emits at most $\text{top-5}$ predicted neighbors per node. We compute true positives and false positives by iterating over the predicted edge list (at most $5 |V|$ entries per graph) and testing membership in the ground-truth sparse CSR (O(1) amortized). False negatives are ground-truth edges not among the top-5 predictions of either endpoint, computed by a single sparse-matrix traversal. This pipeline is $O(|V| + |E|)$ per graph and avoids any $\binom{|V|}{2}$ iteration.

\paragraph{Why AFR scales where Eigen-sync does not.}
Eigen-sync materializes an $(mk) \times (mk)$ synchronization block matrix for $m$ patches and extracts its top-$k$ eigenvectors. At large scale (COCO-SP: $\sum_i m_i \approx 5.9 \times 10^7$ when processed as one graph, or $\max_i m_i \approx 60$ when processed per graph of average size 477 at $d=1$), the per-graph matrix is small enough to handle, but the earlier version of our pipeline that attempted cross-graph synchronization produced OOM — hence the ``OOM'' entry in Table~\ref{tab:runtime} for Eigen-sync on COCO-SP. AFR, by contrast, processes patches via a streaming priority queue and never materializes the full patch-pair matrix, so its per-graph memory footprint is bounded by the largest island's embedding ($O(|V_{\text{island}}| \cdot k)$, typically a few MB).

\section{Additional Results}
\label{appendix:additional}

\subsection{Runtime Analysis}
\label{appendix:runtime}

Table~\ref{tab:runtime} presents the total reconstruction runtime (training + inference) across all experimental configurations, measured on a single NVIDIA A100 GPU. For GNN baselines, the runtime includes 200 training epochs. For optimization-based methods, it represents the complete execution pipeline.

\begin{table*}[h!]
\centering
\small
\caption{Reconstruction runtime comparison across all configurations. Runtimes include complete training/optimization pipelines. ``m'' denotes minutes, ``h'' denotes hours. OOM indicates out-of-memory.}
\label{tab:runtime}
\resizebox{\textwidth}{!}{%
\begin{tabular}{cclcccccc}
\toprule
ID & Dataset & Scale & GAE & VGAE & GCN-LE & PointNetLK & Eigen-sync & AFR \\
\midrule
1 & Cora & Small (2.7K) & 5.2m & 6.1m & 4.5m & 1.8m & 1.2m & 2.8m \\
2 & Cora & Small & 5.5m & 6.4m & 4.8m & 2.1m & 1.4m & 3.5m \\
3 & Cora & Small & 5.8m & 6.8m & 5.1m & 2.4m & 1.6m & 4.2m \\
4 & Cora & Small & 6.1m & 7.2m & 5.5m & 2.8m & 1.9m & 4.8m \\
5 & Cora & Small & 4.8m & 5.6m & 4.1m & 1.5m & 1.0m & 2.5m \\
6 & Cora & Small & 5.9m & 6.9m & 5.2m & 2.5m & 1.7m & 3.9m \\
7 & Cora & Small & 5.1m & 6.0m & 4.4m & 1.8m & 1.2m & 2.7m \\
8 & Cora & Small & 5.3m & 6.2m & 4.6m & 1.9m & 1.3m & 2.9m \\
\midrule
9 & CiteSeer & Small (3.3K) & 7.5m & 8.8m & 6.5m & 2.5m & 1.8m & 4.5m \\
10 & CiteSeer & Small & 7.2m & 8.5m & 6.2m & 2.4m & 1.7m & 4.2m \\
11 & CiteSeer & Small & 8.1m & 9.5m & 7.0m & 2.9m & 2.1m & 5.1m \\
\midrule
12 & PubMed & Medium (19K) & 42.5m & 48.2m & 38.8m & 15.5m & 10.2m & 25.6m \\
13 & PubMed & Medium & 45.1m & 51.5m & 41.2m & 18.1m & 12.5m & 28.4m \\
\midrule
14 & ogbn-arXiv & Large (170K) & 3.2h & 3.8h & 2.9h & 1.2h & 0.9h & 2.4h \\
\midrule
15 & BlogCatalog & Medium (10K) & 55.2m & 62.5m & 48.1m & 12.8m & 8.6m & 35.2m \\
\midrule
16 & PROTEINS & Large (43K) & 2.5h & 2.9h & 2.1h & 1.1h & 0.8h & 1.8h \\
\midrule
17 & PascalVOC-SP & Massive (5.4M) & 5.8h & 6.9h & 4.9h & 2.2h & 1.8h & 8.5h \\
18 & COCO-SP & Massive (58M) & 18.5h & 22.1h & 16.2h & 8.4h & OOM & 38.2h \\
19 & PCQM-Contact & Massive (15M) & 9.2h & 11.5h & 8.1h & 4.5h & 3.1h & 14.6h \\
\bottomrule
\end{tabular}}
\end{table*}

Despite the multi-hour execution times on massive datasets, AFR remains practical. An execution time of approximately 38~hours for COCO-SP (58M nodes) is negligible in an offline privacy attack context, where the adversary aims to recover sensitive topology from persistent, static spectral artifacts.

\FloatBarrier
\subsection{Efficiency Analysis}
\label{appendix:efficiency}

Table~\ref{tab:efficiency} reports the number of trainable parameters and average inference time for GNN models on key LoGraB instances.

\begin{table*}[h!]
\centering
\small
\caption{Efficiency analysis of GNN models on key LoGraB instances. Parameters in thousands (K); inference time in milliseconds (ms) averaged over the test set.}
\label{tab:efficiency}
\resizebox{\textwidth}{!}{
\begin{tabular}{cllcrrrrrrrrrrrr}
\toprule
\multirow{2}{*}{ID} & \multirow{2}{*}{Dataset} & \multirow{2}{*}{Strategy} & \multirow{2}{*}{$(d,p,k,\sigma)$} & \multicolumn{2}{c}{SAGE} & \multicolumn{2}{c}{GCN} & \multicolumn{2}{c}{GAT} & \multicolumn{2}{c}{GIN} & \multicolumn{2}{c}{GPS} & \multicolumn{2}{c}{SAN} \\
\cmidrule(lr){5-6} \cmidrule(lr){7-8} \cmidrule(lr){9-10} \cmidrule(lr){11-12} \cmidrule(lr){13-14} \cmidrule(lr){15-16}
& & & & K & ms & K & ms & K & ms & K & ms & K & ms & K & ms \\
\midrule
1 & Cora & d-hop & (1,0.6,32,0.05) & 188.5 & 36.7 & 94.3 & 22.4 & 188.8 & 24.6 & 98.4 & 35.5 & 441.7 & 102.8 & 161.4 & 44.7 \\
2 & Cora & d-hop & (2,0.6,32,0.05) & 188.5 & 62.4 & 94.3 & 74.3 & 188.8 & 54.2 & 98.4 & 75.2 & 441.7 & 206.3 & 161.4 & 494.9 \\
9 & CiteSeer & d-hop & (1,0.6,32,0.05) & 478.9 & 42.8 & 239.5 & 50.8 & 479.2 & 56.3 & 243.7 & 40.9 & 586.9 & 168.4 & 306.6 & 81.2 \\
13 & PubMed & d-hop & (1,0.6,32,0.05) & 68.5 & 216.2 & 34.3 & 264.4 & 68.9 & 294.3 & 38.5 & 209.9 & 381.7 & 913.2 & 101.4 & 421.6 \\
14 & ogbn-arXiv & d-hop & (1,0.6,32,0.05) & 25.7 & 1168 & 12.9 & 1395 & 26.1 & 1606 & 17.1 & 1113 & 360.3 & 4903 & 80.0 & 2350 \\
15 & BlogCat & d-hop & (1,0.6,32,0.05) & 1053 & 834 & 526.6 & 457 & 1054 & 468 & 530.8 & 434 & 874.0 & 911 & 593.7 & 719 \\
17 & Pascal & d-hop & (1,0.6,32,0.05) & 22.4 & 8120 & 10.5 & 9250 & 22.8 & 10330 & 12.1 & 8050 & 50.1 & 15200 & 48.3 & 17880 \\
18 & COCO-SP & d-hop & (1,0.6,32,0.05) & 26.0 & 65100 & 14.1 & 73800 & 26.4 & 82500 & 15.7 & 64800 & 53.7 & 121500 & 51.9 & 143100 \\
\bottomrule
\end{tabular}}
\end{table*}

GCN and GIN are the most economical architectures, maintaining strong performance with significantly fewer parameters. GraphSAGE consistently delivers top-tier accuracy at moderate parameter cost. Transformer-based models achieve the highest accuracy but at substantial computational cost, on Cora, GPS has over 4.5$\times$ the parameters of GCN (441.7K vs.\ 94.3K).

\FloatBarrier
\subsection{Additional DP Defense Results}
\label{appendix:dp_extra}

Table~\ref{tab:dp_extra} reports defense results for the three remaining benchmarks (PascalVOC-SP, COCO-SP, PCQM-Contact).

\begin{table*}[h!]
\centering
\small
\caption{Privacy-utility trade-off under $(\epsilon,\delta)$-Gaussian DP for large-scale benchmarks.}
\label{tab:dp_extra}
\resizebox{\textwidth}{!}{%
\begin{tabular}{l c ccccc c c}
\toprule
Dataset & $\epsilon$ & AFR & VGAE & GAE & PtNetLK & Eigen-sync & Utility & $\Delta$Acc \\
\midrule
\multirow{5}{*}{PascalVOC-SP}
& $\infty$ & \textbf{55.0$\pm$1.8} & 47.2$\pm$2.6 & 45.3$\pm$2.8 & 48.9$\pm$2.4 & 48.1$\pm$2.5 & 65.0$\pm$0.6 & 0.0 \\
& 10 & \textbf{53.1$\pm$1.9} & 45.0$\pm$2.7 & 43.0$\pm$2.9 & 46.5$\pm$2.5 & 45.8$\pm$2.6 & 64.2$\pm$0.6 & $-$0.8 \\
& 5 & \textbf{48.8$\pm$2.1} & 40.0$\pm$2.9 & 38.0$\pm$3.1 & 41.0$\pm$2.8 & 40.5$\pm$2.9 & 62.5$\pm$0.7 & $-$2.5 \\
& 2 & \textbf{39.0$\pm$2.4} & 31.0$\pm$3.2 & 29.0$\pm$3.4 & 32.0$\pm$3.1 & 31.5$\pm$3.2 & 58.0$\pm$1.0 & $-$7.0 \\
& 1 & \textbf{25.2$\pm$2.8} & 18.0$\pm$3.6 & 16.5$\pm$3.8 & 19.0$\pm$3.5 & 18.0$\pm$3.6 & 51.3$\pm$1.4 & $-$13.7 \\
\midrule
\multirow{5}{*}{COCO-SP}
& $\infty$ & \textbf{51.2$\pm$2.2} & 43.9$\pm$3.1 & 41.8$\pm$3.3 & 44.2$\pm$2.9 & 44.5$\pm$3.0 & 62.0$\pm$0.7 & 0.0 \\
& 10 & \textbf{49.5$\pm$2.3} & 42.0$\pm$3.2 & 40.0$\pm$3.4 & 42.5$\pm$3.0 & 42.8$\pm$3.1 & 61.3$\pm$0.7 & $-$0.7 \\
& 5 & \textbf{45.0$\pm$2.5} & 37.0$\pm$3.4 & 35.0$\pm$3.6 & 37.5$\pm$3.3 & 38.0$\pm$3.4 & 59.8$\pm$0.8 & $-$2.2 \\
& 2 & \textbf{35.0$\pm$2.8} & 28.0$\pm$3.7 & 26.0$\pm$3.9 & 29.0$\pm$3.6 & 29.0$\pm$3.7 & 55.1$\pm$1.1 & $-$6.9 \\
& 1 & \textbf{22.0$\pm$3.2} & 15.0$\pm$4.0 & 13.5$\pm$4.2 & 16.0$\pm$3.9 & 16.0$\pm$4.0 & 49.0$\pm$1.5 & $-$13.0 \\
\midrule
\multirow{5}{*}{PCQM-Contact}
& $\infty$ & \textbf{58.5$\pm$1.6} & 50.1$\pm$2.3 & 48.9$\pm$2.5 & 51.9$\pm$2.1 & 51.3$\pm$2.2 & 66.0$\pm$0.6 & 0.0 \\
& 10 & \textbf{56.8$\pm$1.7} & 48.0$\pm$2.4 & 47.0$\pm$2.6 & 49.5$\pm$2.2 & 49.0$\pm$2.3 & 65.3$\pm$0.6 & $-$0.7 \\
& 5 & \textbf{52.0$\pm$1.9} & 43.0$\pm$2.6 & 42.0$\pm$2.8 & 44.0$\pm$2.5 & 43.5$\pm$2.6 & 63.8$\pm$0.7 & $-$2.2 \\
& 2 & \textbf{41.5$\pm$2.2} & 33.0$\pm$2.9 & 32.0$\pm$3.1 & 34.0$\pm$2.8 & 33.5$\pm$2.9 & 59.5$\pm$1.0 & $-$6.5 \\
& 1 & \textbf{27.0$\pm$2.6} & 19.0$\pm$3.3 & 18.0$\pm$3.5 & 20.0$\pm$3.2 & 19.5$\pm$3.3 & 53.1$\pm$1.4 & $-$12.9 \\
\bottomrule
\end{tabular}}
\end{table*}

\FloatBarrier
\subsection{Training Time and GPU Memory}
\label{appendix:training}

\begin{table}[h!]
    \centering
    \caption{Training time and peak GPU memory usage for representative model--dataset pairs.}
    \label{tab:training_gpu}
    \begin{tabular}{llrr}
        \toprule
        Dataset & Model & Est.\ training time & Peak GPU (GB) \\
        \midrule
        Cora & GraphSAGE & $\approx$2 min & $<$2 \\
        Cora & Graphormer & $\approx$7 min & $<$4 \\
        ogbn-arXiv & GraphSAGE & $\approx$25 min & $\approx$6.5 \\
        ogbn-arXiv & Graphormer & $\approx$1.5 hr & $\approx$12 \\
        COCO-SP & GraphSAGE & $\approx$6 hr & $\approx$18 \\
        COCO-SP & Graphormer & $\approx$36 hr & $\approx$46 \\
        \bottomrule
    \end{tabular}
\end{table}

\end{document}